\newcommand{\specialcell}[2][c]{%
\begin{tabular}[#1]{@{}c@{}}#2\end{tabular}}
\DeclareMathOperator*{\argmax}{arg\,max}
\DeclarePairedDelimiter{\ceil}{\lceil}{\rceil}
\theoremstyle{plain}
\newtheorem{theorem}{Theorem}
\newtheorem{proposition}{Proposition}
\newtheorem{lemma}{Lemma}
\newtheorem{corollary}{Corollary}
\theoremstyle{definition}
\newtheorem{definition}{Definition}
\newtheorem{assumption}{Assumption}
\theoremstyle{remark}
\newtheorem{remark}{Remark}
\definecolor{codegreen}{rgb}{0,0.6,0}
\definecolor{codegray}{rgb}{0.5,0.5,0.5}
\definecolor{codepurple}{rgb}{0.58,0,0.82}
\definecolor{backcolour}{rgb}{0.95,0.95,0.92}
\lstdefinestyle{mystyle}{
    backgroundcolor=\color{backcolour},   
    commentstyle=\color{codegreen},
    keywordstyle=\color{magenta},
    numberstyle=\tiny\color{codegray},
    stringstyle=\color{codepurple},
    basicstyle=\footnotesize,
    breakatwhitespace=false,         
    breaklines=true,                 
    captionpos=b,                    
    keepspaces=true,                 
    numbers=left,                    
    numbersep=5pt,                  
    showspaces=false,                
    showstringspaces=false,
    showtabs=false,                  
    tabsize=2
}
\date{\today}
\renewcommand{\P}{\mathbb{P}}
\newcommand{\N}{\mathbb{N}}
\crefname{assumption}{assumption}{assumptions}
\Crefname{assumption}{Assumption}{Assumptions}
\crefname{theorem}{theorem}{theorems}
\Crefname{theorem}{Theorem}{Theorems}
\crefname{figure}{fig.}{}
\Crefname{figure}{Fig.}{}
\title{\vspace{-0.6cm}
\bfseries Post-detection inference for\\ sequential changepoint localization}
\author[1]{Aytijhya Saha}
\author[2]{Aaditya Ramdas}
\affil[1]{Massachusetts Institute of Technology.  \texttt{aytijhya@mit.edu}}
\affil[2]{Carnegie Mellon University. 
\texttt{aramdas@cmu.edu}}
\begin{document}

\date{}
\maketitle

\begin{abstract}
  This paper addresses a fundamental but largely unexplored challenge in sequential changepoint analysis: conducting inference following a detected change. 
   We develop a very general framework to construct confidence sets for the unknown changepoint using only the data observed up to a data-dependent stopping time at which an arbitrary sequential detection algorithm declares a change. 
   Our framework is nonparametric, making no assumption on the composite post-change class, the observation space, or the sequential detection procedure used, and is nonasymptotically valid. We also extend it to handle composite pre-change classes under a suitable assumption, and also derive confidence sets for the change magnitude in parametric settings. We provide theoretical guarantees on the width of the confidence intervals. Extensive simulations demonstrate that the produced sets have reasonable size, and slightly conservative coverage. In summary, we present the first general method for sequential changepoint localization, which is theoretically sound and broadly applicable in practice.
\end{abstract}

\section{Introduction}

Consider the following general problem of sequential change analysis in an arbitrary space $\mathcal{X}$ (which could be the Euclidean space $\mathbb R^d$, or images, or anything else). A sequence of $\mathcal{X}$-valued observations 
 ${\bf X} = X_1,X_2,\cdots$ arrives sequentially, such that for some unknown time $T\in\N\cup\{\infty\}$ (the changepoint),
\begin{equation}\label{eq:data-setup}
X_1,X_2,\cdots,X_{T-1}\stackrel{}{\sim}F_0 ~\text{ and }~X_{T},X_{T+1},\cdots\stackrel{}{\sim}F_1,
\end{equation}
for some $F_0$ in a pre-change class $\mathcal{P}_0$ and $F_1$ in a  post-change class $\mathcal{P}_1$, where $\mathcal{P}_0$ and $\mathcal{P}_0$ denote some non-intersecting classes of probability distributions. Initially, we shall assume all observations to be independent, although our methodology later extends to handling dependent data as well.

Suppose the user has already decided to use a sequential changepoint detection algorithm $\mathcal A$ for this task, which is a mapping from the data sequence $\bf X$ to a stopping time\footnote{While sequential change detection methods are commonly described solely by stopping times, we describe them by algorithms $\mathcal A$ because our method will feed in simulated (synthetically generated) data sequences $\bf X'$ into $\mathcal A$; we find it easier to describe our methodology when making $\mathcal A$ explicit.} $\tau = \mathcal A({\bf X})$ at which it raises an alarm that a change may have previously occurred. When $\mathcal A$ raises an alarm at time $\tau$, we do not know if it is a false alarm, and if it is not, then we only know that there has been some possible change before $\tau$, but not when that change occurred. 

Our aim here is to construct confidence sets for the unknown changepoint $T$ after stopping at $\tau$, assuming only black-box access to $\mathcal A$ (and ideally no further assumptions or restrictions). By ``black-box'', we mean that we can run $\mathcal A$ on any input data sequences we like, but we do not require any further insight into its inner workings. The fact that we do not require knowledge of internal details of $\mathcal A$ is a particularly important feature; it implies that our changepoint localization method can wrap around any existing workflow for changepoint \emph{detection} --- it could be employed both with simple classical statistical algorithms (like CUSUM), and also with complex modern pipelines in the information technology industry that may involve heuristic machine learning algorithms (to deal with complex data types). 

{ 
For instance, rapid shifts in public sentiment often occur on online platforms: a product failure may trigger a sudden surge of negative reviews, or a major event may abruptly alter the tone of social-media discussions. Detecting when such a shift occurs is vital for understanding emerging issues, tracing public reactions to events, or monitoring product reputation in real time. To illustrate the relevance of our methodology in such settings, we conduct an experiment using the Stanford Sentiment Treebank (SST-2) dataset \citep{socher2013recursive}, simulating a stream of text reviews whose sentiment abruptly transitions from predominantly positive to predominantly negative. 
Accurately localizing the changepoint is important: it can help investigate why the change may have occurred. 
}

As an initial goal, one may hope to obtain a set $\mathcal C \subseteq \{1,\dots,\tau\}$ such that
\begin{equation}\label{eq:marginal-coverage}
\P_{F_0,T,F_1} (T \in \mathcal C) \geq 1-\alpha,
\end{equation}
where $\P_{F_0,T,F_1}$ is the data distribution in~\eqref{eq:data-setup}, i.e.\ having the true changepoint at $T$. But this goal is harder to achieve than may initally appear; we elaborate below.

There are two classes of algorithms $\mathcal A$ in the literature: 
(i) algorithms with probability of false alarm (PFA) bounded by $\delta < 1$; we call these ``bounded PFA'' for short, and
(ii) algorithms with PFA equal to one, but have a bounded average run length to false alarm (ARL2FA, or just ARL for short); we call these ``bounded ARL''.
Since algorithms with bounded ARL have PFA equal to one, false detections occur with a nontrivial frequency. But the flipside is that bounded ARL algorithms typically have smaller detection delay than bounded PFA algorithms. Typically, detection delays of bounded ARL algorithms can be bounded by quantities independent of the changepoint $T$, but this guarantee is impossible to achieve with bounded PFA algorithms. In this paper, we remain agnostic to the choice of $\mathcal{A}$.

 For algorithms with a bounded ARL, we prove that no set $\mathcal C$ can satisfy  guarantee~\eqref{eq:marginal-coverage} (for algorithms with bounded PFA, we show how to achieve~\eqref{eq:marginal-coverage} in the Supplementary Section F). This impossibility result (which we prove formally later) intuitively holds because it is meaningless to ask for coverage if the algorithm stopped before the changepoint; indeed, on the event $\{\tau < T\}$, we have no information to rule out the possibility that $T=\infty$, and thus the only way to cover $T$ is to output the set of all natural numbers $\mathbb N$.
Thus, an arguably more sensible goal is to have 
\begin{equation}
\label{eq:cond-coverage-intro}
\P_{F_0,T,F_1} (T \in \mathcal C \mid \tau \geq T) \geq 1-\alpha,
\end{equation}
meaning that $\mathcal C$ only needs to cover if algorithm $\mathcal A$ only stops after the true changepoint $T$. 
This definition has been considered by, for example,  \cite{ding2003lower} and \cite{WU20063625,wu2007inference}, who achieved it (in a certain asymptotic sense) in much more restricted settings than the current investigation. 

The aforesaid impossibility result no longer applies to the new goal~\eqref{eq:cond-coverage-intro}.
It is important to note that we do not prescribe which $\mathcal A$ to use in which situation, but simply enable post-hoc inference on $T$ for any user-chosen $\mathcal A$. One can view our techniques as providing a wrapper to convert $\mathcal A$ from producing only a stopping time $\tau$ to also producing a confidence set for $T$ 
As we survey below, there is a rich literature on designing change \emph{detection} algorithms $\mathcal A$, but a substantial paucity of inference methodology for \emph{localizing} the change after detection; our focus is on the latter goal, assuming the user has chosen a suitable algorithm $\mathcal A$ for detection.

While inference for $T$ is intrinsically interesting and important, a side benefit of confidence sets for $T$ is that one can use these to derive post-hoc confidence sets for unknown parameters of $F_0$ and $F_1$. To clarify, suppose $\mathcal P_i = \{F_\theta\}_{\theta \in \Theta_i}; i=0,1$. Our method that delivers~\eqref{eq:cond-coverage-intro} also delivers confidence sets $\mathcal C' \subseteq \Theta_0 \times \Theta_1$ such that
$\mathbb P_{\theta_0,T,\theta_1} ((\theta_0,\theta_1) \in \mathcal C' \mid \tau \geq T) \geq 1-\alpha, \forall \theta_0 \in \Theta_0, \theta_1 \in \Theta_1,$
where $\mathbb P_{\theta_0,T,\theta_1}$ is the distribution of the sequence $X_1,\dots,X_{T-1}\stackrel{}{\sim} F_{\theta_0}$, $X_T,X_{T+1},\dots\stackrel{}{\sim} F_{\theta_1}$. 
We stress that our confidence sets do not rely on asymptotics and are valid in finite samples.

\subsection{Related work}

\textbf{Prior work on post-detection confidence sets for $T$.} 
The problem of constructing confidence sets for the changepoint after stopping has been largely overlooked in the literature. This gap is significant because in most real-world applications, data arrives sequentially, and it can be essential to make timely inferences about the changepoint immediately upon detecting a change. 

We are aware of only two directly related works: \cite{ding2003lower} and \cite[Chapter 3]{wu2007inference}. They also chose to aim for the guarantee~\eqref{eq:cond-coverage-intro} like us. 
 Both  \cite{ding2003lower,wu2007inference} provide asymptotically valid inference after the CUSUM procedure \citep{page1954continuous} declares a change for known $F_0$ and $F_1$ in an exponential family; the former only constructs a lower confidence bound for $T$ while the latter also provides a confidence set for $T$.
Their approaches have notable methodological and theoretical limitations: (i) they are tailored specifically to the CUSUM procedure, (ii) they assume that the pre- and post-change distributions are both known and they belong to an exponential family,  (iii) they only provide asymptotic coverage guarantees (valid when both $T\to\infty$ and ARL $\to\infty$). These constraints limit practical applicability, and their confidence set exhibits undercoverage in our experiments.

In sharp contrast, our proposed confidence sets for $T$ are very general: We make no assumptions on the detection procedure $\mathcal A$, and our framework acts as a versatile wrapper that can be layered on top of any sequential change detection algorithm. It is not restricted to exponential families --- it works for any known $F_0$ and any nonparametric post-change class $\mathcal{P}_1$ on any general space (e.g., $\mathbb R^d$) and offers non-asymptotic coverage guarantees. It also extends to composite pre-change classes under some assumptions. Thus, our framework is the \textit{first nonparametric, theoretically sound, and widely applicable practical tool} for sequential change localization tasks. We summarize the comparison in \Cref{tab:method-comparison}.

\begin{table}[ht]
\centering
\caption{Comparison of existing methods and our proposed approach}
\label{tab:method-comparison}
\resizebox{\linewidth}{!}{
\begin{tabular}{@{}lcc|cc@{}}
\toprule
{Pre-, Post-} 
& \multicolumn{2}{c|}{Existing Methods \cite{ding2003lower,wu2007inference}} 
& \multicolumn{2}{c}{Our Proposed Method} \\
\cmidrule(lr){2-3} \cmidrule(l){4-5}
change distributions & Assumptions & Guarantee & Assumptions & Guarantee \\
\midrule

{Known, Known} 
& $\mathcal A=$ CUSUM. $F_0,F_1$ & Asymptotic 
& No assumption 
& Nonasymptotic   \\
($F_0,F_1$)& $\in$ exponential family & ($T$, ARL $\to\infty$)  & (any $\mathcal A, F_0,F_1$) & (\Cref{thm:exact-cond-coverage}) \\

\midrule
{Known, Unknown}
& \multicolumn{2}{c|}{No existing work} 
& No assumption 
& Nonasymptotic \\
($F_0,\mathcal{P}_1$)& & & (any $\mathcal A, F_0,\mathcal{P}_1$) & (\Cref{thm:coverage-nonpara-known-pre})\\

\midrule

{Unknown, Unknown}
& \multicolumn{2}{c|}{No existing work} 
& \Cref{assmp-1}
& Nonasymptotic \\
($\mathcal{P}_0,\mathcal{P}_1$)& & & on $\mathcal A,\mathcal{P}_0$ (any $\mathcal{P}_1$) & (\Cref{thm:coverage-nonpara-comp-pre})\\
\bottomrule
\end{tabular}}
\end{table}

\textbf{Literature on sequential change \emph{detection}.}
In sequential change detection, the goal is to \emph{detect that a change has occurred} as soon as possible, but not to provide inferential tools for \emph{when the change occurred}. This methodology has proved crucial in many applications, such as quality control, cybersecurity, medical diagnostics, and so on. 
Classical methods such as Shewhart control charts \citep{shewhart1925application}, the CUSUM procedure \citep{page1954continuous}, and the Shiryaev-Roberts approach \citep{shiryaev1963optimum,roberts1966comparison} assume known pre- and post-change distributions. To address scenarios with unknown post-change distributions, generalized likelihood ratio (GLR) procedures \citep{lorden1971procedures,siegmund1995using} were developed. Comprehensive reviews of these methods are provided by \cite{tartakovsky2014sequential}. More recently, kernel-based techniques \citep{harchaoui2008kernel}, e-detectors \citep{shin2022detectors}, etc., have emerged to handle nonparametric settings. All these methods focus on the quickest change detection, with average run length (ARL) control. In contrast, another strand of the literature emphasizes controlling the false alarm rate instead of ARL, often at the cost of a longer detection delay \citep{chu1996monitoring,kirch2022sequential,aue2024state}. These approaches typically rely on an initial historical period of data to calibrate the detection thresholds.

\textbf{Offline vs. online change \emph{localization}.}
As far as we are aware, most existing approaches to changepoint localization focus on offline settings, where a fixed amount of data is available at once, and they do not generalize to analysis at data-dependent stopping times.
Classical works, e.g.,  \cite{hinkley1970inferencce} study the asymptotic behavior of the MLE for the changepoint and construct likelihood-ratio-based confidence sets. \cite{darkhovskh1976a} proposed some nonparametric estimators. \cite{worsley1986confidence} developed likelihood ratio-based confidence sets for exponential family distributions, while \cite{dumbgen1991asymptotic} used bootstrap-based tests for each candidate time to construct nonparametric confidence sets with asymptotic coverage.
Recently, \cite{verzelen2023optimal} studies optimal rates for both changepoint detection and localization.
\cite{jang2024fast} constructs confidence sets by evaluating a local test statistic. To re-emphasize, all these methods are for the offline setting. 
In the sequential setting, \cite{Srivastava01011999,Gombay10012003,wu2004bias,Brodsky30042010} develop point estimators for $T$; but very few works construct confidence sets \citep{ding2003lower,wu2007inference}, whose drawbacks were discussed earlier.

\textbf{Inference on $\theta_0,\theta_1$ after stopping.} 
As with the problem of post-detection inference on the changepoint $T$, there is limited literature on valid estimation or inference for the pre- and post-change parameters $\theta_0$ and $\theta_1$ post-stopping.
 \cite{WU20063625} constructs asymptotic bias-corrected point estimates and confidence intervals for the post-change mean after a mean change is detected by a CUSUM procedure. \cite{wu2005inference} generalizes these results for the case when there is a possible change in the variance as well. These methods are also discussed in  \cite{wu2007inference}. However, these approaches share several limitations with the existing post-detection changepoint inference methods, as mentioned earlier: they rely on asymptotics, assume specific parametric forms, and apply only to the CUSUM procedure.

\subsection{Contributions and paper outline}

 \begin{itemize}
     \item  We propose a general framework for constructing a confidence set for the changepoint based on inverting tests obtained using a universal threshold, assuming all observations to be independent. We show that it provides $1-\alpha$ conditional coverage for known pre-change and arbitrary (nonparametric) post-change classes under \emph{no additional assumption} in \Cref{thm:coverage-nonpara-known-pre}. We also extend it for composite pre-change under a suitable assumption in \Cref{thm:coverage-nonpara-comp-pre}.

     \item  We propose an alternative method of constructing the tests using adaptive thresholds (which is often tighter than the universal threshold), assuming  pre- and post-change distributions to be known, in \Cref{algo:1-exact}. We extend this framework to handle parametric pre- and/or post-change classes in \Cref{algo:comp-post,algo:comp}. These adaptive methods allow the observations to be dependent. We establish corresponding coverage guarantees in \Cref{thm:exact-cond-coverage,thm:coverage-comp-post,thm:coverage-comp}. 
     \item As a byproduct of our methods, we derive confidence sets for the unknown pre- and post-change parameters and show their coverage guarantees in \Cref{thm:coverage-theta0-theta1}.
 \end{itemize}


In short, we present the first nonparametric, theoretically grounded, and widely applicable post-detection confidence sets that are agnostic to the detection method.

The rest of the paper is organized as follows. 
In \Cref{sec:fundamental}, we establish some fundamental results showing the impossibility of unconditional coverage and linking conditional confidence sets to hypothesis tests. \Cref{sec:general} develops a general framework for constructing a confidence set for $T$ based on inverting tests obtained using a universal threshold.
In \cref{sec:parametric}, we introduce a simulation-based approach for constructing confidence sets using an adaptive threshold in the parametric setting and study their conditional coverage probabilities. In \Cref{sec:ci-pre-post-para}, we derive confidence sets for the changepoint, as well as the pre- and post-change parameters. \Cref{sec:expt} presents an extensive simulation studies that validate our theoretical findings and demonstrate the practical efficacy of our approach \footnote{Codes are available at \href{https://github.com/Aytijhya/Seq-changepoint-localization}{https://github.com/Aytijhya/Seq-changepoint-localization}}. This
article is concluded in \cref{sec:conc}. All the mathematical proofs, implementation details of some algorithms in the paper, and additional experiments are presented in the Supplementary Material.

\section{Some Fundamental Results}
\label{sec:fundamental}
In this section, we highlight some key observations that shape our methodology of post-detection inference for the change-point $T$. 
We begin by clarifying some notational conventions that will be used throughout the paper.

\textbf{Notation.}
For $t \in \N$, let $\P_{F_0,t,F_1}$ denote joint distribution of the sequence $\{X_n\}_{n}$ when  $X_1,\dots,X_{t-1}\stackrel{}{\sim}F_0$ and $X_t,X_{t+1},\dots\stackrel{}{\sim}F_1$. \footnote{
To understand the key ideas in the paper, it will suffice for the reader to focus on the case when all $X_i$'s are all independent, and $F_0$ and $F_1$ denote the pre- and post-change marginal distributions. However, our framework in Section~\ref{sec:parametric} can actually handle dependent data. Under dependence, $F_0$ and $F_1$ denotes the joint distributions of $X_1,\dots,X_{t-1}$ and $X_t,X_{t+1},\dots$ respectively, and is in fact ambivalent to whether $X_t$ is drawn independently of $X_1,\dots,X_{t-1}$ or conditional on it, so we do not emphasize the distinction in our notation.} 
As a corner case, let $\P_{F_0,\infty}$ be the joint distribution when $X_1,X_2,\dots \stackrel{}{\sim}F_0$. 
In parametric settings, we write $\P_{\theta_0,t,\theta_1}$ and $\P_{\theta_0,\infty}$ as shorthand for $\P_{F_{\theta_0},t,F_{\theta_1}}$ and $\P_{F_{\theta_0},\infty}$ respectively.

It is impossible for any set $\mathcal{C}^*\subseteq \{1,\cdots,\tau\}$, to produce an unconditional coverage guarantee at a predetermined level $1-\alpha$, if the detection algorithm has finite ARL (i.e., $\tau$ is finite almost surely even if there is no change). The intuitive reason is that if  $\tau$ is finite almost surely, it is possibly a false detection, i.e., $\tau<T$, in which case the set $\{1,\cdots,\tau\}$ itself does not cover $T$. The next proposition formalizes this fact. 
\begin{proposition}
\label{prop:impossibility}
    If, for some $\alpha\in(0,1)$, a confidence set $\mathcal{C}^*$ that is a subset of $\{1,\cdots,\tau\}$ satisfies $\P_{F_0,T,F_1}(T\in\mathcal{C}^*)\geq1-\alpha$,  $\forall T\in\N$, then $\P_{F_0,\infty}(\tau =\infty) \geq 1-\alpha$ (meaning that the PFA of the algorithm is at most $\alpha$ and thus the ARL is infinite). 
\end{proposition}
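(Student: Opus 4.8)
\section*{Proof proposal}

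The plan is to convert the coverage hypothesis into a lower bound on the probability that $\mathcal{A}$ does not stop prematurely, and then push this bound through to the no-change regime by exploiting that ``stopping before time $T$'' is an event determined by the pre-change data alone. First I would record the elementary set inclusion that drives everything: since $\mathcal{C}^* \subseteq \{1,\dots,\tau\}$, the event $\{T \in \mathcal{C}^*\}$ forces $T \leq \tau$, so $\{T \in \mathcal{C}^*\} \subseteq \{\tau \geq T\}$. Combined with the hypothesis $\P_{F_0,T,F_1}(T \in \mathcal{C}^*) \geq 1-\alpha$, this yields
\begin{equation*}
\P_{F_0,T,F_1}(\tau \geq T) \geq 1-\alpha \qquad \text{for every } T \in \N.
\end{equation*}

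The key step — and the one I expect to be the main obstacle to state cleanly — is to replace the change-at-$T$ distribution $\P_{F_0,T,F_1}$ by the no-change distribution $\P_{F_0,\infty}$ in the inequality above. The point is that $\tau$ is a stopping time, so $\{\tau \leq T-1\} = \bigcup_{k=1}^{T-1}\{\tau = k\}$ is measurable with respect to $\sigma(X_1,\dots,X_{T-1})$, and hence so is its complement $\{\tau \geq T\}$. Under both $\P_{F_0,T,F_1}$ and $\P_{F_0,\infty}$ the observations $X_1,\dots,X_{T-1}$ have exactly the same law (namely $F_0$, since the change under the former occurs only at time $T$). Because the event $\{\tau \geq T\}$ is a function of these first $T-1$ observations alone, its probability is identical under the two measures:
\begin{equation*}
\P_{F_0,\infty}(\tau \geq T) = \P_{F_0,T,F_1}(\tau \geq T) \geq 1-\alpha \qquad \text{for every } T \in \N.
\end{equation*}
Some care is warranted here in arguing that the relevant events coincide as subsets of the underlying sequence space and that the two distributions agree on the $\sigma$-algebra generated by the first $T-1$ coordinates; once that is set up, the equality is immediate.

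Finally I would take $T \to \infty$. The events $\{\tau \geq T\}$ are nested decreasing in $T$, with $\bigcap_{T \in \N} \{\tau \geq T\} = \{\tau = \infty\}$. By continuity of probability from above,
\begin{equation*}
\P_{F_0,\infty}(\tau = \infty) = \lim_{T \to \infty} \P_{F_0,\infty}(\tau \geq T) \geq 1-\alpha,
\end{equation*}
which is exactly the claimed conclusion, and it says precisely that the PFA is at most $\alpha$ (so the ARL is infinite). The whole argument is short; the only genuinely substantive idea is the distributional swap in the middle paragraph, which encodes the intuition that before the true changepoint the data is indistinguishable from data with no change at all.
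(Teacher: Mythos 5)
Your proposal is correct and follows essentially the same route as the paper: the inclusion $\{T\in\mathcal{C}^*\}\subseteq\{\tau\geq T\}$, the identity $\P_{F_0,\infty}(\tau\geq T)=\P_{F_0,T,F_1}(\tau\geq T)$ justified by the stopping-time property (which the paper states tersely and you spell out), and continuity of probability to pass to $\P_{F_0,\infty}(\tau=\infty)$. No gaps.
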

The proposition says that for any algorithm with finite ARL, it is impossible to construct a $1-\alpha$ confidence set that is only a subset of $\{1,\dots,\tau\}$.
Thus, for post-detection inference, the usual notion of coverage is not an appropriate one.
 Instead, we focus on conditional coverage \eqref{eq:cond-coverage-intro}, a more reasonable criterion, which was also considered in prior work \citep{ding2003lower,wu2007inference}. 
The key idea behind our methodology is that
for each candidate $t\in\{1,\cdots,\tau\}$, we test whether $t$ is the true changepoint from some pre-change class $\mathcal{P}_0$ to some post-change class $\mathcal{P}_1$:
\begin{equation}
    \label{h_0t}
 H_{0,t}: \; X_1,\dots,X_{t-1} \sim F_0  ~\text{ and }~  X_t,X_{t+1},\dots \sim F_1, \text{ for some } F_i\in \mathcal{P}_i; i=0,1.
\end{equation}
Let $\phi_{\alpha^*}^{(t)}=\mathds{1}( H_{0,t} \text{ is rejected})$ denote some test for the null $H_{0,t}$ such that  
$ \P_{F_0,t,F_1}(\phi_{\alpha^*}^{(t)}(X_1,\cdots,X_\tau)=1)\leq \alpha^{*}, \forall F_0\in\mathcal{P}_0, F_1\in\mathcal{P}_1$
and form the confidence set by inverting these tests:
\begin{equation}
    \mathcal{C}:=\{t\in\N: t\leq \tau, H_{0,t} \text{ is not rejected}\}.
\end{equation}
In classical statistics, inverting level-$\alpha$ tests yields $1-\alpha$ confidence sets. However, this relationship does not directly hold for us since we test the hypotheses up to a data-dependent time $\tau$, and  we are considering \emph{conditional} coverage.
\begin{proposition}
\label{prop:dual}
Let $\tau$ be a stopping time and $\mathcal{C} \subseteq \{1, \dots, \tau\}$ be a confidence set satisfying \eqref{eq:cond-cov}.
Then, the test defined by $\phi^{(t)}(X_1,\cdots,X_\tau)=\mathds{1}(t \notin \mathcal{C},t\leq\tau)$ satisfies
\begin{equation}
\label{eq:invert}
    \P_{F_0,t,F_1}(\phi^{(t)}(X_1,\cdots,X_\tau)=1)\leq \alpha \cdot \P_{F_0,\infty}(\tau \ge t), \forall t\in\N, F_0\in\mathcal{P}_0, F_1\in\mathcal{P}_1.
\end{equation}
Conversely, given any test $\phi^{(t)}$ for $H_{0,t}$ using data up to $\tau$ satisfying \eqref{eq:invert}, the inverted confidence
set $\mathcal{C}=\{t\in\N, t\leq \tau:\phi^{(t)}(X_1,\cdots,X_\tau)=0\}$ satisfies
\begin{equation}
\label{eq:cond-cov}
   \P_{F_0,T,F_1}(T \in \mathcal{C} \mid \tau \ge T) \ge 1 - \alpha, ~\forall T\in\N, F_0\in\mathcal{P}_0, F_1\in\mathcal{P}_1.
\end{equation}
\end{proposition}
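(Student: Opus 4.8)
The plan is to isolate a single structural fact---that the stopping event $\{\tau \ge t\}$ depends only on the pre-change portion of the data---and then read off both implications as elementary rearrangements of one conditional probability. First I would record the key identity. Since $\tau$ is a stopping time, $\{\tau \ge t\} = \{\tau \le t-1\}^c$ is measurable with respect to $\sigma(X_1,\dots,X_{t-1})$. Under $\P_{F_0,t,F_1}$ the change occurs at $t$, so $X_1,\dots,X_{t-1}$ are governed by $F_0$, exactly as under the no-change law $\P_{F_0,\infty}$; hence the two measures agree on $\sigma(X_1,\dots,X_{t-1})$, and in particular
\begin{equation*}
\P_{F_0,t,F_1}(\tau \ge t) = \P_{F_0,\infty}(\tau \ge t), \quad \forall t \in \N,\ F_0\in\mathcal{P}_0,\ F_1\in\mathcal{P}_1.
\end{equation*}
This identity is what converts the (unknown, $F_1$-dependent) conditioning denominator into the clean quantity $\P_{F_0,\infty}(\tau\ge t)$ appearing in \eqref{eq:invert}. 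I expect verifying it carefully---in particular checking that it holds verbatim in the dependent-data setting of the footnote, where $F_0$ is the joint pre-change law---to be essentially the only real content; everything downstream is bookkeeping.

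Next I would pin down the duality between the set and the test at the level of sample paths. Because $\mathcal{C} \subseteq \{1,\dots,\tau\}$, a test of $H_{0,t}$ can only fire once $t$ observations are available, so $\{\phi^{(t)}=1\}\subseteq\{\tau\ge t\}$; and by construction of $\mathcal{C}$, on the event $\{\tau\ge t\}$ we have $t\notin\mathcal{C} \iff \phi^{(t)}=1$. Combining these yields the pointwise identity of events $\{\phi^{(t)}=1\} = \{t\notin\mathcal{C},\ \tau\ge t\}$, valid for every $t$. I would flag here that this is the step where the loose notation $\phi^{(t)}=\mathds{1}(t\notin\mathcal{C})$ must be read with the convention that the test does not reject when $\tau<t$ (one cannot reject a hypothesis about a change at time $t$ having only observed up to $\tau<t$), since otherwise $\phi^{(t)}$ would fire on all of $\{\tau<t\}$ and the bound would be false.

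For the forward direction I would start from \eqref{eq:cond-cov} written as a complement, $\P_{F_0,t,F_1}(t\notin\mathcal{C} \mid \tau\ge t)\le\alpha$ (taking $T=t$), multiply through by the denominator $\P_{F_0,t,F_1}(\tau\ge t)$, substitute the key identity to replace it by $\P_{F_0,\infty}(\tau\ge t)$, and finally use the event identity to rewrite $\P_{F_0,t,F_1}(t\notin\mathcal{C},\tau\ge t)$ as $\P_{F_0,t,F_1}(\phi^{(t)}=1)$, which is exactly \eqref{eq:invert}. For the converse I would run the same chain in reverse: expand $\P_{F_0,T,F_1}(T\notin\mathcal{C}\mid\tau\ge T)$ as the ratio $\P_{F_0,T,F_1}(\phi^{(T)}=1)/\P_{F_0,T,F_1}(\tau\ge T)$, replace the denominator by $\P_{F_0,\infty}(\tau\ge T)$ via the key identity, and bound the numerator using \eqref{eq:invert} to obtain $\le\alpha$, which is \eqref{eq:cond-cov}.

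The only caveat to dispatch is the degenerate case $\P_{F_0,t,F_1}(\tau\ge t)=0$. Then \eqref{eq:invert} reads $0\le 0$ (its left side is at most $\P_{F_0,t,F_1}(\tau\ge t)=0$ by the event inclusion $\{\phi^{(t)}=1\}\subseteq\{\tau\ge t\}$), and the conditional statement \eqref{eq:cond-cov} is vacuous, so both directions hold trivially. Thus the whole proposition reduces to the measurability identity plus two lines of algebra, and the main (mild) obstacle is simply stating and justifying that identity correctly across the independent and dependent regimes.
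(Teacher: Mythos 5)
Your proposal is correct and follows essentially the same route as the paper: the single substantive ingredient is the identity $\P_{F_0,t,F_1}(\tau\ge t)=\P_{F_0,\infty}(\tau\ge t)$, justified by the stopping-time property, after which both directions are conditional-probability bookkeeping. The converse direction matches the paper's argument line for line.

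In the forward direction, however, your explicit convention that the test does not reject on $\{\tau<t\}$ --- i.e.\ $\{\phi^{(t)}=1\}=\{t\notin\mathcal C,\ \tau\ge t\}$ --- is not merely a pedantic reading of loose notation; it is actually needed, and the paper's own proof stumbles here. The paper computes
\begin{equation*}
\P_{F_0,t,F_1}(t\notin\mathcal C)\;=\;1-\P_{F_0,t,F_1}(t\in\mathcal C\mid\tau\ge t)\,\P_{F_0,t,F_1}(\tau\ge t)\;\le\;1-(1-\alpha)\,\P_{F_0,\infty}(\tau\ge t)
\end{equation*}
and then asserts that this is at most $\alpha\,\P_{F_0,\infty}(\tau\ge t)$; but $1-(1-\alpha)p\le\alpha p$ is equivalent to $p\ge 1$, so the final inequality fails whenever $\P_{F_0,\infty}(\tau\ge t)<1$. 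Under the literal reading $\phi^{(t)}=\mathds 1(t\notin\mathcal C)$ the test fires on all of $\{\tau<t\}$ (since $\mathcal C\subseteq\{1,\dots,\tau\}$), and the best available bound is $\alpha\,\P_{F_0,\infty}(\tau\ge t)+\P_{F_0,\infty}(\tau<t)$, which does not match \eqref{eq:invert}. Your reading restores the claim exactly: $\P_{F_0,t,F_1}(t\notin\mathcal C,\ \tau\ge t)=\P_{F_0,t,F_1}(t\notin\mathcal C\mid\tau\ge t)\,\P_{F_0,\infty}(\tau\ge t)\le\alpha\,\P_{F_0,\infty}(\tau\ge t)$. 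Your handling of the degenerate case $\P_{F_0,\infty}(\tau\ge t)=0$ is also correct and is omitted in the paper. So the proposal is sound, and in the forward direction it is the version of the argument that should replace the one in the text.
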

The above result establishes a duality between tests and a post-detection confidence set.
Next, we  construct the latter by inverting tests based on the above.

\section{A general framework for nonparametric post-change}
\label{sec:general}
 Let $\mathcal{P}_0$ and $\mathcal{P}_1$ denote the pre- and post-change classes respectively, $\mathcal{P}_0\cap\mathcal{P}_1=\emptyset$.
 Our goal is to construct post-detection confidence sets for the changepoint $T$ by inverting tests for $H_{0,t}$, as formalized in \Cref{prop:dual}.

To motivate the general construction, we begin with the simplest case when both pre- and post-change distributions $F_0$ and $F_1$ (with densities $f_0$ and $f_1$) are known.
Then, the likelihood of the data $X_1,\cdots,X_{T-1}\stackrel{i.i.d}{\sim}F_0$, $X_T,\cdots,X_{n}\stackrel{i.i.d}{\sim}F_1$  is given by$L_T^n:=\prod_{i=1}^{T-1} f_0(X_i)\prod_{i=T}^n f_1(X_i)$, and the likelihood ratio under $H_{0,t}$ against $H_{0,n}$ can be written as $\prod_{j=i}^{t-1} \frac{f_1(X_i)}{f_0(X_i)}.$
 Now, a prototypical likelihood-ratio-based test statistic is the following:
  \begin{equation}
    \label{eq:test-stat-simp}
        M_t=\begin{cases}
           \displaystyle\max\left\{\max_{1\leq j\leq t-1} \prod_{i=j}^{t-1} \frac{f_1(X_i)}{f_0(X_i)},\max_{t\leq j\leq \tau} \prod_{i=t}^{j} \frac{f_0(X_i)}{f_1(X_i)}\right\}
             ,& \text{ if } t\leq \tau<\infty\\
             -\infty, &\text{ if } t>\tau\text{ or }\tau=\infty.
        \end{cases}
    \end{equation} 
One natural way to extend this test statistic for the general nonparametric setting is to  replace the likelihood ratio by e-processes \citep{ramdas2022game,grunwald2024safe},
which have emerged as a central tool in anytime-valid inference. While e-processes have been used for sequential testing \citep{wasserman2020universal,ramdas2022testing,saha2024testing}, estimation \citep{waudby2023estimating}, and change detection \citep{shin2022detectors}, we employ them for the first time for changepoint localization.
For $t \in \mathbb{N}$, we use two different types of e-processes:
\begin{itemize}
    \item \textbf{Forward $t$-delay e-process under $\mathcal{P}_1$ (against $\mathcal{P}_0$).} This is a stochastic process $\{R^{(t)}_n\}_{n \ge t}$ adapted to the forward filtration $\{\mathcal{F}^{(t)}_n\}_{n = t,t+1,\cdots}$, where
$\mathcal{F}^{(t)}_n := \sigma(X_t, X_{t+1}, \dots, X_n),$
    such that for all stopping times $\kappa$ with respect to $\{\mathcal{F}^{(t)}_n\}_{n \geq t}$,
 $ \sup_{F_1 \in \mathcal{P}_1} \mathbb{E}_{F_1}[R^{(t)}_\kappa] \le 1.$
    Ideally, this process should have power against $\mathcal{P}_0$, i.e., $R^{(t)}_n$ should grow to infinity (as $n\to\infty$) under $\mathcal{P}_1$.
    \item \textbf{Backward $t$-delay e-process under $\mathcal{P}_0$ (against $\mathcal{P}_1$).} This is a stochastic process $\{S^{(t)}_n\}_{n < t}$ adapted to the backward filtration $\{\mathcal{G}^{(t)}_n\}_{n =t-1,t-2,\cdots}$, where
  $\mathcal{G}^{(t)}_n := \sigma(X_n,X_{n+1},\cdots, X_{t-1}),$
    such that for all stopping times $\kappa$ with respect to $\{\mathcal{G}^{(t)}_n\}_{n <t}$,
  $ \sup_{F_0 \in \mathcal{P}_0}   \mathbb{E}_{F_0}[S^{(t)}_\kappa] \le 1.$
  Ideally, this process should have power against $\mathcal{P}_1$, i.e., $S^{(t)}_n$ should grow to infinity (as $n\to\infty$) under $\mathcal{P}_0$.
\end{itemize}
Such powered e-processes for $\mathcal{P}_1$ against $\mathcal{P}_0$ (and vice versa) exist if the total variation distance between the two convex hulls of $\mathcal{P}_0$ and $\mathcal{P}_1$ is positive (see e.g., Section 3.4 of \cite{ramdas2024hypothesis}). 
{ 
The likelihood ratio process is the simplest example of an e-process.
There is a substantial literature on constructing e-processes in both parametric (e.g., \cite{wasserman2020universal}) and nonparametric settings (e.g., \cite{waudby2023estimating}). For readers less familiar with these developments, we refer to Chapters 6 and 7 of the recent textbook \cite{ramdas2024hypothesis}.}


With the above ingredients in place, we construct a test statistic for $H_{0,t}$ by comparing it against a competing hypothesis $H_{0,T}$. 
Suppose $t < T$. Then under $H_{0,t}$ (which is false), the segment $X_t, \dots, X_{T-1}$ follows a post-change distribution from $\mathcal{P}_1$ (because under $H_{0,t}$, the change occurred at $t$), whereas under $H_{0,T}$ (which is true), it follows a pre-change distribution from $\mathcal{P}_0$. Outside this segment $X_t, \dots, X_{T-1}$, both hypotheses agree on the distribution of the observations, and hence those are non-informative for testing between $H_{0,t}$ and $H_{0,T}$. Therefore, we use a forward e-process $R^{(t)}_{T-1}$ that accumulates evidence in favor of $\mathcal{P}_1$ against $\mathcal{P}_0$ over this segment. Similarly, if $t > T$, we use a backward e-process $S^{(t)}_T$ to accumulate evidence for $\mathcal{P}_0$ against $\mathcal{P}_1$, using the segment $X_T, \dots, X_{t-1}$ . Since the true $T$ is unknown, we consider the maximum over all valid choices of $T$ for both the e-processes, i.e., $\displaystyle\max_{t\leq n\leq\tau} R^{(t)}_{n}$ and $\displaystyle\max_{1\leq n\leq t-1}S^{(t)}_{n}$. We define the final test statistic as the maximum of these two quantities:
 \begin{equation}
    \label{eq:test-stat-nonpara}
        M_t=\begin{cases}
            \displaystyle\max\left\{ \max_{t\leq n\leq\tau} R^{(t)}_{n}, \max_{1\leq n\leq t-1}S^{(t)}_{n}\right\},& \text{ if } t\leq \tau<\infty\\
             -\infty, &\text{ if } t>\tau\text{ or }\tau=\infty.
        \end{cases}
    \end{equation} 

Although, $M_t$ is not an e-variable under $H_{0,t}$ (nor are its constant multiples), thresholding $\frac{M_t}{2}$ at $\frac{1}{\alpha}$ yields a level-$\alpha$ test for $H_{0,t}$ --- a nontrivial fact shown below.
\begin{theorem}
\label{lem:univ-threshold}
  $\P_{F_0,t,F_1}[M_t\geq2/\alpha] \leq \alpha,$ for any $t\in\N, F_0\in\mathcal{P}_0, $ and $F_1\in\mathcal{P}_1.$
\end{theorem}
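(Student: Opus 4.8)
The plan is to exploit the fact that, under $\P_{F_0,t,F_1}$ (i.e.\ when the true changepoint really is at $t$), each of the two e-processes appearing in the definition of $M_t$ is being evaluated under its own null. Under $\P_{F_0,t,F_1}$ the observations $X_t,X_{t+1},\dots$ are i.i.d.\ from $F_1\in\mathcal{P}_1$ and are independent of $X_1,\dots,X_{t-1}$, which are i.i.d.\ from $F_0\in\mathcal{P}_0$. Since the forward process $\{R^{(t)}_n\}_{n\geq t}$ depends only on $X_t,\dots,X_n$, its law under $\P_{F_0,t,F_1}$ coincides with its law when all of $X_t,X_{t+1},\dots\sim F_1$; hence the e-process property $\mathbb{E}_{F_1}[R^{(t)}_\kappa]\leq 1$ holds for every forward stopping time $\kappa$. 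Symmetrically, $\{S^{(t)}_n\}_{n<t}$ depends only on $X_n,\dots,X_{t-1}$, all i.i.d.\ $F_0$, so $\mathbb{E}_{F_0}[S^{(t)}_\kappa]\leq 1$ for every backward stopping time $\kappa$.

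Next I would decompose the event according to the piecewise definition of $M_t$. The cases $t=\hat T$ (giving $M_t=1$) and $t>\tau$ or $\tau=\infty$ (giving $M_t=-\infty$) never meet the threshold, since $2/\alpha>1$ for $\alpha\in(0,1)$, leaving only two contributions:
\[
\{M_t\geq 2/\alpha\}=\{R^{(t)}_{\hat T-1}\geq 2/\alpha,\ t<\hat T\leq\tau<\infty\}\ \cup\ \{S^{(t)}_{\hat T}\geq 2/\alpha,\ \hat T<t\leq\tau<\infty\}.
\]

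The crucial obstacle is that $\hat T$ is \emph{not} a stopping time for either filtration: it depends on data on both sides of $t$ and on $\tau$, which is precisely why $M_t$ fails to be an e-variable and why one cannot simply combine $\mathbb{E}[M_t]\leq 1$ with Markov's inequality. I would sidestep this by bounding the value of each process at its data-dependent index by the corresponding running supremum. On $\{t<\hat T\}$ we have $\hat T-1\geq t$, so $\{R^{(t)}_{\hat T-1}\geq 2/\alpha,\ t<\hat T\}\subseteq\{\sup_{n\geq t}R^{(t)}_n\geq 2/\alpha\}$; on $\{\hat T<t\}$ we have $\hat T\in\{1,\dots,t-1\}$, so $\{S^{(t)}_{\hat T}\geq 2/\alpha,\ \hat T<t\}\subseteq\{\sup_{n<t}S^{(t)}_n\geq 2/\alpha\}$. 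These inclusions are purely pointwise and require nothing about the measurability of $\hat T$.

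Finally I would invoke Ville's maximal inequality for e-processes, which holds under exactly the stopping-time definition used here: for the forward process, taking the genuine stopping time $\kappa^\ast=\inf\{n\geq t: R^{(t)}_n\geq 2/\alpha\}$ and applying the e-process bound to its bounded truncations $\min(\kappa^\ast,N)$ (then letting $N\to\infty$, using nonnegativity) yields $\P_{F_0,t,F_1}[\sup_{n\geq t}R^{(t)}_n\geq 2/\alpha]\leq \alpha/2$; the identical argument on the backward filtration gives $\P_{F_0,t,F_1}[\sup_{n<t}S^{(t)}_n\geq 2/\alpha]\leq \alpha/2$. A union bound over the two cases then gives $\P_{F_0,t,F_1}[M_t\geq 2/\alpha]\leq \alpha/2+\alpha/2=\alpha$, as claimed. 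The factor of $2$ in the threshold is exactly the price paid by this union bound across the forward and backward halves.
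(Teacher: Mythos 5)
Your proposal is correct and follows essentially the same route as the paper's proof: the same case decomposition of $\{M_t \geq 2/\alpha\}$ into the forward and backward contributions, the same resolution of the ``$\hat T$ is not a stopping time'' obstacle by absorbing the data-dependent index into the running supremum $\{\exists n : R^{(t)}_n \geq 2/\alpha\}$ (resp.\ for $S^{(t)}$), and the same application of Ville's inequality at level $\alpha/2$ on each side followed by a union bound. The only cosmetic difference is that you sketch the optional-stopping derivation of Ville's inequality, whereas the paper cites it as a lemma.
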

 While this bound may be conservative, it is remarkably universal: it holds without any assumptions on the distribution classes $\mathcal{P}_0$, $\mathcal{P}_1$, or on the detection method $\mathcal{A}$.
For constructing a valid confidence set,  \Cref{prop:dual} implies that we require tests of level $\alpha \cdot \P_{F_0,\infty}(\tau \geq t)$, whose value is of course not known to us.
To circumvent this issue, we first assume that the pre-change distribution $F_0$ is known. This is a common assumption in sequential changepoint literature.
 
 \subsection{Confidence set for \texorpdfstring{$T$}{Lg} with known pre-change distribution}
 \label{subsec:ci-simple}
Let $r_t$ be an unbiased estimator of $\mathbb P_{F_0,\infty}[\tau \geq t]$, that is independent of the data. Then, 
 define the confidence set as
\begin{equation}
\label{eq:ci-nonpara-known-pre}
\mathcal{C}=\left\{t\in\mathbb{N}: t\leq\tau, M_t<\frac{2}{\alpha r_t}\right\}.
\end{equation}
 One natural approach to estimate $\mathbb P_{F_0,\infty}[\tau \geq t]$ is via Monte Carlo simulation.
Fix $N \in \mathbb{N}$ and for each $j = 1, \dots, N$, simulate a sequence of i.i.d.\ data points $Z_1^{(j)}, Z_2^{(j)}, \dots \stackrel{i.i.d.}{\sim} F_0$ until either the detection algorithm triggers or the sample size reaches $\tau$—whichever occurs first. Let $\tau_j := \min\{\tau, \mathcal{A}(\{Z_n^{(j)}\}_n)\}$ denote the stopping time for the $j$-th simulated sequence.
Then $r_t = \frac{1}{N}\sum_{j=1}^N \mathds{1}(\tau_j \ge t)$ is a natural unbiased estimator of $\mathbb P_{F_0,\infty}[\tau \geq t]$ 
,but it can equal zero with positive probability. To avoid this, we fix some $r\geq 2$ and for each $t$, one might define $N_t$ to be a random index until $\tau_j \ge t$ happens exactly $r$ many times. Then $r_t =\frac{r-1}{N_t-1}$ is an unbiased estimator, which is strictly positive. Alternatively, one may use an asymptotically unbiased estimator with $N$, such as, $r_t = \frac{1}{N+1}\left(1+\sum_{j=1}^N \mathds{1}(\tau_j \ge t)\right)$, which guarantees strict positivity but only provides asymptotic $1-\alpha$ coverage guarantee as $N\to\infty$.
The next theorem proves the validity of the above confidence set.
\begin{theorem}
\label{thm:coverage-nonpara-known-pre}
Suppose that the pre-change distribution is known, i.e., $\mathcal{P}_0=F_0$. Then, for \textit{any} $\alpha\in(0,1), T,N\in \N$, post-change class $\mathcal{P}_1$, and change detection method $\mathcal{A}$ with stopping time $\tau$ satisfying $\mathbb \P_{F_0,\infty}(\tau\geq T)\neq 0$, the confidence set $\mathcal{C}$ defined in \eqref{eq:ci-nonpara-known-pre}, constructed using any data-independent unbiased estimator $r_t$ of $\mathbb P_{F_0,\infty}[\tau \geq t]$, satisfies $\P_{F_0,T,F_1}(T\in \mathcal{C}\mid \tau\geq T)\geq 1-\alpha, \forall  F_1\in\mathcal{P}_1.$ 
    \end{theorem}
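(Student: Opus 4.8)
The plan is to reduce the statement to the dual testing condition \eqref{eq:invert} via \Cref{prop:dual}, and then to lift the fixed-threshold guarantee of \Cref{lem:univ-threshold} to the random, data-independent threshold $2/(\alpha r_t)$ by conditioning on $r_t$ and exploiting its unbiasedness. Concretely, I would set $\phi^{(t)} := \mathds{1}\!\left(M_t \ge 2/(\alpha r_t)\right)$, so that the inverted set $\{t \le \tau : \phi^{(t)} = 0\}$ of \Cref{prop:dual} coincides exactly with $\mathcal{C}$ in \eqref{eq:ci-nonpara-known-pre}. By the converse half of \Cref{prop:dual}, it then suffices to verify \eqref{eq:invert}, i.e.\ that for every $t \in \N$ and every $F_1 \in \mathcal{P}_1$,
\[
\mathbb{P}_{F_0,t,F_1}\!\left(M_t \ge \tfrac{2}{\alpha r_t}\right) \;\le\; \alpha\,\mathbb{P}_{F_0,\infty}(\tau \ge t).
\]

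Two structural facts drive the argument. First, by \eqref{eq:test-stat-nonpara}, $M_t = -\infty$ on $\{t > \tau\} \cup \{\tau = \infty\}$, so the event $\{\phi^{(t)} = 1\}$ is automatically contained in $\{\tau \ge t\}$; no separate bookkeeping is needed for stopping before $t$. Second, $r_t$ is, by hypothesis, independent of the data $\mathbf{X}$ and unbiased, $\mathbb{E}[r_t] = \mathbb{P}_{F_0,\infty}(\tau \ge t)$. I would therefore condition on $\{r_t = \rho\}$: since $r_t$ is independent of $\mathbf{X}$, conditioning does not alter the law of $\mathbf{X}$ under $\mathbb{P}_{F_0,t,F_1}$, and the threshold becomes the deterministic constant $2/(\alpha\rho)$.

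For fixed $\rho$, the proof of \Cref{lem:univ-threshold} applies verbatim with $\alpha$ replaced by $\beta := \alpha\rho$: splitting on $\hat T$, the $t=\hat T$ case gives $M_t = 1 < 2/\beta$, while on $\{t < \hat T \le \tau\}$ one has $M_t = R^{(t)}_{\hat T - 1} \le \sup_{n \ge t} R^{(t)}_n$, and on $\{\hat T < t \le \tau\}$ one has $M_t = S^{(t)}_{\hat T} \le \sup_{n < t} S^{(t)}_n$. Ville's maximal inequality for the forward e-process $R^{(t)}$ (an e-process under the true $F_1$, since $X_t,X_{t+1},\dots \sim F_1$) and for the backward e-process $S^{(t)}$ (an e-process under $F_0$, since $X_1,\dots,X_{t-1}\sim F_0$) each contribute at most $\beta/2$, so $\mathbb{P}_{F_0,t,F_1}(M_t \ge 2/\beta) \le \beta$; the bound is trivially true when $\beta \ge 1$, and when $\rho = 0$ the threshold is infinite and the probability vanishes. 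Crucially, since these bounds pass through the suprema, they hold for any (data-dependent) choice of $\hat T \in \{1,\dots,\tau\}$. Integrating the conditional bound over the law of $r_t$ and using unbiasedness yields $\mathbb{P}_{F_0,t,F_1}(M_t \ge 2/(\alpha r_t)) \le \mathbb{E}[\alpha r_t] = \alpha\,\mathbb{P}_{F_0,\infty}(\tau \ge t)$, which is precisely \eqref{eq:invert}; \Cref{prop:dual} then delivers the claimed conditional coverage.

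I expect the main obstacle to be conceptual rather than computational: correctly handling the interaction between the randomness of $r_t$ and the required right-hand side $\alpha\,\mathbb{P}_{F_0,\infty}(\tau\ge t)$ of \eqref{eq:invert}. The key insight is that conditioning on $r_t$ converts the problem into a fixed-threshold instance of \Cref{lem:univ-threshold} at level $\alpha r_t$, after which unbiasedness matches the target level exactly. A secondary point worth stating carefully is that the Monte Carlo estimator described before the theorem is literally a function of $\tau$ (through the truncation $\tau_j = \min\{\tau,\mathcal{A}(\mathbf{Z}^{(j)})\}$) and hence not data-independent as written; however, on $\{\tau \ge t\}$ — the only event on which $\phi^{(t)} = 1$ is possible — it coincides with the genuinely data-independent estimator $\tfrac1N\sum_j \mathds{1}(\mathcal{A}(\mathbf{Z}^{(j)}) \ge t)$, so the argument above applies unchanged.
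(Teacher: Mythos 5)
Your proposal is correct and takes essentially the same route as the paper's proof: the core step---conditioning on the data-independent $r_t$, applying \Cref{lem:univ-threshold} at level $\alpha r_t$, and integrating via unbiasedness to obtain $\P_{F_0,t,F_1}(M_t \ge 2/(\alpha r_t)) \le \alpha\,\P_{F_0,\infty}(\tau\ge t)$---is exactly what the paper does. The only cosmetic difference is that you invoke the converse half of \Cref{prop:dual} explicitly while the paper re-derives that decomposition inline, and your side remark that the $\tau$-truncated Monte Carlo estimator coincides with a genuinely data-independent one on $\{\tau\ge t\}$ (the only event where $\phi^{(t)}=1$ is possible) is a correct and worthwhile clarification.
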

{ Now we analyze the size of the 
 confidence set \eqref{eq:ci-nonpara-known-pre} for known pre-change $F_0$ and post-change $F_1$, with $r_t = \frac{1+\sum_{j=1}^N \mathds{1}(\tau_j\ge t)}{N+1}$, $M_t$ as defined in \eqref{eq:test-stat-simp}. We provide a bound on the expected size of the confidence set as $N \to \infty$, where $N$ is the number of simulations used to calculate $r_t$. Define, $p_t=\P_{F_0^*,\infty}[\tau\geq t]$. Note that $r_t \to p_t$ as $N \to \infty$.
Define
$\rho_0(s) =\mathbb E_{F_0}\left(\frac{f_1 (X_i)}{f_0 (X_i)}\right)^s$ and $\rho_1(s) =\mathbb E_{F_1}\left(\frac{f_0 (X_i)}{f_1 (X_i)}\right)^s$. These two quantities measure closeness between $F_0$ and $F_1$, and thus the problem hardness. Note that for $s=1/2,$ this coincides with Bhattacharyya distance. At $s=0,1$, $\rho_0$ and  $\rho_1$ take value $1.$
Since for $s\in(0,1)$, $h(x)=x^s$ is a strictly concave function and since $F_1 \neq F_0$, $\frac{f_1(X)}{f_0(X)}$ is not almost everywhere constant, we get a strict inequality using Jensen's:
$\rho_0(s) =\mathbb{E}_{F_0} \left[ \left( \frac{f_1(X)}{f_0(X)} \right)^s \right] < \left( \mathbb{E}_{F_0} \left[ \frac{f_1(X)}{f_0(X)} \right] \right)^s=1.$
Similarly, for $s\in(0,1), \rho_1(s)<1$. We now define $s_0 = \arg\min_{s \in [0,1]} \rho_0(s)$ and $s_1= \arg\min_{s \in [0,1]} \rho_1(s)$, and note that the preceding facts imply that $0 < s_0 < 1$ and $0 < s_1 < 1$.
Our length bound will depend on the expected detection delay, $\Delta_{F_0,T,F_1}:=\mathbb E_{F_0,T,F_1}(|\tau-T|\mid \tau\geq T).$
For typical detection algorithms, this quantity often scales like the logarithm of the average run length divided by the KL divergence between $F_1,F_0$.
Finally, we call the stopping time $\tau$ as \emph{sensitive} if 
\begin{equation}~\label{def:sensitive}
\P_{F_0,T,F_1}[\tau\leq t]\geq \P_{F_0,\infty}[\tau\leq t],
\end{equation}
for all $t$ and $T$. 
Equivalently, 
$\tau$ under a change is stochastically smaller than $\tau$ under no change. We can expect all practically used detection algorithms to be sensitive, as they should stop later when a change is absent than when it is present.
\begin{theorem}
\label{thm:len}
In the setting described above, for any change detection method with stopping time $\tau$ satisfying $p_T\neq 0$,
    the size of ${\mathcal{C}}$ 
 can be bounded as:
\begin{align}
\label{eq:len}
 \limsup_{N\to\infty} \mathbb E_{F_0,T,F_1} (|{\mathcal{C}}|\mid \tau\geq T)\leq  \frac{2^{s_0}}{\alpha^{s_0} p_{{T}}^{s_0+1}}\cdot\frac{\rho_0(s_0)-\rho_0(s_0)^{T-1}}{1-\rho_0(s_0)}+1+\Delta_{F_0,T,F_1}.
\end{align}
    Additionally, if $\tau$ is sensitive in the sense of~\eqref{def:sensitive}, the last term in~\eqref{eq:len} can be improved to $\Psi_{F_0,T,F_1}:=\min\left\{\Delta_{F_0,T,F_1}, \frac{1}{p_T}\left(  \left(\frac{2}{\alpha}\right)^{s_1}\times\frac{\rho_1(s_1)^{s_1}}{1-\rho_1(s_1)^{s_1}}+\frac{\rho_1(s_1)}{1-\rho_1(s_1)}\right)\right\}$.
\end{theorem}

\begin{remark}
Suppose  $F_1=N(\mu_1,\sigma^2)$ vs $F_0=N(\mu_0,\sigma^2)$ and we are using a CUSUM detector with the likelihood ratio (which satisfies the assumption) with average run length $A$.
Consider the asymptotic regime when $A \to\infty$, and $T=C A$,  for some constant $C>0$.
Using the property of the CUSUM detector that $\frac{\tau}{\mathbb E_{F_0,\infty}(\tau)}\stackrel{d}{\to}\exp(1)$ as  $A\to\infty$, we have $p_T=\mathbb \P_{F_0,\infty}(\tau\geq T)=\mathbb \P_{F_0,\infty}(\tau\geq CA_T)\to \exp(-C)$. Also, for the Gaussian distribution, it is easy to verify that $\rho_0(s) =\rho_1(s)= \exp\left( -s(1-s)K \right)$, where $K=\frac{(\mu_1-\mu_0)^2}{2\sigma^2}$ and $s_0=s_1=1/2$. Substituting, we get 
\begin{align*}
\label{eq:len-asymp}
 \limsup_{T,A\to\infty,T=CA} \limsup_{N \to \infty}  \mathbb E_{F_0,T,F_1} \left(|\mathcal{C}|\mid \tau\geq T\right)\leq&  \sqrt{\frac{2}{\alpha}}\left[\frac{\exp(\frac{3C}{2}-\frac{K}{4})}{1-\exp\left(-\frac{K}{4}\right)}+\frac{\exp\left(C-\frac{K}{8}\right)}{1-\exp\left(-\frac{K}{8}\right)}\right]\\
 &+1+\frac{\exp(C-\frac{K}{4})}{1-\exp\left(-\frac{K}{4}\right)}.
    \end{align*}
To simplify further, consider the difficult regime of small $K$, where $\left(\exp\left(\frac{K}{4}\right)-1\right) \approx K/4$, and $\left(\exp\left(\frac{K}{8}\right)-1\right) \approx K/8$, and take $C =\frac{1}{3}\log(\frac{2}{\alpha})$, so that the above expression is bounded by, ignoring constants, $K^{-1}[\frac{1}{\alpha}+ \frac{1}{\alpha^{5/6}}++ \frac{1}{\alpha^{1/3}}]$. The fact that the bound grows with $C=T/A$ is empirically verified in Supplementary Section G.1. This behavior is not a limitation of our methodology; rather, it reflects a fundamental difficulty of the problem: $p_t$ decays as $C$ increases, and hence, by our duality result (\Cref{prop:dual}), the tests corresponding to \emph{any} confidence set will have smaller levels $\alpha p_t$, leading to reduced power and, consequently, wider confidence sets.
\end{remark}
}
\subsection{An extension to composite pre-change}
To extend our method to composite pre-change classes $\mathcal{P}_0$,  we require the following assumption on the change detection method and $\mathcal{P}_0$:
\begin{assumption}
\label{assmp-1}
    $\exists P_0^*\in\mathcal{P}_0$ such that $\inf_{P_0\in\mathcal{P}_0}\P_{P_0,\infty}(\tau\geq t)\geq \P_{P_0^*,\infty}(\tau\geq t)$,  $\forall t\in\N$. 
\end{assumption}

{ 
We justify this assumption by appealing to Huber's  notion of ``least favorable distribution pair" \citep{huber1965robust} in terms of risk, where the risk $R$ for a hypothesis test $\phi$ between $\mathcal{P}_0$ vs. $\mathcal{P}_1$ is defined as
\begin{equation}
\label{risk}
      R(P_0, \phi) = C_0 P_{0}(\phi \text{ accepts } \mathcal{P}_1),        ~~
        R(P_1, \phi) = C_1 P_{1}(\phi \text{ accepts } \mathcal{P}_0).
\end{equation}
The formal definition of a least favorable distribution pair is given below.
\begin{definition}
    A pair $(P_0^*, P_1^*) \in \mathcal{P}_0 \times \mathcal{P}_1$ is called \textit{least favorable distribution (LFD) pair} in terms of risk $R$ for testing $\mathcal{P}_0$ vs. $\mathcal{P}_1$, if for every likelihood ratio test $\phi$ between $P_0^*$ and $P_1^*$, $R(P_j^*,\phi) \geq R(P_j,\phi), ~\text{ for all } P_j \in \mathcal{P}_j,~ j=0,1.$
\end{definition}

The following proposition establishes that the existence of an LFD pair is sufficient for our assumption to hold.
\begin{proposition}
\label{prop:mlr}
$(P_0^*,P_1^*)$ is a least favourable distribution pair in terms of risk for testing $\mathcal{P}_0$ vs. $\mathcal{P}_1$, then  \Cref{assmp-1} holds with $F^*=P^*$, and $\tau$ being the stopping time of a CUSUM/SR change detector with the likelihood ratio of $P_1^*$ to $P_0^*$. 
\end{proposition}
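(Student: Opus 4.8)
The plan is to convert the least-favorable property into a stochastic ordering of the per-observation likelihood ratio, and then push this ordering through the monotone structure of the CUSUM/Shiryaev--Roberts statistic to obtain the desired ordering of the stopping-time tails.

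First I would unpack the LFD-in-risk condition. Write $L(x)=\frac{dP_1^*}{dP_0^*}(x)$ for the likelihood ratio driving the detector. Any likelihood ratio test $\phi_c$ between $P_0^*$ and $P_1^*$ accepts $\mathcal{P}_1$ precisely on $\{L(X)>c\}$ for a threshold $c$, so the defining inequality $R(P_0^*,\phi_c)\ge R(P_0,\phi_c)$ for all $P_0\in\mathcal{P}_0$ reduces, after cancelling $C_0$ in~\eqref{risk}, to $P_0^*(L(X)>c)\ge P_0(L(X)>c)$. Since the LFD property holds for \emph{every} likelihood ratio test, this inequality holds at every threshold $c$; hence the increment $Z=\log L(X)$ is stochastically larger under $P_0^*$ than under any $P_0\in\mathcal{P}_0$, i.e.\ $Z$ under $P_0^*$ dominates $Z$ under $P_0$ in the usual stochastic order.

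Second I would record the monotonicity of the detector. With $Z_i=\log\frac{dP_1^*}{dP_0^*}(X_i)$, both the CUSUM recursion $W_n=\max(0,W_{n-1}+Z_n)$, $W_0=0$, and the Shiryaev--Roberts recursion $R_n=e^{Z_n}(1+R_{n-1})$, $R_0=0$, exhibit the statistic as a coordinatewise non-decreasing function of $(Z_1,\dots,Z_n)$ (each increment enters additively or multiplicatively through a positive factor). Consequently $\tau=\inf\{n:\text{statistic}\ge b\}$ is coordinatewise non-increasing in the increments, so for each fixed $s$ the event $\{\tau\le s\}$ is an increasing (upper) set in $(Z_1,\dots,Z_s)$. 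Third I would combine the two: under $\P_{P_0,\infty}$ the observations are i.i.d., so the increments are i.i.d., and the first step gives coordinatewise stochastic dominance of the $P_0^*$-increments over the $P_0$-increments. By the standard quantile coupling for independent sequences there is a joint law under which each $P_0^*$-increment exceeds the corresponding $P_0$-increment almost surely; applying this to the increasing event $\{\tau\le s\}$ yields $\P_{P_0^*,\infty}(\tau\le s)\ge \P_{P_0,\infty}(\tau\le s)$ for every $s$. Taking $s=t-1$ and complementing gives $\P_{P_0^*,\infty}(\tau\ge t)\le \P_{P_0,\infty}(\tau\ge t)$ for all $P_0\in\mathcal{P}_0$ and all $t\in\N$, so $P_0^*$ attains the infimum from below, which is exactly \Cref{assmp-1} with $F^*=P_0^*$.

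The main obstacle is the first step: carefully extracting ordering \emph{at every threshold} from the LFD-in-risk definition (handling the quantifier ``for every likelihood ratio test'' and any randomization on the level set $\{L=c\}$), since this is what upgrades a single distributional comparison to a full stochastic order. Once coordinatewise stochastic dominance of the i.i.d.\ increments and the monotonicity of $\{\tau\le s\}$ are both in place, the conclusion follows from the classical fact that stochastically larger i.i.d.\ inputs raise the probability of any increasing event.
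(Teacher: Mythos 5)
Your proposal is correct and follows essentially the same route as the paper's proof: extract stochastic dominance of the single-observation likelihood ratio $\frac{p_1^*(X)}{p_0^*(X)}$ under $P_0^*$ versus $P_0$ from the LFD-in-risk definition, invoke the standard coupling characterization of stochastic order (the paper cites Theorem 1.A.2 of Shaked and Shanthikumar, which is exactly your quantile coupling), and push this through the coordinatewise monotonicity of the CUSUM/SR statistic to order the tails of $\tau$. The only differences are presentational — you use the recursive form of the detectors and treat the SR case explicitly, while the paper uses the max-over-windows form and declares the SR case analogous.
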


LFD pairs are known to exist for various parametric and nonparametric models, e.g., the monotone likelihood ratio (MLR) families, and Huber's robust models (including $\epsilon$ neighborhood model, total variation model, and more generally with 2-alternating capacities \cite{huber1973minimax}).
}

Suppose we have estimators $r_t^*$ of $\mathbb P_{F_0^*,\infty}[\tau \geq t]$, for each $t$ (which can be constructed using simulations from $F^*_0$ as described in \Cref{subsec:ci-simple}). Now, we analogously
define the confidence set as
\begin{equation}
\label{eq:ci-nonpara}
            \mathcal{C}=\left\{t\in\N: t\leq\tau, M_t<\frac{2}{\alpha r_t^*}\right\}.
        \end{equation}
The next theorem proves the validity of the above confidence set.
\begin{theorem}
\label{thm:coverage-nonpara-comp-pre}
Suppose that the pre-change class $\mathcal{P}_0$ and change detection method $\mathcal{A}$ with stopping time $\tau$ satisfy \Cref{assmp-1} and $\mathbb \P_{F_0,\infty}(\tau\geq T)\neq 0, \forall  F_0\in\mathcal{P}_0$. Then, for \textit{any} $\alpha\in(0,1),$ $T,N\in \N$, post-change class $\mathcal{P}_1$, the confidence set  $\mathcal{C}$ defined in \eqref{eq:ci-nonpara}, constructed using any data-independent unbiased estimator $r_t^*$ of $\mathbb P_{F_0^*,\infty}[\tau \geq t]$,
satisfies 
$\P_{F_0,T,F_1}(T\in \mathcal{C}\mid \tau\geq T)\geq 1-\alpha, \forall  F_0\in\mathcal{P}_0, F_1\in\mathcal{P}_1.$ 
\end{theorem}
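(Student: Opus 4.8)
The plan is to reduce everything to the duality in \Cref{prop:dual} and then import the argument already used for the known–pre-change case (\Cref{thm:coverage-nonpara-known-pre}), with \Cref{assmp-1} supplying the one missing link. Concretely, define the test $\phi^{(t)} := \mathds{1}(M_t \geq 2/(\alpha r_t^*))$. Its inversion $\{t \le \tau : \phi^{(t)} = 0\}$ is exactly the set $\mathcal{C}$ in \eqref{eq:ci-nonpara}, so by the converse direction of \Cref{prop:dual} it suffices to verify the dual inequality \eqref{eq:invert}, namely that for every $t\in\N$, $F_0\in\mathcal{P}_0$, and $F_1\in\mathcal{P}_1$,
\[
\P_{F_0,t,F_1}\!\left(M_t \geq \tfrac{2}{\alpha r_t^*}\right) \;\leq\; \alpha\,\P_{F_0,\infty}(\tau \geq t).
\]
(The conditioning event in the conclusion has positive probability because $\{\tau \ge T\}$ is measurable with respect to $X_1,\dots,X_{T-1}$, whose law is $F_0^{\otimes(T-1)}$ under both $\P_{F_0,T,F_1}$ and $\P_{F_0,\infty}$; hence $\P_{F_0,T,F_1}(\tau\ge T)=\P_{F_0,\infty}(\tau\ge T)\neq 0$ by assumption, as already noted in \Cref{prop:dual}.)

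To establish the displayed inequality I would exploit that $r_t^*$ is independent of the data. Conditioning on $r_t^* = r$ and applying \Cref{lem:univ-threshold} at level $\beta = \alpha r$ (which lies in $(0,1)$ whenever $r\in(0,1/\alpha)$, the bound being trivial when $\alpha r \ge 1$ and vacuous when $r=0$) gives
\[
\P_{F_0,t,F_1}\!\left(M_t \geq \tfrac{2}{\alpha r}\,\middle|\, r_t^* = r\right) \;\leq\; \alpha r .
\]
Averaging over the independent draw of $r_t^*$ and using its unbiasedness for the least-favorable survival function, $\mathbb{E}[r_t^*] = \P_{F_0^*,\infty}(\tau \geq t)$, yields
\[
\P_{F_0,t,F_1}\!\left(M_t \geq \tfrac{2}{\alpha r_t^*}\right) \;\leq\; \alpha\,\mathbb{E}[r_t^*] \;=\; \alpha\,\P_{F_0^*,\infty}(\tau \geq t).
\]
Up to this point the argument is identical to the proof of \Cref{thm:coverage-nonpara-known-pre}, the only change being that the estimator targets $F_0^*$ rather than a known $F_0$.

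The one genuinely new step — and the place where \Cref{assmp-1} is indispensable — is that \eqref{eq:invert} requires the survival function of $\tau$ under the \emph{true} (unknown) $F_0$ on the right-hand side, whereas the bound just obtained involves the survival function under $F_0^*$. \Cref{assmp-1} closes this gap: it guarantees $\P_{F_0^*,\infty}(\tau\geq t)=\inf_{P_0\in\mathcal{P}_0}\P_{P_0,\infty}(\tau\geq t)\leq \P_{F_0,\infty}(\tau\geq t)$ for every $F_0\in\mathcal{P}_0$ and every $t$. Chaining this with the previous display gives precisely $\P_{F_0,t,F_1}(M_t \geq 2/(\alpha r_t^*)) \leq \alpha\,\P_{F_0,\infty}(\tau\geq t)$, i.e.\ \eqref{eq:invert}, and the converse of \Cref{prop:dual} then delivers the conditional coverage \eqref{eq:cond-cov}, as claimed. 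The main obstacle is therefore entirely conceptual rather than computational: it is the uniformity demanded of \Cref{assmp-1}, requiring a single $F_0^*$ to minorize $\P_{P_0,\infty}(\tau\geq t)$ simultaneously over all candidate times $t$ and all pre-change laws $P_0\in\mathcal{P}_0$ (a condition whose plausibility is precisely what \Cref{prop:mlr} addresses via least-favorable pairs). Granting that, the remainder is a verbatim adaptation of the known-pre-change proof.
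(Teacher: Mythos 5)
Your proof is correct and follows essentially the same route as the paper's: condition on the independent $r_t^*$, apply \Cref{lem:univ-threshold} at level $\alpha r_t^*$, average using unbiasedness to get $\alpha\,\P_{F_0^*,\infty}(\tau\ge t)$, and invoke \Cref{assmp-1} to replace $F_0^*$ by the true $F_0$. The only cosmetic difference is that you route the final step through the converse direction of \Cref{prop:dual}, whereas the paper re-derives that duality chain inline (as in the proof of \Cref{thm:coverage-nonpara-known-pre}); the substance is identical.
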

{ 
We now analyze the size of the confidence set, generalizing \Cref{thm:len} to the case where an LFD pair $(F_0^*,F_1^*)$ exists for $\mathcal{P}_0$ vs $\mathcal{P}_1$. We consider the confidence set in \eqref{eq:ci-nonpara} with $r_t^* = \frac{1+\sum_{j=1}^N \mathds{1}(\tau_j^* \ge t)}{N+1}$, $M_t$ as defined in \eqref{eq:test-stat-nonpara} using $R_n^{(t)}=\prod_{i=t}^{n} \frac{f_{0}^*(X_i)}{f_{1}^*(X_i)},$ and $S_n^{(t)}=\prod_{i=n}^{t-1} \frac{f_{1}^*(X_i)}{f_{0}^*(X_i)}$.  We redefine $p_t,\rho_0$ and $\rho_1$ in terms of the LFD pair: $p_t=\P_{F_0^*,\infty}(\tau\geq t)$, $\rho_0(s) =\mathbb E_{F_0^*}\left(\frac{f_1^* (X_i)}{f_0^* (X_i)}\right)^s$ and $s_0 = \arg\min_{s \in (0,1)} \rho_0(s)$ and $\rho_1(s) =\mathbb E_{F_1}\left(\frac{f_0^* (X_i)}{f_1^* (X_i)}\right)^s$ and $s_1 = \arg\min_{s \in (0,1)} \rho_1(s)$.
Finally, we call the stopping time $\tau$ \emph{well-behaved} if it satisfies
\begin{subequations}
\label{eq-group} 
\begin{align}
\P_{F_0,\infty}(\tau\geq t)&\geq \P_{F_0^*,\infty}(\tau\geq t), \forall t\in\N, \forall F_0\in\mathcal{P}_0;\label{eq-a}\\
    \P_{F^*_0,T,F^*_1}[\tau\leq t]&\geq \P_{F^*_0,\infty}[\tau\leq t], \forall t, T\in\N ~\text{ and }\label{eq-b}  \\ 
    \P_{F_0,T,F_1}[\tau\leq t]&\geq \P_{F^*_0,T,F^*_1}[\tau\leq t], \forall t, T\in\N, \text{ and }\forall(F_0,F_1)\in\mathcal{P}_0\times\mathcal{P}_1.\label{eq-c}    
\end{align}
\end{subequations}
Note that \eqref{eq-b} is analogous to \eqref{def:sensitive}, while \eqref{eq-c} requires that the stopping time $\tau$ under change from $F_0^*$ to $F_1^*$ is stochastically larger than $\tau$ under change from $F_0$ to $F_1$, for any $(F_0,F_1)\in\mathcal{P}_0\times\mathcal{P}_1.$ To justify that \eqref{eq-c} is sensible, we note that the LFD pair $(F_0^*,F_1^*)$ is the closest pair within $\mathcal{P}_0\times\mathcal{P}_1$ (according to various metrics or divergences such as Kullback-Leibler), i.e., it renders the change detection problem most difficult. Therefore, it is reasonable to assume that $\tau$ under change from $F_0^*$ to $F_1^*$ is stochastically largest. \eqref{eq-a} can be justified similarly because $F_0^*$ is the ``closest'' element of the pre-change class to the post-change class.
\begin{theorem}
\label{thm:len:comp}
In the setting described above, for any change detection method with stop time $\tau$ that is well-behaved in the sense of \eqref{eq-a} and $p_T\neq 0$, 
\begin{align}
\label{eq:len-comp}
 \limsup_{N\to\infty} \sup_{\substack{F_0\in\mathcal{P}_0\\ F_1\in\mathcal{P}_1}}\mathbb E_{F_0,T,F_1} (|{\mathcal{C}}|\mid \tau\geq T)\leq  \frac{2^{s_0}}{\alpha^{s_0} p_{{T}}^{s_0+1}}\frac{1-\rho_0(s_0)^{T-1}}{1-\rho_0(s_0)}+1+\Delta_{\mathcal P_0,T,\mathcal P_1},
\end{align}
where $\Delta_{\mathcal P_0,T,\mathcal P_1}:=\displaystyle\sup_{\substack{F_0\in\mathcal{P}_0,F_1\in\mathcal{P}_1}}\Delta_{F_0,T,F_1}$.
Additionally, if the stopping time $\tau$ is well-behaved in the sense of \eqref{eq-b} and \eqref{eq-c} as well, the last term in \eqref{eq:len-comp} can be improved to
$$\min\left\{\frac{1}{p_T}\left(  \left(\frac{2}{\alpha}\right)^{s_1}\times\frac{\rho_1(s_1)^{s_1}}{1-\rho_1(s_1)^{s_1}}+\frac{\rho_1(s_1)}{1-\rho_1(s_1)}\right), \Delta_{\mathcal P_0,T,\mathcal P_1}\right\}.$$
\end{theorem}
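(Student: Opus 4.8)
The plan is to mirror the proof of \Cref{thm:len}, upgrading each step to the composite setting by substituting the LFD densities $f_0^*,f_1^*$ wherever the known-case proof used $f_0,f_1$, and invoking the well-behaved conditions \eqref{eq-a}, \eqref{eq-b}, \eqref{eq-c} to bridge the gap between the data-generating pair $(F_0,F_1)$ and the LFD pair $(F_0^*,F_1^*)$. First I would dispose of the limit in $N$: since $r_t^*=\frac{1+\sum_{j}\mathds{1}(\tau_j^*\ge t)}{N+1}\to p_t=\P_{F_0^*,\infty}(\tau\ge t)$ as $N\to\infty$ and $r_t^*$ is independent of the data, it suffices (after a Fatou/dominated-convergence argument applied to the counting functional $|\mathcal C|$) to bound $\mathbb E_{F_0,T,F_1}[|\mathcal C|\mid\tau\ge T]$ with the threshold $2/(\alpha p_t)$ in place of $2/(\alpha r_t^*)$. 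Next, because $\{\tau\ge T\}$ is $\sigma(X_1,\dots,X_{T-1})$-measurable and those observations are pre-change, we have $\P_{F_0,T,F_1}(\tau\ge T)=\P_{F_0,\infty}(\tau\ge T)\ge p_T$ by \eqref{eq-a}; this produces the overall normalizing factor $1/p_T$ and reduces the task to bounding $\sum_{t}\P_{F_0,T,F_1}(t\in\mathcal C,\ \tau\ge T)$.

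I would then split $|\mathcal C|=\sum_{t<T}\mathds 1(t\in\mathcal C)+\mathds 1(T\in\mathcal C)+\sum_{T<t\le\tau}\mathds 1(t\in\mathcal C)$, which is pure arithmetic and hence always valid. The middle indicator is at most $1$ (indeed $M_T=1<2/(\alpha p_T)$), producing the additive $+1$. For the left sum, the key observation is that the piecewise $M_t$ of \eqref{eq:test-stat-nonpara}, built from the LFD likelihood ratios, equals $\max_{1\le j\le\tau}L_j^\tau/L_t^\tau$, where $L_j^\tau$ is the LFD likelihood of a changepoint at $j$; hence for any $t<T\le\tau$ one gets the oracle lower bound $M_t\ge L_T^\tau/L_t^\tau=\prod_{i=t}^{T-1}f_0^*(X_i)/f_1^*(X_i)$. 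Thus $\{t\in\mathcal C\}\subseteq\{\prod_{i=t}^{T-1}f_0^*(X_i)/f_1^*(X_i)<2/(\alpha p_t)\}$, and since $X_t,\dots,X_{T-1}\sim F_0$, a Chernoff bound at exponent $s_0$ gives $\P_{F_0,T,F_1}(t\in\mathcal C,\ \tau\ge T)\le (2/(\alpha p_t))^{s_0}\big(\mathbb E_{F_0}[(f_1^*/f_0^*)^{s_0}]\big)^{T-t}$. Bounding the inner moment by $\rho_0(s_0)$, using $p_t\ge p_T$ for $t\le T$, and summing the resulting geometric series over $t<T$ reproduces the term $\frac{2^{s_0}}{\alpha^{s_0}p_T^{s_0+1}}\frac{1-\rho_0(s_0)^{T-1}}{1-\rho_0(s_0)}$.

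For the right sum I would first use the trivial count bound $\sum_{T<t\le\tau}\mathds 1(t\in\mathcal C)\le\tau-T$, whose conditional expectation, after taking the supremum over $(F_0,F_1)$, is exactly $\Delta_{\mathcal P_0,T,\mathcal P_1}$. To obtain the sharper alternative I would run the symmetric argument on the post-change side: for $t>T$ the same max characterization gives $M_t\ge\prod_{i=T}^{t-1}f_1^*(X_i)/f_0^*(X_i)$, and a Chernoff bound at exponent $s_1$ on this post-change segment (whose entries are $\sim F_1$) contributes the decay $\rho_1(s_1)^{t-T}$. The difficulty is that the threshold now carries $p_t^{-s_1}$ with $t>T$, where $p_t$ is small; this is precisely where \eqref{eq-b} and \eqref{eq-c} enter, since combined they yield $\P_{F_0,T,F_1}(\tau\ge t)\le p_t$, so that the constraint $\{t\le\tau\}$ implicit in $t\in\mathcal C$ supplies a compensating stopping-probability factor. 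Balancing the Chernoff decay against this stopping decay yields the two geometric series $(2/\alpha)^{s_1}\frac{\rho_1(s_1)^{s_1}}{1-\rho_1(s_1)^{s_1}}$ and $\frac{\rho_1(s_1)}{1-\rho_1(s_1)}$, and taking the minimum with $\Delta_{\mathcal P_0,T,\mathcal P_1}$ gives the stated improvement.

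The main obstacle is the bridging between the composite classes and the LFD pair. The Chernoff steps produce moments $\mathbb E_{F_0}[(f_1^*/f_0^*)^{s_0}]$ and $\mathbb E_{F_1}[(f_0^*/f_1^*)^{s_1}]$ under the arbitrary true $F_0,F_1$, whereas the bound is anchored at the LFD pair through $\rho_0,\rho_1$; closing this gap requires the stochastic-dominance property characteristic of LFD pairs (Huber--Strassen), namely that $f_1^*/f_0^*$ is stochastically largest under $F_0^*$ among $\mathcal P_0$ and $f_0^*/f_1^*$ largest under $F_1^*$ among $\mathcal P_1$, so that the monotone map $x\mapsto x^{s}$ transfers the domination to the moments. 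I expect the most delicate part to be the right-tail refinement, where the data-dependent event $\{t\le\tau\}$ and the post-change likelihood product share the observations $X_T,\dots,X_{t-1}$ and must be decoupled while simultaneously invoking \eqref{eq-b}--\eqref{eq-c}; by contrast the left tail and the $1/p_T$ normalization are comparatively routine once \eqref{eq-a} and the LFD domination are in hand. Finally, I would verify carefully the interchange of $\limsup_{N\to\infty}$ with the expectation and with the supremum over $(F_0,F_1)$.
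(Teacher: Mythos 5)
Your proposal is correct and follows essentially the same route as the paper: reduce to the known-case computation of Theorem \ref{thm:len} with the LFD densities $f_0^*,f_1^*$ in place of $f_0,f_1$, use the LFD stochastic-dominance property to transfer the Chernoff-type probabilities from $(F_0,F_1)$ to $(F_0^*,F_1^*)$, use \eqref{eq-a} for the $1/p_T$ normalization, and use \eqref{eq-b}--\eqref{eq-c} (which make the simulated $\tau_j^*$ stochastically larger than $\tau$, so that $r_\tau$ is a p-value) for the post-change tail refinement. The only underspecified point is the exact mechanism producing the two geometric series on the right tail --- the paper splits on the event $\{r_\tau>\rho_1(s_1)^{t-T}\}$ and invokes the p-value property of $r_\tau$ rather than using $\P_{F_0,T,F_1}(\tau\ge t)\le p_t$ directly --- but you have identified precisely the assumptions and stochastic comparisons that make this step go through.
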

}
{ 
}

 Next, we explore how, in parametric settings, one can obtain adaptive thresholds via simulation. Although such adaptive thresholds can sometimes be tighter, our simulations indicate that the resulting confidence sets from the universal threshold are not significantly wider, and are useful for their simplicity and generality.

\section{A simulation-based approach for parametric pre- and post-change settings} 
\label{sec:parametric}
In this section, we introduce an alternative approach that, while restricted to the parametric setting, allows observations to be dependent, and often yields tighter confidence sets than the method proposed in the previous section, as shown by our simulations later. We begin with the most basic case, where the pre- and post-change distributions are known, and then extend it to more general parametric settings.


\subsection{Known pre- and post-change distributions}

We develop a simulation-based scheme to approximate the exact $\alpha$-th quantile of the distribution of the test statistic $M_t$. Here, $M_t$ need not be of the form \eqref{eq:test-stat-nonpara}, it can be any test statistic based on $X_1,\cdots,X_\tau$. Assuming known pre- and post-change distributions, we draw $B$ many independent streams of data under $H_{0,t}$. For the $j$-th sequence, denoted as $\{X_n^j\}_n$, we simulate data points until a change is detected at  $\tau_j^\prime=\mathcal{A}(\{X_n^j\}_n)$ or until a fixed time $L$, whichever happens first. Then, we compute a test statistic $M_{t,L}^j$, based on $X_1^j,\cdots,X_{\tau_j^\prime\wedge L}^j$ (details are described in Supplementary Section B).
Our confidence set is the collection of all $t\in\{1,\cdots,\tau\}$, for which we fail to reject $H_{0,t}$:
\begin{equation}
    \label{eq:ci-simple}
  \mathcal{C}_1=\left\{t\in\N: t\leq \tau, M_t\leq\operatorname{Quantile}(1-\alpha r_{t}; M_t,M_{t,L}^1,\cdots,M_{t,L}^B)\right\},
\end{equation}
where $r_t$ is any unbiased estimator of $\P_{\theta_0,\infty}(\tau\geq t)$; see
Algorithm~\ref{algo:1-exact}.

\begin{algorithm}[h!]
\caption{CI for known pre-change $F_0$ and post-change $F_1$}
\label{algo:1-exact}
\begin{algorithmic}[1]
\Require $\alpha$, $N$, $B$, $\mathcal{A}$, and data $\{X_n\}_{n}$ until a change is detected at $\tau$ using $\mathcal{A}$
\Ensure A confidence set $\mathcal{C}$ for the changepoint $T$
\State $\mathcal{C} \gets \varnothing$
\For{$i = 1$ to $N$}
    \State Draw $\{Z_n^i\}_n \stackrel{}{\sim} F_0$, until $\mathcal{A}$ detects at $\tau_i$, or until $\tau$, whichever happens first
\EndFor
\For{$t = 1$ to $\tau$}
    \State Compute test statistic $M_t$ (e.g.\ \eqref{eq:test-stat-simp}) based on $\tau$ and $X_1,\dots,X_\tau$
    \For{$j = 1$ to $B$}
        \State Simulate $X_1^j,\dots,X_{t-1}^j \stackrel{}{\sim} F_0$; $X_{t}^j, X_{t+1}^j,\dots \stackrel{}{\sim} F_1$, until $\mathcal{A}$ detects a change at $\tau_j'$, or time $L$, whichever happens first
        \State \underline{Subroutine $\mathcal{L}$:} Compute $M_{t,L}^j$ based on $\tau_j'$ and $X_1^j,\dots,X_{\min(\tau_j', L)}^j$
    \EndFor
    \If{$M_t \leq \operatorname{Quantile}\left(1 - \frac{\alpha}{N} \times  \sum_{i=1}^N \mathds{I}(\tau_i \geq t) ;\, M_t, M_{t,L}^1,\dots,M_{t,L}^B \right)$}
        \State $\mathcal{C} \gets \mathcal{C} \cup \{t\}$
    \EndIf
\EndFor
\end{algorithmic}
\end{algorithm}

\begin{theorem}
\label{thm:exact-cond-coverage}
    For any fixed $\alpha\in(0,1)$, $B,N,T\in\mathbb N$, $L\in\N\cup\{\infty\}$, and change detection algorithm $\mathcal{A}$ with $\mathbb \P_{F_0,\infty}(\tau\geq T)\neq 0$, the confidence set $\mathcal{C}_1$  defined in \eqref{eq:ci-simple}, constructed using any data-independent unbiased estimator $r_t$ of $\mathbb P_{F_0,\infty}[\tau \geq t]$, satisfies
$  \P_{F_0,T,F_1} (T\in\mathcal{C}_1\mid \tau\geq T)\geq1-{\alpha}.$
\end{theorem}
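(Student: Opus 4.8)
The plan is to reduce the conditional-coverage claim to the single-test level bound \eqref{eq:invert} and then invoke the converse direction of \Cref{prop:dual}. Observe that $\mathcal{C}_1$ in \eqref{eq:ci-simple} is exactly the inversion of the family of tests $\phi^{(t)} := \mathds{1}(M_t > Q_t)$, where $Q_t := \operatorname{Quantile}(1-\alpha r_t; M_t, M_{t,L}^1,\dots,M_{t,L}^B)$; because $M_t = -\infty$ whenever $t>\tau$, a rejection $\{M_t>Q_t\}$ can occur only on $\{t\le\tau\}$, so $\{t\in\mathcal{C}_1\}=\{t\le\tau,\ \phi^{(t)}=0\}$. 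By \Cref{prop:dual} it therefore suffices to prove, for every fixed $t$, that
\[
\P_{F_0,t,F_1}(\phi^{(t)}=1)=\P_{F_0,t,F_1}(M_t>Q_t)\ \le\ \alpha\,\P_{F_0,\infty}(\tau\ge t).
\]

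Next I fix $t$ and work under $\P_{F_0,t,F_1}$, i.e.\ with the true changepoint equal to $t$ --- which is precisely the law assumed for the $B$ synthetic streams in the inner loop of \Cref{algo:1-exact} (each $X^j$ is drawn from $F_0$ before $t$ and from $F_1$ from $t$ onward). Hence the observed stream and the $B$ simulated streams are i.i.d.\ copies of one another, and since $M_t$ and the $M_{t,L}^j$ are produced by applying one and the same (possibly $L$-truncated) measurable map --- ``run $\mathcal{A}$, stop at $\min(\mathcal{A}(\cdot),L)$, and evaluate the statistic'' --- to these exchangeable streams, the vector $(M_t, M_{t,L}^1,\dots,M_{t,L}^B)$ is exchangeable. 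I expect this exchangeability step to be the crux of the argument and the point demanding the most care: one must ensure that the observed statistic $M_t$ is formed by the identical rule used for the synthetic ones (the same treatment of the cap $L$ and of the realized stopping time), so that the data-dependent stopping does not destroy the symmetry. Note that no structural assumption on the form of $M_t$ is needed, which is exactly why the theorem holds for an arbitrary statistic.

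I then condition on $r_t$. Because $r_t$ is assumed data-independent (built from the separate $F_0$-simulations, independent of both the observed stream and the $B$ synthetic streams), conditioning on it preserves exchangeability of $(M_t, M_{t,L}^1,\dots,M_{t,L}^B)$ and turns $\beta := \alpha r_t$ into a fixed level. I apply the standard exchangeability (full-conformal) quantile inequality: for exchangeable $A_0,\dots,A_B$ with empirical $(1-\beta)$-quantile $Q$ of $\{A_0,\dots,A_B\}$, one has $\P(A_0 > Q)\le\beta$. Taking $A_0=M_t$ gives $\P_{F_0,t,F_1}(M_t>Q_t\mid r_t)\le \alpha r_t$.

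Finally I take expectations and invoke unbiasedness, $\mathbb{E}[r_t]=\P_{F_0,\infty}(\tau\ge t)$, to obtain
\[
\P_{F_0,t,F_1}(M_t>Q_t)=\mathbb{E}\big[\P_{F_0,t,F_1}(M_t>Q_t\mid r_t)\big]\le \alpha\,\mathbb{E}[r_t]=\alpha\,\P_{F_0,\infty}(\tau\ge t),
\]
which is exactly \eqref{eq:invert}. The converse direction of \Cref{prop:dual} then yields $\P_{F_0,T,F_1}(T\in\mathcal{C}_1\mid\tau\ge T)\ge 1-\alpha$, completing the proof; the hypothesis $\P_{F_0,\infty}(\tau\ge T)\ne 0$ enters only to make the conditional probability in the conclusion well-defined.
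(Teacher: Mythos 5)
Your overall route is the same as the paper's: reduce the conditional-coverage claim to the per-$t$ level bound $\P_{F_0,t,F_1}(M_t>Q_t)\le\alpha\,\P_{F_0,\infty}(\tau\ge t)$, prove that bound by an exchangeability/rank argument conditional on $r_t$, and finish with unbiasedness of $r_t$. (The paper inlines the duality computation instead of citing \Cref{prop:dual}, but that difference is cosmetic.) For $L=\infty$ your argument is correct: under $\P_{F_0,t,F_1}$ the observed stream and the $B$ synthetic streams are i.i.d., the same map produces $M_t$ and the $M_{t,\infty}^j$, and the conformal quantile inequality applies conditionally on $r_t$.

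There is, however, a genuine gap in your treatment of finite $L$ — precisely at the step you yourself flag as the crux. You assert that $(M_t,M_{t,L}^1,\dots,M_{t,L}^B)$ is exchangeable because ``one and the same ($L$-truncated) measurable map'' is applied to all $B+1$ streams. That is not what \Cref{algo:1-exact} does: the observed statistic $M_t$ is computed from $X_1,\dots,X_\tau$ using the \emph{actual} stopping time with no cap, whereas each simulated $M_{t,L}^j$ is computed from a stream truncated at $\min(\tau_j',L)$ and is set to $+\infty$ whenever $t\le\tau_j'$ and $\tau_j'>L$ (Supplementary Section B). For $L<\infty$ the two maps differ, so the vector is not exchangeable and the conformal inequality cannot be invoked directly. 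The repair — which is what the paper's proof actually does — is a one-line monotonicity argument: exchangeability (indeed i.i.d.-ness) holds for the untruncated vector $(M_t,M_{t,\infty}^1,\dots,M_{t,\infty}^B)$, and by construction $M_{t,L}^j\ge M_{t,\infty}^j$ for every $j$, hence the truncated quantile dominates the untruncated one, $Q_t^{(L)}\ge Q_t^{(\infty)}$, and therefore $\P_{F_0,t,F_1}(M_t>Q_t^{(L)}\mid r_t)\le\P_{F_0,t,F_1}(M_t>Q_t^{(\infty)}\mid r_t)\le\alpha r_t$. With that step inserted, the rest of your proof goes through unchanged.
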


\begin{remark}
Notably, this method allows arbitrary (known) dependence among observations and is valid for any choice of $B,N,L$, test statistic $M_t$, and detection algorithm $\mathcal{A}$, which can function as an entirely black-box or heuristic method. The only necessity is a stopping rule that can be applied to generated data sequences.
    
\end{remark}

\subsection{Adaptive confidence set for known pre-change distribution and parametric post-change class}
\label{sec:composite-post}

We now assume that the post-change distribution follows some parametric composite model $\mathcal{P}_1=\{F_{\theta}:  {\theta}\in \Theta_1\}$ and the pre-change distribution, $F_{\theta_0} (\theta_0\notin\Theta_1)$ is known.

Since the post-change parameter is unknown, for each $t$, we estimate it based on $X_t,\cdots,X_\tau$.  However, the sample size used for the estimate might be too small to have a reliable estimate. To address this issue, we would use interval estimates instead of point estimates, allowing us to achieve a predetermined control over the error. Notably, conventional confidence intervals are invalid here, since $\tau$ is random. Therefore, we rely on confidence sequences \citep{darling1967confidence,howard2021time}. A confidence sequence having coverage $1-c$ is defined as a sequence of confidence sets
$\{\operatorname{CS}( X_1,\cdots,X_n;1-c)\}_{n\in\N}$, satisfying the uniform guarantee: 
\begin{equation}
\label{eq:cs-theta1}
    \P(\forall n\in \N, \theta\in \operatorname{CS}(X_1,\cdots,X_n;1-c))\geq 1-c.
\end{equation}
We construct $\mathcal{S}^\prime_{\tau-t+1}=\operatorname{CS}(X_t,\cdots,X_{\tau};1-\beta r_t)$, which have $1-\beta\times\P_{\theta_0,\infty}(\tau\geq t)$ coverage,  where 
$r_{t}$ is any unbiased estimator of $\P_{\theta_0,\infty}(\tau\geq t)$. 
 
Fix $B\in\N$. For each $j=1,\cdots,B$, suppose we have a sequence $\{X^j_n(\theta_0,t,\theta)\}_n$ $\sim\P_{\theta_0,t,\theta}$. Note that for a fixed $t$ and $\theta$, the sequences must be independent across $j$. However, for any fixed $j$, the i.i.d. sequences $\{X^j_n(\theta_0,\theta)\}_n$ are allowed to be dependent across different values of $t$ and  $\theta\in\mathcal{S}^\prime_{\tau-t+1}$. A specific method for generating these sequences is detailed in Section C of the Supplementary. We draw the data until a change is detected at $\tau_{j,t}^\theta$ using $\mathcal{A}$ or until a fixed time $L$, whichever occurs first. Then, we compute the same test statistics truncated at $L$, based on $\tau_{j,t}^\theta$ and $X^j_1(\theta_0,t,\theta),\cdots,X^j_{\tau_{j,t}^\theta\wedge L}(\theta_0,t,\theta)$, which is denoted as $M^j_{t,L}(\theta)$.
For $L<\infty$,
$$\begin{aligned}
\label{eq:Mt-L-comp-post}
    M^j_{t,L}(\theta) := \begin{cases}
    -\infty, &\text{if }t> \tau_{j,t}^\theta,\\
        M_t^{(\tau_{j,t}^\theta)}(X_1^j(\theta_0,t,\theta),\cdots,X_{\tau_{j,t}^\theta}^j(\theta_0,t,\theta)), &\text{if } t\leq\tau_{j,t}^\theta\leq L\\
        \infty, &\text{if } t\leq\tau_{j,t}^\theta \text{ and }\tau_{j,t}^\theta > L.
    \end{cases}
\end{aligned}$$
And for $L=\infty$, define
    $M^j_{t,\infty} := 
M_t(X_1^j(\theta_0,t,\theta),\cdots,X_{\tau_{j,t}^\theta}^j(\theta_0,t,\theta)), \text{ if } t\leq\tau_{j,t}^\theta<\infty$ and equal to 
        $-\infty, \text{ if }t> \tau_{j,t}^\theta \text{ or }\tau_{j,t}^\theta =\infty.$
Our confidence set is the collection of all $t$ before $\tau$, for which we fail to reject $H_{0,t}$ at level $\alpha\times\P_{\theta_0,\infty}(\tau\geq t)$:
\begin{equation}
\label{eq:conf-comp-post}
    \mathcal{C}_1=\left\{t\in \N:t\leq \tau, M_t\leq\sup_{\theta\in\mathcal{S}^\prime_{\tau-t+1}}\operatorname{Quantile}\left(1- \alpha r_{t};M_t,\{M_{t,L}^j(\theta)\}_{j=1}^B\right)\right\}.
\end{equation}


Algorithm~\ref{algo:comp-post} summarizes the method. 
Note that, even when $\mathcal{S}^\prime_{\tau-t+1}$ is an infinite set, the threshold $\sup_{\theta\in\mathcal{S}^\prime_{\tau-t+1}}\operatorname{Quantile}(1-\hat\alpha_{t};M_t,\{M_{t,L}^j(\theta)\}_{j=1}^B)$ can be approximated by discretizing $\mathcal{S}^\prime_{\tau-t+1}$ into a grid and evaluating the quantile for values of 
$\theta$ on this grid, taking their maximum. While this approach is conceptually straightforward, it is computationally intensive.
Moreover, in many cases, we can derive a suitable upper bound of this threshold, which is computationally, statistically efficient and does not require any approximation. Details are provided in Section C of the Supplementary, along with a concrete example of the Gaussian mean change problems.

\begin{algorithm}[h!]
\caption{CI for known pre-change ($F_{\theta_0}$) and parametric post-change}
\label{algo:comp-post}
\begin{algorithmic}[1]
\Require $\alpha, \beta$, $N,B,L$, $\operatorname{CS(data;coverage)}$,  $\mathcal{A}$, and data $\{X_n\}_{n}$ until a change is detected at $\tau$ using $\mathcal{A}$
\Ensure Confidence sets $\mathcal{C}$ for changepoint $T$

\For{$i=1,\cdots,N$}
    \State Draw $\{X_n^i\}_n \stackrel{iid}{\sim} F_{\theta_0}$, until $\mathcal{A}$ detects at $\tau_i$  or until $\tau$, whichever happens first.
\EndFor

\State $\mathcal{C}= \varnothing$
\For{$t=1,\cdots,\tau$}
    \State Compute some test statistics $M_t$ based on $X_1,\cdots,X_\tau$ 
    \State Compute $r_t = \sum_{i=1}^N \mathds{I}(\tau_i\geq t)/N$
    \State Find a confidence sequence for $\theta_1$: $\mathcal{S}^\prime_{\tau-t+1} = \operatorname{CS}(X_t,\cdots,X_{\tau};1-\beta r_t)$
    
    \For{$j=1,\cdots,B$}
        \State \underline{Subroutine 2}:
        \For{$\theta \in \mathcal{S}^\prime_{\tau-t+1}(X_t,\cdots,X_{\tau})$}
            \State Get $\{X^j_n(\theta_0,t,\theta)\}_n$ following $\P_{\theta_0,t,\theta}$ (independently for all $j$), until its detection time $\tau_{j,t}^\theta$ using $\mathcal{A}$ or until time $L$, whichever happens first.
            \State Compute $M_{t,L}^j(\theta)$ based on  $\tau_{j,t}^\theta$, and $X^j_1(\theta_0,t,\theta),\cdots,X^j_{\tau_{j,t}^\theta \wedge L}(\theta_0,t,\theta)$
        \EndFor
    \EndFor

    \State \underline{Subroutine 3}:
    \If{there exists $\theta \in \mathcal{S}^\prime_{\tau-t+1}$ such that $M_t \leq \operatorname{Quantile}(1-\alpha r_t; M_t, \{M_{t,L}^j(\theta)\}_{j=1}^B)$}
        \State $\mathcal{C} = \mathcal{C} \cup \{t\}$
    \EndIf
\EndFor
\end{algorithmic}
\end{algorithm}

\begin{theorem}
\label{thm:coverage-comp-post} 
   Let $\mathcal{C}_1$ be the confidence set for changepoint $T\in\N$ as defined in \eqref{eq:conf-comp-post}, constructed using any data-independent unbiased estimator $r_t$ of $\mathbb P_{\theta_0,\infty}[\tau \geq t]$, for some $\alpha,\beta\in(0,1)$, $N,B\in\mathbb N$,  $L\in\N \cup \{\infty\}$. Then for any given detection algorithm $\mathcal{A}$ with $\mathbb \P_{\theta_0,\infty}(\tau\geq T)\neq 0$ and any $\theta_1\in\Theta_1$,
$  \P_{\theta_0,T,\theta_1} (T\in \mathcal{C}_1\mid \tau\geq T)\geq 1-\alpha-\beta.$ 

\end{theorem}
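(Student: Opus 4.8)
The plan is to reduce the conditional coverage claim to a level statement about the inverted test via \Cref{prop:dual}. Concretely, it suffices to show that the test $\phi^{(t)}=\mathds{1}(t\notin\mathcal{C}_1)$ obeys \eqref{eq:invert} with $\alpha$ replaced by $\alpha+\beta$, i.e.\ that for every $t\in\N$,
\[
\P_{\theta_0,t,\theta_1}\!\left(t\notin\mathcal{C}_1\right)\le(\alpha+\beta)\,\P_{\theta_0,\infty}(\tau\ge t);
\]
the converse direction of \Cref{prop:dual} (applied with $\alpha+\beta$ in place of $\alpha$) then yields $\P_{\theta_0,T,\theta_1}(T\in\mathcal{C}_1\mid\tau\ge T)\ge 1-\alpha-\beta$. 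I would fix $t$ and work under $\P_{\theta_0,t,\theta_1}$, so that the changepoint is genuinely at $t$ and $X_t,\dots,X_\tau$ are truly post-change (hence the confidence sequence for $\theta_1$ built from them is valid). The starting point is the union bound
\[
\P_{\theta_0,t,\theta_1}(t\notin\mathcal{C}_1)\le \P_{\theta_0,t,\theta_1}\!\big(t\notin\mathcal{C}_1,\ \theta_1\in\mathcal{S}'_{\tau-t+1}\big)+\P_{\theta_0,t,\theta_1}\!\big(\theta_1\notin\mathcal{S}'_{\tau-t+1}\big).
\]

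For the second term I would exploit that $r_t$, being built from the auxiliary $F_{\theta_0}$-streams, is independent of the observed data. Conditioning on $r_t$, the anytime-valid guarantee \eqref{eq:cs-theta1} of the confidence sequence at nominal level $1-\beta r_t$ caps the miscoverage by $\beta r_t$; this applies at the \emph{data-dependent} index $\tau-t+1$ because confidence sequences cover uniformly over all sample sizes, so no stopping-time property of $\tau-t+1$ is needed. Taking expectation over $r_t$ and using $\mathbb E[r_t]=\P_{\theta_0,\infty}(\tau\ge t)$ bounds this term by $\beta\,\P_{\theta_0,\infty}(\tau\ge t)$.

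For the first term the key observation is a containment: on $\{\theta_1\in\mathcal{S}'_{\tau-t+1}\}$ the supremum over $\theta$ defining the threshold dominates its value at $\theta=\theta_1$, so
\[
\{t\notin\mathcal{C}_1,\ \theta_1\in\mathcal{S}'_{\tau-t+1}\}\subseteq\Big\{M_t>\operatorname{Quantile}\big(1-\alpha r_t;\,M_t,\{M^j_{t,L}(\theta_1)\}_{j=1}^B\big)\Big\},
\]
which reduces the analysis to the known-parameter Monte Carlo test at the \emph{true} $\theta_1$, exactly as in \Cref{thm:exact-cond-coverage}. Here I would finish in two strokes. First, because the simulated streams $\{X^j_n(\theta_0,t,\theta_1)\}_n$ are drawn from the true law $\P_{\theta_0,t,\theta_1}$ and independently across $j$, the untruncated statistics $M^j_{t,\infty}(\theta_1)$ are exchangeable with the realized $M_t$, and the standard Monte Carlo rank argument controls the exceedance of the pooled $(1-\alpha r_t)$-quantile at level $\alpha r_t$ conditionally on the independent $r_t$. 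Second, the truncation rule is monotone—$M^j_{t,L}(\theta_1)\ge M^j_{t,\infty}(\theta_1)$ pointwise, since truncation only replaces finite values by $+\infty$ when $\tau^{\theta_1}_{j,t}>L$—so the truncated quantile is no smaller than the untruncated one and the test becomes only more conservative. Conditioning on $r_t$, integrating, and invoking unbiasedness bounds the first term by $\alpha\,\P_{\theta_0,\infty}(\tau\ge t)$; adding the two pieces gives the desired $(\alpha+\beta)\,\P_{\theta_0,\infty}(\tau\ge t)$.

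I expect the main obstacle to be the careful bookkeeping of independence. One must verify that $r_t$ is independent of both the real data and the simulated streams so that the two conditioning arguments go through, and that the \emph{same} $r_t$ appearing in both the confidence-sequence level $1-\beta r_t$ and the quantile level $1-\alpha r_t$ does not disrupt the union bound—which it does not, precisely because each summand is bounded conditionally on $r_t$ before integrating. A secondary technical point is ensuring the truncation-monotonicity comparison, and the corresponding ordering of quantiles, holds uniformly including the degenerate values $M^j_{t,L}(\theta_1)\in\{\pm\infty\}$.
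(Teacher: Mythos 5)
Your overall strategy --- a union bound splitting on whether $\theta_1\in\mathcal{S}'_{\tau-t+1}$, conditioning on the independent $r_t$ to handle both the confidence-sequence level and the Monte Carlo quantile level, the exchangeability/rank argument at the true $\theta_1$, and the truncation monotonicity $M^j_{t,L}(\theta_1)\ge M^j_{t,\infty}(\theta_1)$ --- is exactly the paper's argument. However, the way you set up the reduction contains a genuine error. You propose to verify \eqref{eq:invert} (with level $\alpha+\beta$) for the test $\phi^{(t)}=\mathds{1}(t\notin\mathcal{C}_1)$, i.e.\ to show $\P_{\theta_0,t,\theta_1}(t\notin\mathcal{C}_1)\le(\alpha+\beta)\,\P_{\theta_0,\infty}(\tau\ge t)$. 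This target is false in general: since $\mathcal{C}_1\subseteq\{1,\dots,\tau\}$, the event $\{t\notin\mathcal{C}_1\}$ contains $\{\tau<t\}$, so the left side is at least $\P_{\theta_0,\infty}(\tau<t)=1-\P_{\theta_0,\infty}(\tau\ge t)$, which for a finite-ARL detector and large $t$ far exceeds $(\alpha+\beta)\,\P_{\theta_0,\infty}(\tau\ge t)$. Correspondingly, your key containment $\{t\notin\mathcal{C}_1,\ \theta_1\in\mathcal{S}'_{\tau-t+1}\}\subseteq\{M_t>\operatorname{Quantile}(1-\alpha r_t;M_t,\{M^j_{t,L}(\theta_1)\}_{j=1}^B)\}$ fails precisely on $\{\tau<t\}$: there $t\notin\mathcal{C}_1$ holds trivially, yet $M_t=-\infty$ so the right-hand event does not occur.

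The repair is small but necessary: the test to invert in the converse direction of \Cref{prop:dual} should be $\phi^{(t)}=\mathds{1}\bigl(M_t>\sup_{\theta\in\mathcal{S}'_{\tau-t+1}}\operatorname{Quantile}(1-\alpha r_t;M_t,\{M^j_{t,L}(\theta)\}_{j=1}^B)\bigr)$, which equals $0$ on $\{\tau<t\}$ because $M_t=-\infty$ there; the inverted set $\{t\le\tau:\phi^{(t)}=0\}$ is then exactly $\mathcal{C}_1$ as defined in \eqref{eq:conf-comp-post}, and the quantity you must bound by $(\alpha+\beta)\,\P_{\theta_0,\infty}(\tau\ge t)$ is $\P_{\theta_0,t,\theta_1}(M_t>\cdot)$, not $\P_{\theta_0,t,\theta_1}(t\notin\mathcal{C}_1)$. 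With that substitution your two-term bound goes through verbatim and coincides with the paper's proof (the paper carries out the $\{\tau\ge T\}$ versus $\{\tau<T\}$ bookkeeping inline rather than citing \Cref{prop:dual}, but the content is identical). The remaining technical points you flag --- independence of $r_t$ from both the observed and simulated streams, validity of the confidence sequence at the data-dependent index $\tau-t+1$, and the ordering of quantiles under truncation including the $\pm\infty$ values --- are all handled correctly.
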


\subsection{Adaptive confidence set for parametric pre- and post-change classes}
\label{sec:composite}
We now assume that both pre- and post-change distributions follow some parametric composite models: $\mathcal{P}_0=\{F_{\theta}:\theta\in\Theta_0\}$ and $\mathcal{P}_1=\{F_{\theta}:\theta\in\Theta_1\}$, $\Theta_0\cap\Theta_1= \emptyset$.

Since both pre- and post-change parameters are unknown, for each $t$, we estimate them based on $X_1,\cdots,X_{t-1}$ and $X_t,\cdots,X_\tau$ respectively. Instead of point estimates, we rely on interval estimates to achieve a predetermined control over the error.
Suppose that we have some confidence interval (CI) procedure  $\operatorname{CI}(X_1,\cdots,X_n;1-c)$ that constructs CI using $\{X_i\}_{i=1}^n$ for $\theta_0$ with coverage $1-c$ and a confidence sequence (CS) procedure $\{\operatorname{CS}(X_1,\cdots,X_n;1-c)\}_n$, that constructs CS using $\{X_i\}_{i=1}^n$ for $\theta_1$ having coverage $1-c$. 

We construct $\mathcal{S}_{t-1}=\operatorname{CI}(X_1,\cdots,X_{t-1};1-\gamma r_t^*)$ and $\mathcal{S}^\prime_{\tau-t+1}=\operatorname{CS}(X_t,\cdots,X_{\tau};1-\beta r_t^*)$, where 
$r_{t}^*$ is computed under \Cref{assmp-1}.

For each $j=1,\cdots,B$, for some fixed $B\in\N$, suppose we have sequences $\{X^j_n(\theta,t,\theta^\prime)\}_n$ having joint distribution as $\P_{\theta,t,\theta^\prime}$. Note that the sequences must be independent as $j$ varies. However, for any fixed $j$, the i.i.d. sequences $\{X^j_n(\theta,t,\theta^\prime)\}_n$ can be dependent across $t$ and  $\theta\in\mathcal{S}_{t-1},\theta\in\mathcal{S}^\prime_{\tau-t+1}$.  A specific method for generating these sequences is detailed in Section D of the Supplementary. We draw the data until a change is detected at the stopping time $\tau^{j,t}_{\theta,\theta^\prime}$ using $\mathcal{A}$ or until time $L$, whichever occurs first. Then, we compute the same test statistics truncated at $L$, based on $\tau^{j,t}_{\theta,\theta^\prime}$ and $X^j_1(\theta,t,\theta^\prime),\cdots,X^j_{\tau^{j,t}_{\theta,\theta^\prime}\wedge L}(\theta,t,\theta^\prime)$, which is denoted as $M^j_{t,L}(\theta,\theta^\prime)$.

 Similar to \eqref{eq:conf-comp-post}, the adaptive confidence set is defined as
\begin{equation}
\label{eq:conf-comp}
\mathcal{C}_1=\left\{t\in \N:t\leq \tau, M_t\leq\sup_{(\theta, \theta^\prime)\in \mathcal{\mathbf S}^{(t)}}\operatorname{Quantile}(1-{\alpha}r_{t}^*;M_t,\{M_{t,L}^j(\theta,\theta^\prime)\}_{j=1}^B)\right\}
\end{equation}
where $\mathcal{\mathbf S}^{(t)}=\mathcal{S}_{t-1}\times\mathcal{S}^\prime_{\tau-t+1}$.
Algorithm~\ref{algo:comp} contains an overview.
If the detection algorithm controls PFA, a simpler algorithm can be used, which avoids \Cref{assmp-1} and provides coverage guarantees, as elaborated in Section F of the Supplementary.

As before, one can always adopt a grid-based approach, which is conceptually straightforward, but it is computationally intensive; corresponding experimental results are reported in Section F of the Supplementary.
Additionally, in many cases, we can derive a suitable upper bound of this threshold, which is computationally efficient and exact, but that can be a bit more conservative. We provide the details in Section D of the Supplementary, along with a concrete example.

\begin{algorithm}[h!]
\caption{CI for parametric pre-change and post-change}
\label{algo:comp}
\begin{algorithmic}[1]
\Require $\alpha, \beta, \gamma, \theta_0^*$, $N, B, L$, $\mathcal{A}$, $\operatorname{CI(data;coverage)}$, $\operatorname{CS(data;coverage)}$, and data $\{X_n\}_n$ until a change is detected at $\tau$ using $\mathcal{A}$
\Ensure A confidence set $\mathcal{C}$ for changepoint $T$
\State $\mathcal{C} \gets \varnothing$
\For{$i = 1$ to $N$}
    \State Draw $\{X_n^i\}_n \stackrel{\text{iid}}{\sim} F_{\theta_0^*}$, until $\mathcal{A}$ detects at $\tau_i^*$  or until $\tau$, whichever happens first.
\EndFor
\For{$t = 1$ to $\tau$}
    \State Compute some test statistics $M_t$ based on $X_1,\cdots,X_\tau$ 
    \State Compute $r_t^* = \sum_{i=1}^N \mathds{I}(\tau_i^* \geq t)/N$
    \State Find a confidence interval for $\theta_0$: $\mathcal{S}_{t-1} = \operatorname{CI}(X_1,\cdots,X_{t-1}; 1 - \gamma r_t^*)$
    \State Find a confidence sequence for $\theta_1$: $\mathcal{S}^\prime_{\tau - t + 1} = \operatorname{CS}(X_t,\cdots,X_{\tau}; 1 - \beta r_t^*)$
    \For{$j = 1$ to $B$}
        \State \underline{Subroutine 1:}
        \For{each $(\theta, \theta^\prime) \in \mathcal{S}_{t-1} \times \mathcal{S}^\prime_{\tau - t + 1}$}
            \State Simulate $\{X_n^j(\theta,t,\theta^\prime)\}_n$ following $\mathbb{P}_{\theta,t,\theta^\prime}$ (independently for all $j$) until its detection time $\tau^{j,t}_{\theta,\theta^\prime}$ using $\mathcal{A}$ or until time $L$, whichever happens first
            \State Compute $M_{t,L}^j(\theta,\theta^\prime)$ based on $\tau^{j,t}_{\theta,\theta^\prime}$, $X_1^j(\theta,t,\theta^\prime), \cdots, X_{\tau^{j,t}_{\theta,\theta^\prime} \wedge L}^j(\theta,t,\theta^\prime)$
        \EndFor
    \EndFor
    \State \underline{Subroutine 2:}
    \If{$M_t \leq \sup_{(\theta, \theta^\prime) \in \mathcal{S}_{t-1} \times \mathcal{S}^\prime_{\tau - t + 1}} \operatorname{Quantile}(1 - \alpha r_t^*; M_t, \{M_{t,L}^j(\theta,\theta^\prime)\}_{j=1}^B)$}
        \State $\mathcal{C} \gets \mathcal{C} \cup \{t\}$
    \EndIf
\EndFor
\end{algorithmic}
\end{algorithm}
\begin{theorem}
\label{thm:coverage-comp} 
   Let $\mathcal{C}_1$ be the confidence sets for changepoint $T\in\N$ defined in \eqref{eq:conf-comp}, constructed using any data-independent unbiased estimator $r_t^*$ of $\mathbb P_{\theta_0^*,\infty}[\tau \geq t]$, for some $\alpha,\beta,\gamma\in(0,1)$, $N,B\in\mathbb N$, $L\in\N \cup \{\infty\}$. For any $\theta_0\in\Theta_0, \theta_1\in\Theta_1$, and any change detection algorithm $\mathcal{A}$ satisfying \Cref{assmp-1} and  having $\mathbb \P_{\theta_0,\infty}(\tau\geq T)\neq 0$, the following  guarantee holds:
$\P_{\theta_0,T,\theta_1} (T\in \mathcal{C}_1\mid \tau\geq T)\geq 1-\alpha-\beta-\gamma.$
\end{theorem}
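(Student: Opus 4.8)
The plan is to invoke the duality of \Cref{prop:dual}: it suffices to exhibit the test $\phi^{(t)} = \mathds{1}(t \notin \mathcal{C}_1)$ as obeying the level bound \eqref{eq:invert} with total level $\alpha+\beta+\gamma$, i.e.\ to show that for every $t \in \N$, $\theta_0 \in \Theta_0$ and $\theta_1 \in \Theta_1$,
\[
\P_{\theta_0,t,\theta_1}(t \notin \mathcal{C}_1) \le (\alpha+\beta+\gamma)\,\P_{\theta_0,\infty}(\tau \ge t).
\]
Since $r_t^*$ is data-independent, I would first condition on its value, bound the left-hand side by $(\alpha+\beta+\gamma)\,r_t^*$, then take expectations and use unbiasedness to turn $\mathbb{E}[r_t^*]$ into $\P_{\theta_0^*,\infty}(\tau \ge t)$, and finally apply \Cref{assmp-1}, which gives $\P_{\theta_0^*,\infty}(\tau \ge t) \le \inf_{\theta}\P_{\theta,\infty}(\tau \ge t) \le \P_{\theta_0,\infty}(\tau \ge t)$. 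This follows the same skeleton as the proof of \Cref{thm:coverage-comp-post}, the only genuinely new ingredient being the extra $\gamma$-budget spent on interval-estimating the pre-change parameter $\theta_0$.

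The conditional bound decomposes along a \emph{good event}. Under $\P_{\theta_0,t,\theta_1}$ the true changepoint is exactly $t$, so $X_1,\dots,X_{t-1}$ are i.i.d.\ $F_{\theta_0}$ and $X_t,X_{t+1},\dots$ are i.i.d.\ $F_{\theta_1}$. The fixed-sample guarantee of $\operatorname{CI}(\cdot;1-\gamma r_t^*)$ gives $\P(\theta_0 \notin \mathcal{S}_{t-1}) \le \gamma r_t^*$, while the \emph{anytime-valid} guarantee \eqref{eq:cs-theta1} of the confidence sequence (essential precisely because $\tau$ is a random stopping time, where an ordinary fixed-sample CI would fail) gives $\P(\theta_1 \notin \mathcal{S}'_{\tau-t+1}) \le \beta r_t^*$. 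Writing $G_t := \{\theta_0 \in \mathcal{S}_{t-1}\} \cap \{\theta_1 \in \mathcal{S}'_{\tau-t+1}\}$, a union bound yields $\P(G_t^c) \le (\beta+\gamma)r_t^*$. On $G_t$ the true pair $(\theta_0,\theta_1)$ lies in $\mathbf{S}^{(t)} = \mathcal{S}_{t-1}\times\mathcal{S}'_{\tau-t+1}$, so the supremum threshold in \eqref{eq:conf-comp} dominates the threshold evaluated at the true $(\theta_0,\theta_1)$; hence rejection on $G_t$ forces $M_t$ to exceed $\operatorname{Quantile}(1-\alpha r_t^*; M_t, \{M_{t,L}^j(\theta_0,\theta_1)\}_{j=1}^B)$.

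It then remains to bound this last exceedance probability by $\alpha r_t^*$. The $B$ simulated streams $\{X_n^j(\theta_0,t,\theta_1)\}_n$ are drawn i.i.d.\ from $\P_{\theta_0,t,\theta_1}$, independently of the real stream, so the \emph{untruncated} statistics built from the real data and from these $B$ streams form an exchangeable collection, and the standard rank argument bounds $\P\big(M_t > \operatorname{Quantile}(1-\alpha r_t^*;\cdot)\big) \le \alpha r_t^*$. To handle the truncation at $L$, I would observe that $M_{t,L}^j$ equals the untruncated statistic $M_t^{j,\mathrm{full}}$ whenever the simulated detection time is at most $L$, is set to $+\infty$ when it exceeds $L$, and equals $-\infty$ (matching the full version) when $t$ exceeds the simulated detection time; hence $M_{t,L}^j \ge M_t^{j,\mathrm{full}}$ pointwise. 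Since inflating the entries of the list can only raise an empirical quantile, the truncated threshold dominates the untruncated one and the exceedance probability only decreases. Assembling $\P(t\notin\mathcal{C}_1) \le \P(\{t\notin\mathcal{C}_1\}\cap G_t) + \P(G_t^c) \le \alpha r_t^* + (\beta+\gamma)r_t^*$ and passing through the expectation and \Cref{assmp-1} completes the argument.

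The hard part will be this exchangeability step: one must verify that the real statistic $M_t$ (computed from data stopped at the genuine detection time $\tau$, with no $L$-truncation) plays exactly the role of the ``full'' version of the simulated statistics, so that exchangeability is not broken by the differing stopping rules, and that the $\pm\infty$ conventions in the definition of $M_{t,L}^j$ genuinely preserve the one-sided quantile inequality. The rest is careful bookkeeping of the three error budgets $\alpha$, $\beta$, $\gamma$ and the conversion from $\theta_0^*$ to $\theta_0$ via \Cref{assmp-1}.
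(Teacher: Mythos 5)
Your proposal is correct and follows essentially the same route as the paper's proof: the good-event decomposition over $\{\theta_0\in\mathcal{S}_{t-1}\}\cap\{\theta_1\in\mathcal{S}'_{\tau-t+1}\}$ with budgets $\gamma$ and $\beta$, the rank/exchangeability argument at the true pair $(\theta_0,\theta_1)$ for the $\alpha$ budget, the observation that $M_{t,L}^j\ge M_{t,\infty}^j$ so truncation only enlarges the threshold, and the conversion via unbiasedness of $r_t^*$ and \Cref{assmp-1}. The only cosmetic difference is that you package the final test-to-coverage conversion as an appeal to \Cref{prop:dual}, whereas the paper re-derives that chain of inequalities inline; the content is identical.
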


\begin{remark}
    As a special case, when pre- and post-change parameters are known, i.e., $\Theta_i=\{\theta_i\}$ are singleton sets, $i=0,1$, we have $\mathcal S=\{\theta_0\},\mathcal S^\prime=\{\theta_1\}$ with $\beta=0$. So, in this case, \Cref{algo:comp,algo:comp-post} reduces to our algorithm designed for the known pre- and post-change setting, and similarly, \Cref{thm:coverage-comp-post,thm:coverage-comp} reduces to \Cref{thm:exact-cond-coverage}.
\end{remark}

\section{Confidence set for pre- and post-change parameters}
\label{sec:ci-pre-post-para}

Beyond localizing the changepoint, another natural inferential goal is to estimate the pre- and post-change parameters, assuming that the pre- and post-change distributions follow some parametric composite models: $\mathcal{P}_i=\{F_{\theta}:\theta\in\Theta_i\}; i=0,1$. While our primary focus is on constructing a confidence set $\mathcal{C}$ for the changepoint $T$, we now outline how one can derive confidence sets for $\theta_0$ and $\theta_1$ using $\mathcal{C}$.

The key idea is as follows: once a valid confidence set $\mathcal{C} \subseteq \{1, \cdots, \tau\}$ for $T$ is constructed, this implicitly identifies a subset of data that is likely to have been generated under the pre- and post-change regimes. 

For estimating the post-change parameter, one natural idea might be to consider the samples starting from $\max\mathcal{C}$ till the detection time $\tau$. But since both starting and ending times are random here, this strategy does not yield valid coverage guarantees.
Instead, we fix each $t\in\mathcal{C}$ and construct a confidence sequence based on $X_{t},\cdots,X_{\tau}$ for $\theta_1$, denoted as $\operatorname{CS}(X_{t},\cdots,X_{\tau};1-\eta_1 r_t^*)$ (recall the definition in \eqref{eq:cs-theta1}), for some $\eta_1\in(0,1)$, whose value is less than the target non-coverage, $\alpha^\prime$. Finally, the union of these CS across $t\in\mathcal{C}$, yields the post-detection confidence interval for $\theta_1$:
 \begin{equation}
 \label{eq:ci-theta1}
\displaystyle\cup_{t\in\mathcal{C}}\operatorname{CS}(X_{t},\cdots,X_{\tau};1-\eta_1 r_t^*).
 \end{equation}

There is a natural tension between how quickly $\mathcal A$ detects a change from $\theta_0$ and the tightness of inference on $\theta_1$ --- if $\mathcal A$ stops very quickly, we will have very few data points with which to estimate $\theta_1$, leading to wider intervals. This is not a drawback of our methodology, but an inherent limitation of any post-detection inference method.

Analogously, for the pre-change parameter $\theta_0$, we fix each $t\in\mathcal{C}$ and construct a confidence interval for $\theta_0$ based on the data ${X_1, \ldots, X_{t-1}}$, denoted as
$\operatorname{CI}(X_{1},\cdots,X_{t-1};1-\eta_0 r_t^*)$, for some $\eta_0\in(0,1)$, chosen to be smaller than the target (conditional) non-coverage level, $\alpha^{\prime\prime}$. The final confidence set for $\theta_0$ is $\displaystyle\cup_{t\in\mathcal{C}}\operatorname{CI}(X_{1},\cdots,X_{t-1};1-\eta_0 r_t^*).$

Next, we formalize its coverage guarantee, which is nonasymptotic and agnostic to the detection algorithm, unlike existing methods.

\begin{theorem}
 \label{thm:coverage-theta0-theta1}
Let $\mathcal{C}$ be a confidence set, e.g., \eqref{eq:ci-nonpara} or \eqref{eq:conf-comp}, such that for any $\theta_0\in\Theta_0,\theta_1\in\Theta_1,T\in \N$, change detection algorithm $\mathcal{A}$ satisfying \Cref{assmp-1} and  $\mathbb \P_{\theta_0,\infty}(\tau\geq T)\neq 0$, we have: $\P_{\theta_0,T,\theta_1}(T\in \mathcal{C}\mid \tau\geq T)\geq 1-\alpha$. Then, for any $T\in\N$,
\begin{align*}
    &\P_{\theta_0,T,\theta_1}\left(\theta_0\in \displaystyle\cup_{t\in\mathcal{C}}\operatorname{CI}(X_{1},\cdots,X_{t-1};1-\eta_0 r_t^*)\mid \tau\geq T\right)\geq 1-\alpha-\eta_0, \text{ and } \\
&~~~~\P_{\theta_0,T,\theta_1}\left(\theta_1\in \displaystyle\cup_{t\in\mathcal{C}}\operatorname{CS}(X_{t},\cdots,X_{\tau};1-\eta_1 r_t^*)\mid \tau\geq T\right)\geq 1-\alpha-\eta_1.
\end{align*}
 \end{theorem}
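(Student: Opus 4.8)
The plan is to prove both assertions through the same two-event decomposition, carrying out the post-change claim in full and observing that the pre-change claim follows by an identical route. Fix $T\in\N$, $\theta_0\in\Theta_0$, $\theta_1\in\Theta_1$. For the post-change parameter, introduce the event $A=\{T\in\mathcal{C}\}$ that the true changepoint lies in the confidence set, and the event $B=\{\theta_1\in\operatorname{CS}(X_T,\cdots,X_\tau;1-\eta_1 r_T^*)\}$ that the single confidence sequence indexed by the true value $t=T$ already covers $\theta_1$. The structural observation driving everything is that on $A\cap B$ the term $t=T$ belongs to the union $\cup_{t\in\mathcal{C}}\operatorname{CS}(X_t,\cdots,X_\tau;1-\eta_1 r_t^*)$ and contains $\theta_1$; hence $\{\theta_1\in\cup_{t\in\mathcal{C}}\operatorname{CS}\}\supseteq A\cap B$. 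Passing to complements and applying a union bound gives $\P(\theta_1\in\cup_{t\in\mathcal{C}}\operatorname{CS}\mid\tau\geq T)\geq 1-\P(A^c\mid\tau\geq T)-\P(B^c\mid\tau\geq T)$.

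The first term is controlled directly by the theorem's hypothesis on $\mathcal{C}$, namely $\P(A^c\mid\tau\geq T)=\P_{\theta_0,T,\theta_1}(T\notin\mathcal{C}\mid\tau\geq T)\leq\alpha$. The substance of the argument is bounding the second term by $\eta_1$. First I would note that $\{\tau\geq T\}$ is $\sigma(X_1,\cdots,X_{T-1})$-measurable since $\tau$ is a stopping time, so under $\P_{\theta_0,T,\theta_1}$ it depends only on pre-change data and $\P_{\theta_0,T,\theta_1}(\tau\geq T)=\P_{\theta_0,\infty}(\tau\geq T)$. Next, on $\{\tau\geq T\}$ the sequence $X_T,X_{T+1},\cdots$ is i.i.d.\ from $F_{\theta_1}$, so $B^c\cap\{\tau\geq T\}$ is contained in the uniform non-coverage event $\{\exists\,n\geq 1:\theta_1\notin\operatorname{CS}(X_T,\cdots,X_{T+n-1};1-\eta_1 r_T^*)\}$. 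Because $r_T^*$ is data-independent, conditioning on it and invoking the confidence-sequence guarantee \eqref{eq:cs-theta1} bounds this event by $\eta_1 r_T^*$ pointwise, hence by $\eta_1\mathbb{E}[r_T^*]=\eta_1\P_{\theta_0^*,\infty}(\tau\geq T)$ after taking expectation and using unbiasedness. Dividing by $\P_{\theta_0,\infty}(\tau\geq T)$ and invoking \Cref{assmp-1}, which yields $\P_{\theta_0,\infty}(\tau\geq T)\geq\P_{\theta_0^*,\infty}(\tau\geq T)$, gives $\P(B^c\mid\tau\geq T)\leq\eta_1$ and completes the post-change bound.

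For the pre-change parameter the argument is structurally identical with $B'=\{\theta_0\in\operatorname{CI}(X_1,\cdots,X_{T-1};1-\eta_0 r_T^*)\}$. Here the sample size $T-1$ is deterministic, so ordinary confidence-interval coverage (rather than a confidence-sequence bound) is all that is needed. The one difference to flag is that $B'$ and $\{\tau\geq T\}$ both depend on the same pre-change data $X_1,\cdots,X_{T-1}$ and may be dependent; I would sidestep this by the crude but sufficient inclusion $B'^c\cap\{\tau\geq T\}\subseteq B'^c$, bounding $\P(B'^c)\leq\eta_0\mathbb{E}[r_T^*]$ exactly as before and dividing by $\P_{\theta_0,\infty}(\tau\geq T)$.

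I expect the main obstacle to be the careful bookkeeping around the conditioning event $\{\tau\geq T\}$: one must simultaneously reduce it to a pre-change-measurable event in order to evaluate its probability under $\P_{\theta_0,T,\theta_1}$, avoid any spurious dependence between the CI/CS coverage event and $\{\tau\geq T\}$ by passing to joint-probability upper bounds, and correctly exploit the data-independence and unbiasedness of $r_T^*$ together with \Cref{assmp-1} so that the estimated coverage level $\eta_1 r_T^*$ cancels against the denominator $\P_{\theta_0,\infty}(\tau\geq T)$. Once these three points are dispatched cleanly, the remaining steps are routine.
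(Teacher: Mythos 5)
Your proposal is correct and follows essentially the same route as the paper's proof: decompose the non-coverage event via $\{T\in\mathcal{C}\}$ and the coverage of the single confidence sequence/interval at the true $t=T$, bound the conditional probability by the unconditional one divided by $\P_{\theta_0,\infty}(\tau\geq T)$, and then use the independence and unbiasedness of $r_T^*$ together with \Cref{assmp-1} to cancel $\P_{\theta_0^*,\infty}(\tau\geq T)$ against the denominator. The only differences are presentational (you phrase the decomposition as a union bound on $A^c\cup B^c$ rather than adding the two bad-event probabilities directly), so nothing further is needed.
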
 
 \section{Experiments} 
 We validate our methods across various simulation and real-data settings, demonstrating their practicality.

\label{sec:expt}
\textbf{Known pre- and post-change (Setting I):}
\label{setting-i}
Here, we draw $X_1,\cdots,X_{T-1}\stackrel{iid}{\sim}N(0,1)$ and $X_T,X_{T+1},\cdots\stackrel{iid}{\sim}N(1,1)$, with $T$ being the only unknown parameter. We employ the CUSUM detector,
$\tau=\inf\left\{n\in\mathbb N:\max_{1\leq j\leq n}\prod_{i=j}^n\frac{f_1(X_i)}{f_0(X_i)}\geq A\right\},$ with $A=1000$.
After detection, we construct both the proposed confidence sets (universal  \eqref{eq:ci-nonpara-known-pre} and adaptive \eqref{eq:ci-simple}), with $B=100, N=50$, $\alpha=0.1$.  \Cref{fig:ci-normal} visualizes the confidence sets \eqref{eq:ci-simple} across $5$ runs.  \Cref{tab:simple} has the (conditional on the event $\tau\geq T$) average size and coverages of the confidence sets and delay of detection, i.e, $\tau-T$, across $500$ runs.

 \begin{figure}[h!]
\centering
\centering
\subfloat[\scriptsize Data till change is detected, $T=100$]{\includegraphics[width=0.5\linewidth,height=0.32\linewidth]{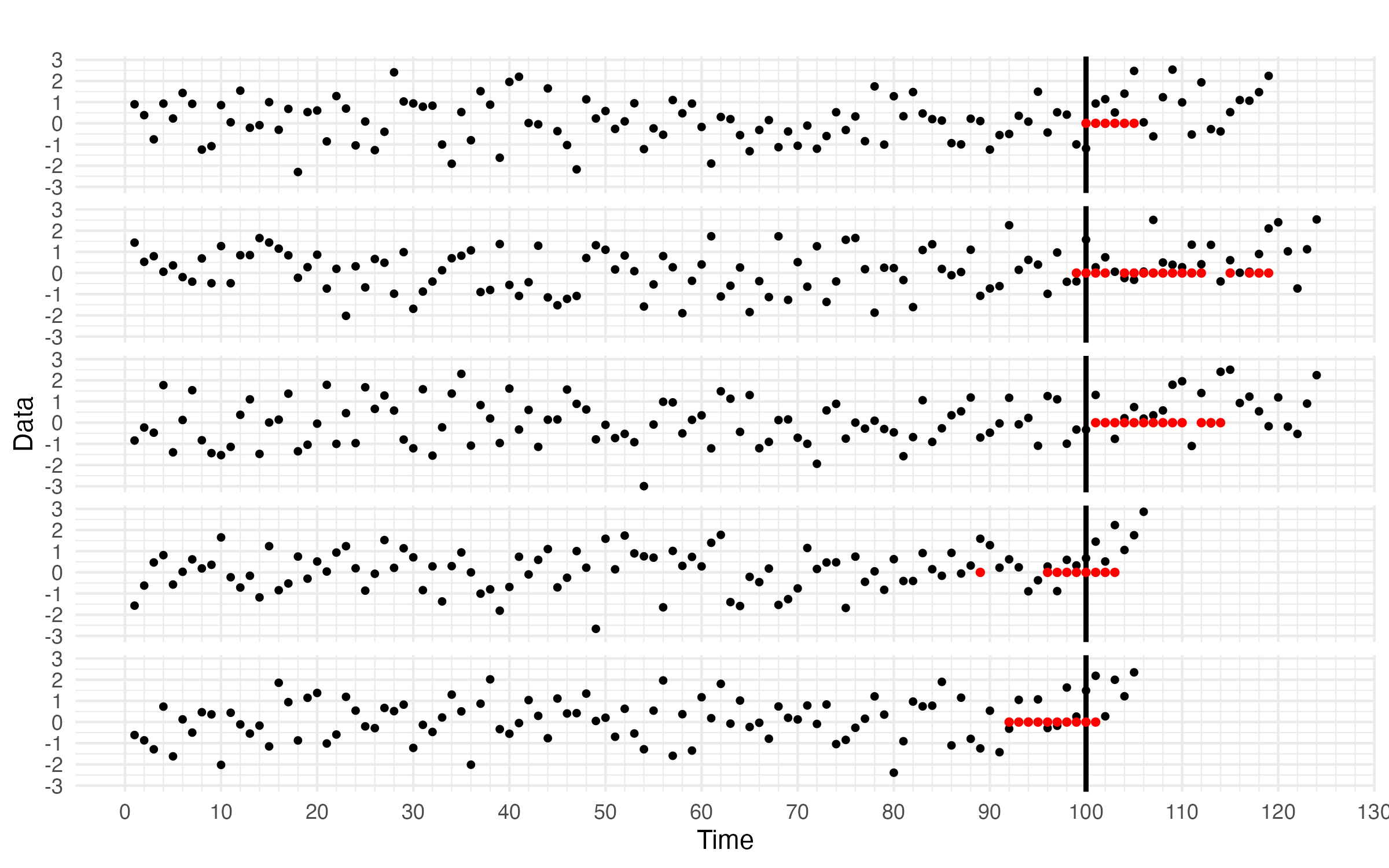}} 
\subfloat[\scriptsize Data till change is detected, $T=500$]{\includegraphics[width=0.5\linewidth,height=0.32\linewidth]{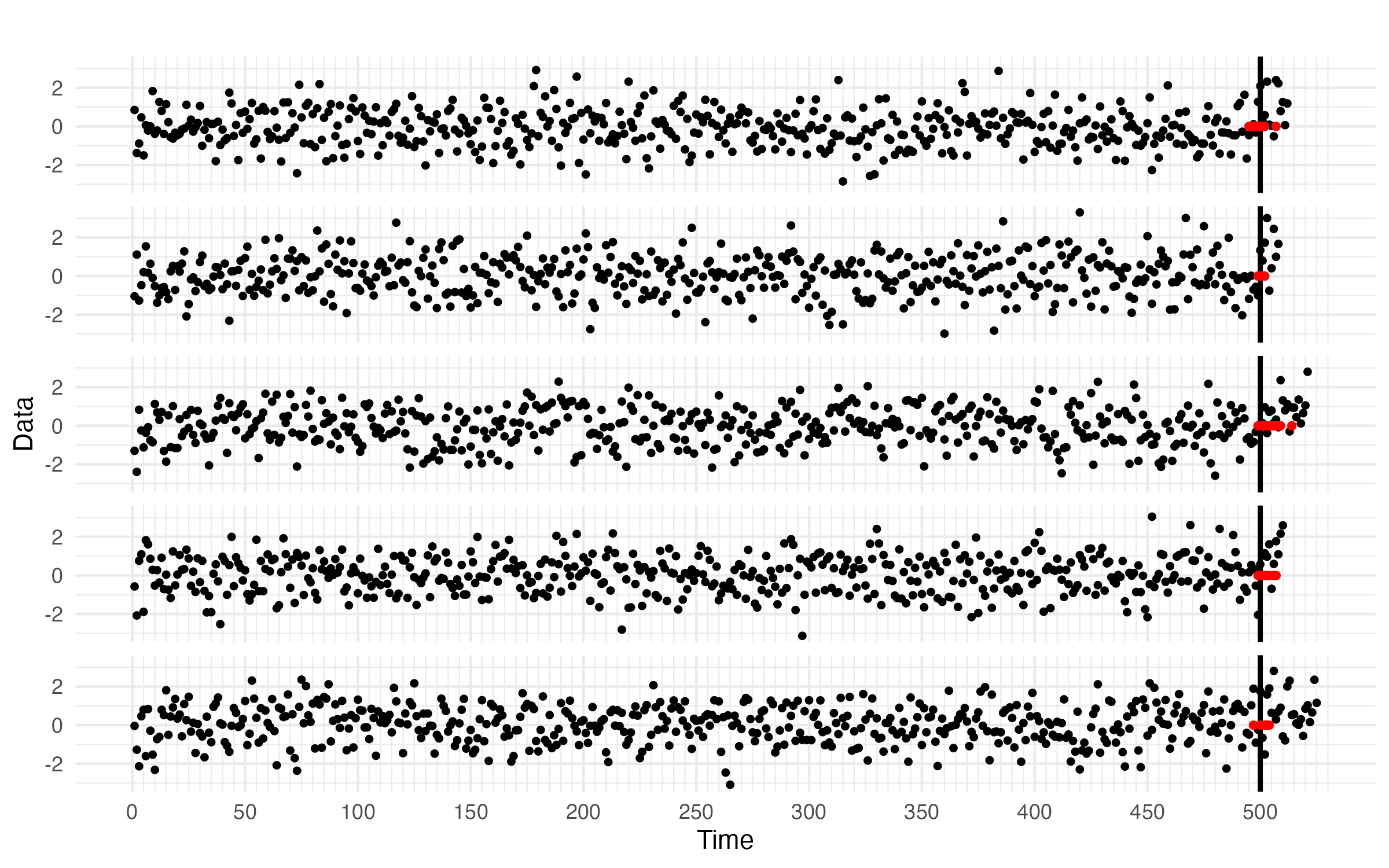}} 
\caption[]{ Setting I. The first $T-1$ observations are drawn from $N(0,1)$ and the rest from $N(1,1)$. The confidence sets (adaptive \eqref{eq:ci-simple}) are shown in red points, with $B=N=100$, $\alpha=0.1, L=\infty$. Results of $5$ independent simulations are shown.} 
\label{fig:ci-normal}  
\end{figure}
  \begin{table}[!ht]
    \centering
    \caption{Setting I: Known pre- and post-change.
  }
\label{tab:simple}
    \resizebox{0.75\linewidth}{!}{
    \begin{tabular}{cc|cccc|c}
    \toprule
    \addlinespace
$T$ & A &\multicolumn{2}{c}{Conditional Coverage} &
\multicolumn{2}{c}{Conditional Size} & \specialcell{Delay} \\
\cmidrule(lr){3-4} \cmidrule(lr){5-6}  &
&  Universal &  Adaptive &   Universal &  Adaptive & (Conditional)  \\
\midrule
\addlinespace 
100 & 1000&   0.948 &  0.906  & 12.412  & \textbf{9.694}  &  12.803 \\
500 & 1000  & 0.942  & 0.908 & 12.176& \textbf{9.278} & 13.227\\
\bottomrule
 \end{tabular}}
 \end{table}

\textbf{Known pre- and parametric post-change (Setting II):}
\label{setting-ii}
We perform experiments with $F_0=N(0,1)$ (known pre-change), post-change class $\mathcal{P}_1=\{N(\theta,1):\theta\in\Theta_1\}$, ; we vary $\Theta_1$ and $T$. The post-change samples are drawn from $N(1,1)$ (unknown).
For detection, we use the CUSUM detector
 with $A=1000$, a discrete weight distribution taking values
$\theta_1=\theta,\theta_2=\theta+d,\theta_3=\theta+2d,\cdots,\theta_{10}=\theta+9d$, $d=0.2$, $\theta=0.75$, when $\mathcal{P}_1=\{N(\mu,1):\mu> \theta\}$, with exponentially decaying weights $w_i=e^{-\frac{i-1}{2}}-e^{-\frac{i}{2}},\text{ for } i=1,\cdots,9, \text{ and } w_{10}=e^{-9/2}$, which appeared to us to be a sensible discretized mixture over $\mathcal{P}_1$. 
After detection, we construct both the proposed confidence sets for $T$ (universal  \eqref{eq:ci-nonpara-known-pre} with $\alpha=0.075$ and adaptive \eqref{eq:conf-comp-post}\footnote{Implementation details of the adaptive method in Setting II (and Setting III) are the same as discussed in Section C.1 (and Section D.1) of the Supplementary.} with $\alpha=0.1,\beta=0.025$) with target conditional coverage $0.925$.
For implementing \eqref{eq:ci-nonpara-known-pre} , we use  $R_n^{(t)}=\prod_{i=t}^{n} \frac{f_{\theta_0}(X_i)}{f_{\theta_1^*}(X_i)},$ where ${\theta_1^*}$ is the closest element of $\Theta_0$ from $\Theta_1$, and $
   S_n^{(t)}= \int_{\theta\in\Theta_1}\prod_{i=n}^{t-1} \frac{f_\theta(X_i)}{f_{\theta_0}(X_i)}w(\theta)d\theta$, with $w$ being the same weight mentioned above.
In \Cref{tab:comp-post}, we report the results averaged across $500$ runs.
 \Cref{fig:ci-comp}(a)  shows the confidence sets  \eqref{eq:ci-nonpara-known-pre} across $5$ random runs.

\begin{table}[!ht]
    \centering
    \caption{Known pre-change but unknown post-change. The pre-change parameter $\theta_0=0$ is known, the true (unknown) post-change parameter is $\theta_1=1$, while the algorithm only knows $\theta_1\in\Theta_1$, which we mention in the second column. (Setting II)
    }
\label{tab:comp-post}
    \resizebox{0.8\linewidth}{!}{
    \begin{tabular}{cc|cccc|c}
    \toprule
    \addlinespace
$T$ & $\Theta_1$ &\multicolumn{2}{c}{Conditional Coverage} &
\multicolumn{2}{c}{Conditional Size} &  \specialcell{Delay} \\
\cmidrule(lr){3-4} \cmidrule(lr){5-6}  &
&  Universal &  Adaptive &   Universal &  Adaptive &(Conditional) \\
\midrule
\addlinespace 
100 & $[0.75,\infty)$   & 0.972  & 0.966 & 16.870 & 14.824   & 13.878\\
100 &$[0.9,\infty)$  & 0.976  & 0.972 & 12.860 & 11.552  & 12.212\\
500 & $[0.75,\infty)$ &  0.969  & 0.968 & 16.168  &  14.266 & 13.638\\
500 &  $[0.9,\infty)$ &  0.972  & 0.971 & 14.734  & 12.602 & 12.814\\
\bottomrule
 \end{tabular}}
 \end{table}

\textbf{Composite parametric pre- and post-change (Setting III):}
\label{setting-iii} 
Here, we draw $X_1,\cdots,X_{T-1}\stackrel{iid}{\sim}N(0,1)$ and $X_T,X_{T+1},\cdots\stackrel{iid}{\sim}N(1,1)$.
Now we conduct experiments for pre-change class $\mathcal{P}_0=\{N(\theta,1):\theta\in\Theta_0\}$, post-change class $\mathcal{P}_1=\{N(\theta,1):\theta\in\Theta_1\}$; we vary $\Theta_0,\Theta_1$ and $T$. 
For detection, we employ a weighted CUSUM-type detector
\begin{equation}
\label{eq:wcusum-ripr}
    \tau_{\text{wcs-ripr}}=\inf\left\{n\in\mathbb N:\max_{1\leq j\leq n}\int_{\theta\in\Theta_1}\prod_{i=j}^n\frac{f_{\theta}(X_i)}{f_{\theta^*_0}(X_i)}dW(\theta)\geq A\right\},
\end{equation}
with $A=1000$, ${\theta^*_0}$ is the ``closest'' element of $\Theta_0$ from $\Theta_1$, and for the post-change parameter, the same discrete weight distribution as in setting II is considered.
We define $\hat \theta_{0,1:j}$ to be the MLE of  $\theta_0\in \Theta_0$ based on data $X_1,\cdots,X_{j}$.
After detection, we construct both the proposed confidence sets (universal \eqref{eq:ci-nonpara} with $\alpha=0.1$ and adaptive \eqref{eq:conf-comp} with $\alpha=0.05, \beta=\gamma=0.025$) for $T$, with $N=B=100$, both having target conditional coverage $0.9$. For implementing \eqref{eq:ci-nonpara}, we use  $R_n^{(t)}=\int_{\theta\in\Theta_0}\prod_{i=t}^{n} \frac{f_{\theta}(X_i)}{f_{\theta_1^*}(X_i)}w^\prime(\theta)d\theta,$  with $w^\prime(\theta)$ being the discrete distribution taking values
$\theta_1=\theta,\theta_2=\theta-d,\theta_3=\theta-2d,\cdots,\theta_{10}=\theta-9d$, $d=0.2$, when $\mathcal{P}_1=\{N(\mu,1):\mu> \theta\}$, with exponentially decaying weights $w_i=e^{-\frac{i-1}{2}}-e^{-\frac{i}{2}},\text{ for } i=1,\cdots,9, \text{ and } w_{10}=e^{-9/2}$ and $
   S_n^{(t)}= \int_{\theta\in\Theta_1}\prod_{i=n}^{t-1} \frac{f_\theta(X_i)}{f_{\theta_0^*}(X_i)}w(\theta)d\theta$, with $w(\theta)$ being the same as mentioned in Setting II. $\theta_i^*$ is the closest element of $\Theta_i$ from the other space. \Cref{fig:ci-comp}(b) visualizes the confidence sets \eqref{eq:ci-nonpara} across 5 random runs. \Cref{tab:comp} reports the results averaged across $500$ runs. 

\begin{figure*}[!ht]
\centering
\centering
\subfloat[\scriptsize Setting II. $F_0=N(0,1)$, $\mathcal{P}_1=\{N(\mu,1):\mu\geq 0.9\}$]{\includegraphics[width=0.5\linewidth,height=0.32\linewidth]{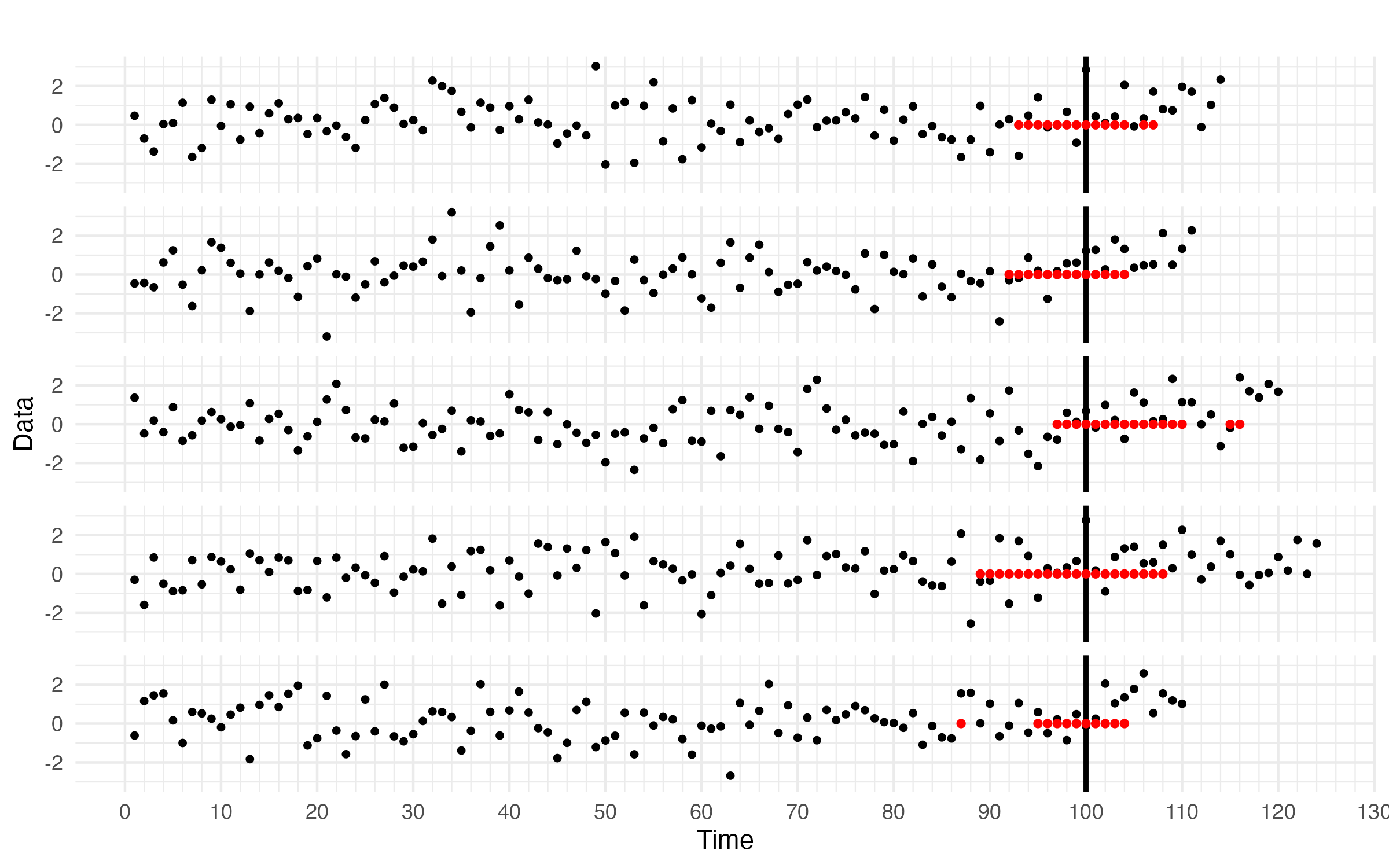}} 
\subfloat[\scriptsize Setting III. $\mathcal{P}_0=\{N(\mu,1):\mu\leq 0.1\}$, $\mathcal{P}_1=\{N(\mu,1):\mu> 0.9\}$]{\includegraphics[width=0.5\linewidth,height=0.32\linewidth]{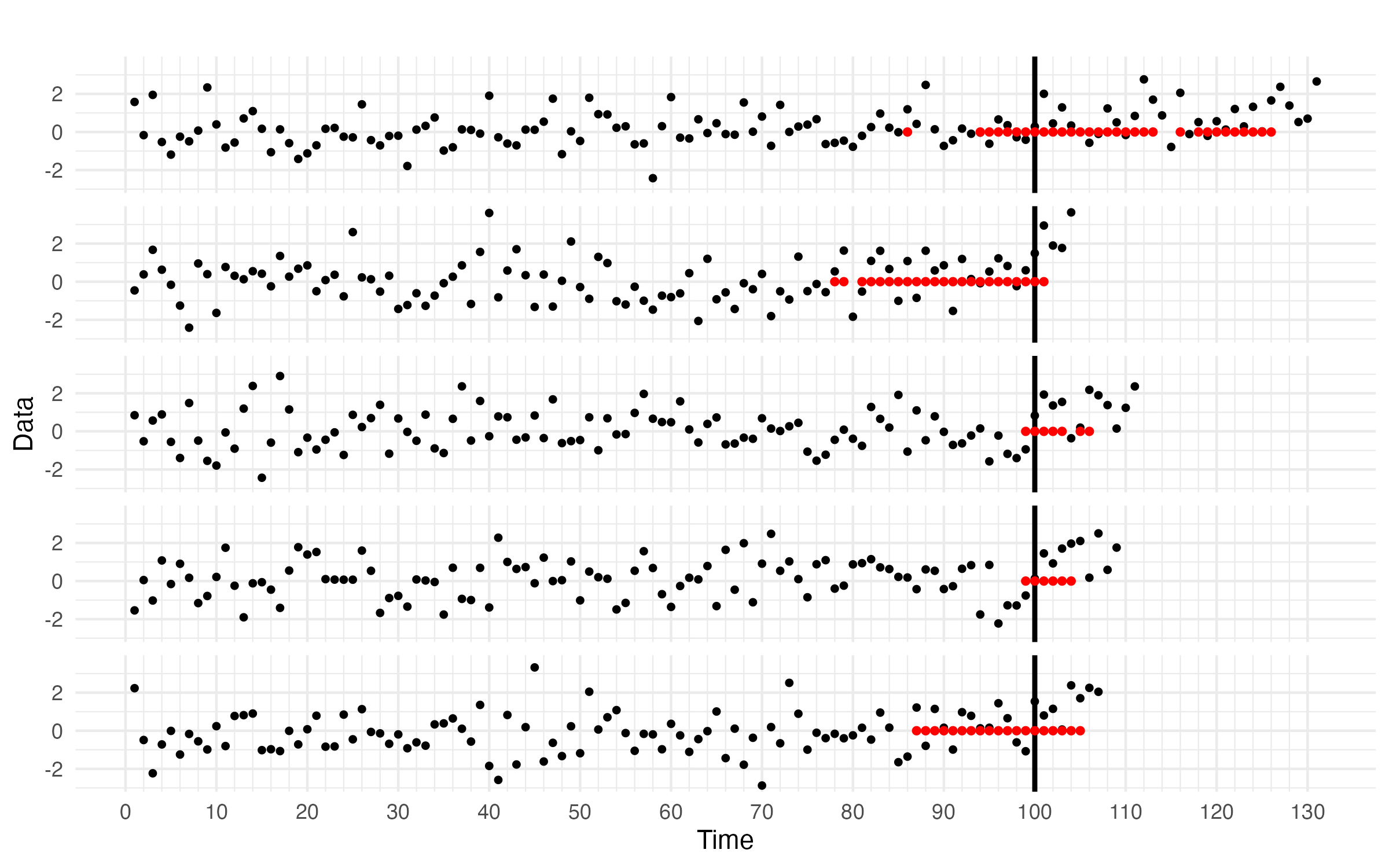}} 
\caption[]{  First $T-1$ samples are drawn from $N(0,1)$ and the remaining samples from $N(1,1), T=100$.
 Confidence sets (universal \eqref{eq:ci-nonpara}) are shown in red points. $N=100$, $\alpha=0.1$. 
 Results of $5$ random simulations are shown.
 } 
\label{fig:ci-comp}  
\end{figure*}

 \begin{table}[!ht]
    \centering
    \caption{Confidence interval for $T$ (Setting III): unknown pre- and post-change. The true (unknown) values of the parameters are $\theta_0=0$, $\theta_1=1$, while the algorithm only knows $\theta_i\in\Theta_i, i=0,1$, which are mentioned in the second and third columns. 
    }
\label{tab:comp}
   \resizebox{0.9\linewidth}{!}{
    \begin{tabular}{ccc|cccc|c}
    \toprule
    \addlinespace
$T$ & $\Theta_0$  &  $\Theta_1$ &\multicolumn{2}{c}{Conditional Coverage} &
\multicolumn{2}{c}{Conditional Size}  & \specialcell{Delay} \\
\cmidrule(lr){4-5} \cmidrule(lr){6-7}  &
& & Universal &  Adaptive &   Universal &  Adaptive & (Conditional)  \\
\midrule
\addlinespace 
100 &  $(\infty, 0.25]$ &  $[0.75,\infty)$   & 0.984 & 0.982 & 23.683   &   23.827 &  22.810\\
100 &  $(\infty,  0.1]$ & $[0.9,\infty)$ & 0.968 & 0.978 & 15.552 & 15.294 & 15.282\\
500 & $(\infty, 0.25]$ &  $[0.75,\infty)$ &   0.982 & 0.976 & {23.825} & 24.482 &  24.060\\
500 & $(\infty,  0.1]$ & $[0.9,\infty)$ &   0.970 & 0.942 & {16.860} & 16.956 & 15.576\\
\bottomrule
 \end{tabular}}
 \end{table}

\textbf{Composite parametric pre- and nonparametric post-change (Setting IV):}
Here, we consider the true (unknown) pre- and post-change distributions to be $\text{N}(1,1)$ and $\text{U}[-1.2,0.8]$ respectively. Let the pre-change class be 
$\mathcal{P}_0 = \left\{ \text{N}(\theta,1): \theta \geq \mu_1 \right\}, \mu_1=0.75$ and the post-change class be the set of all SubGaussians with a nonpositive mean:
$\mathcal{P}_1 = \left\{ F \in  \mathcal{M}: \mathbb{E}_F[e^{\lambda X}] \leq e^{-\frac{\lambda^2}{2}}, \forall \lambda\in[0,\infty)\right\}$, $\mathcal{M}$ is the set of all probability measures on $(\mathbb R,\mathcal{B}(\mathbb R))$.
We use an e-detector \citep{shin2022detectors}, $$\tau_e^\prime=\inf\left\{n\in\mathbb N:\max_{1\leq t\leq n} \prod_{i=t}^{n} e^{(\nu_i-\mu_1)X_i-\nu_i^2/2+\mu_1^2/2}\geq A\right\},$$ with $A=1000$,
$\nu_i=\min\{0,\frac{1}{i-1}\sum_{j=1}^{i-1}X_j\}.$ This $\mathcal{P}_0$ and $\tau_e^\prime$ satisfies \Cref{assmp-1} with $P_0^*=\text{N}(\mu_1,1)$. After detection, we construct the confidence sets \eqref{eq:ci-nonpara}.
The forward $t$-delay e-process we use is $\{R^{(t)}_n\}_{n\geq t}=\prod_{i=t}^ne^{\lambda_iX_i-\lambda_i^2/2}$, where $\lambda_i=\max\{\mu_1,\frac{1}{i-t}\sum_{j=t}^{i-1}X_i\}$ and the backward e-processes is constructed similarly: $\{S^{(t)}_n\}_{n < t}=\prod_{i=n}^te^{(\eta_i-\mu_1)X_i-\eta_i^2/2+\mu_1^2/2}$, where $\eta_i=\min\{0,\frac{1}{t-i}\sum_{j=i+1}^{t}X_j\}$ (see, e.g., Ch. 6.7.5 of \cite{ramdas2024hypothesis}).
 \Cref{tab:nonpara} contains the results across $500$ runs and \Cref{fig:ci-nonpara} visualizes across $5$ runs.
\begin{figure*}[!ht]
\centering
\centering
\subfloat[\scriptsize Setting IV. First $T-1$ samples are drawn from $N(1,1)$ and the remaining samples are from $U(-1.2,0.8)$.]{\includegraphics[width=0.5\linewidth,height=0.32\linewidth]{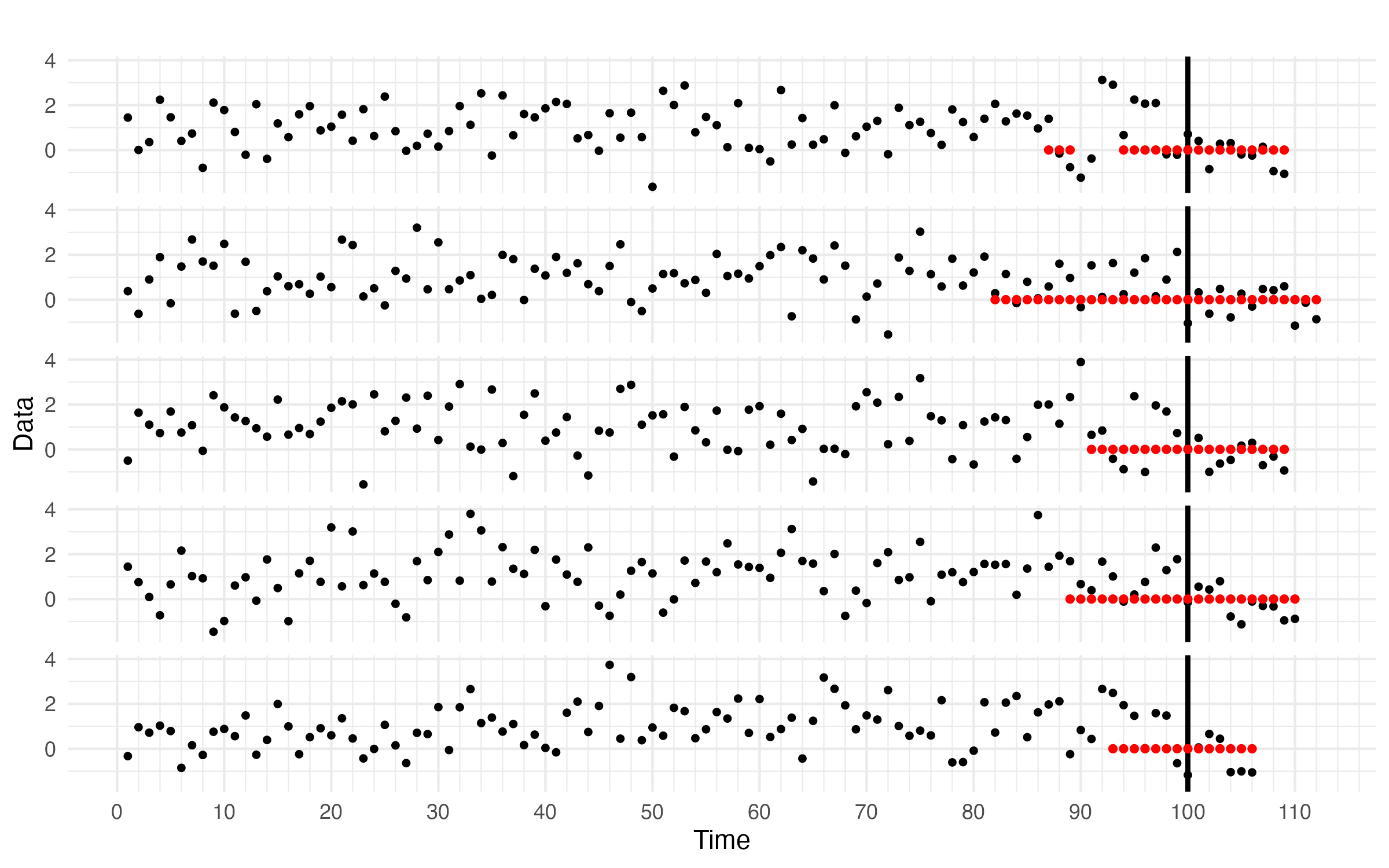}} 
\subfloat[\scriptsize Setting V. First $T-1$ samples are drawn from  $(1-\epsilon)\times N(0,1)+\epsilon \times \text{Cauchy}(-1,10)$ and the remaining samples are from $(1-\epsilon)\times N(1,1)+\epsilon\times  \text{Cauchy}(-1,10)$, $\epsilon=0.01$.]{\includegraphics[width=0.5\linewidth,height=0.32\linewidth]{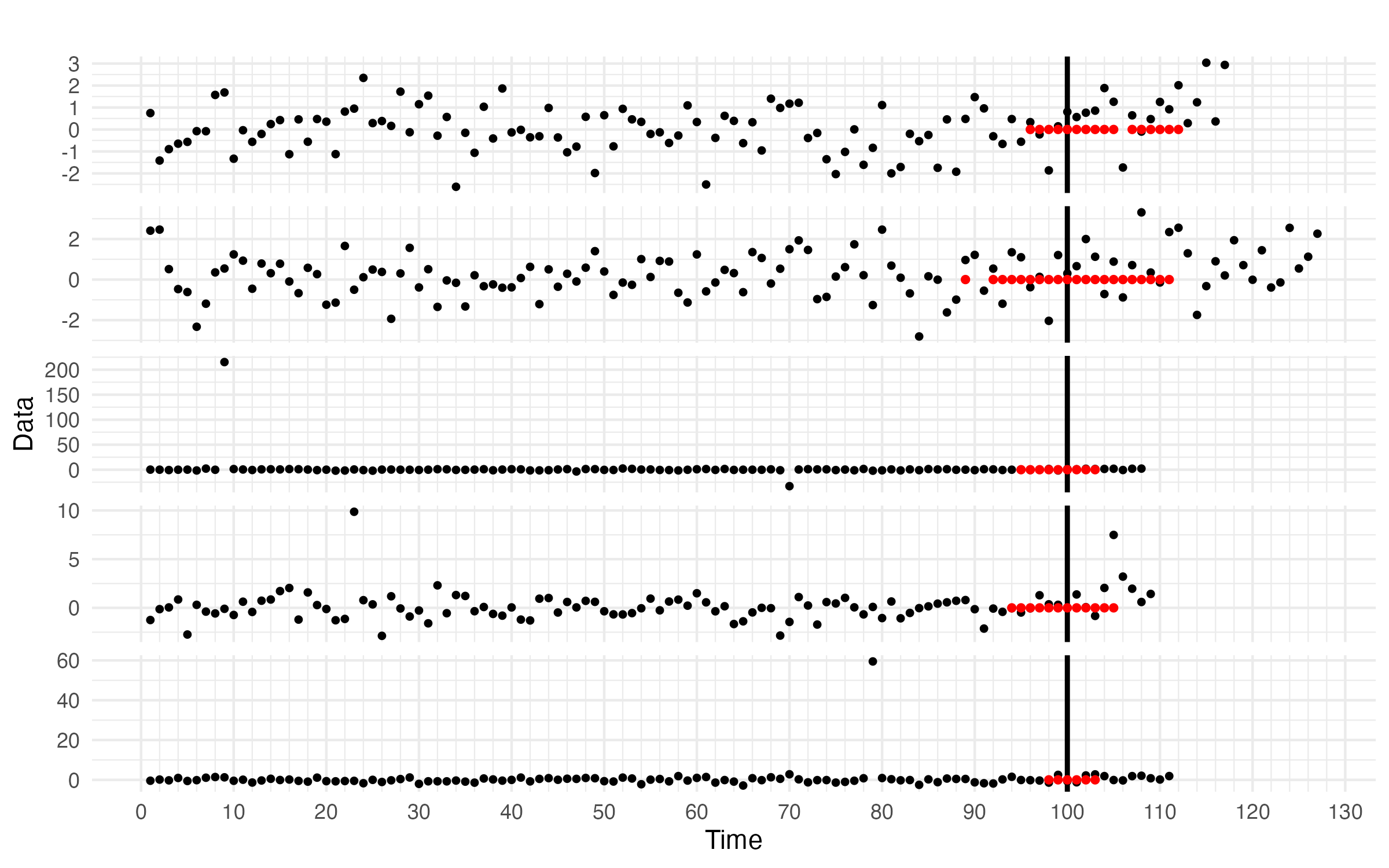}} 
\caption[]{ The confidence set \eqref{eq:ci-nonpara} is marked in red points. Results of $5$ independent simulations are shown.} 
\label{fig:ci-nonpara}  
\end{figure*}

\begin{table}[!ht]
    \centering
    \caption{Setting IV: Parametric pre-change, nonparametric post-change. 
    }
\label{tab:nonpara}
    \resizebox{0.7\linewidth}{!}{
    \begin{tabular}{c|ccc|c}
    \toprule
    \addlinespace
 T &  \specialcell{Coverage\\(Conditional)} & \specialcell{Coverage\\(Marginal)} & \specialcell{Size\\(Conditional)} &  \specialcell{Delay\\(Conditional)}  \\
    \midrule
\addlinespace 
100 &  0.984 & 0.982 &  23.859 &  9.959\\
500 &  0.992 & 0.930 & 25.381 &  9.606\\

\bottomrule
 \end{tabular}}
 \end{table}

 { 
\textbf{Huber-robust (nonparametric) pre- and post-change (Setting V):}

We perform experiments for the robust Gaussian mean-change problem under the $\epsilon$ contamination model, with the true changepoint at $ T=100,500$. The pre- and post-change classes are $\epsilon$ neighbourhoods around $N(\mu_0,1)$ and $N(\mu_1,1)$, respectively:  $\mathcal{P}_i=\{(1-\epsilon)N(\mu_i,1)+\epsilon F:F\in\mathcal{M}\};~~ i=0,1.$ In our experiment, $\mu_0=0,\mu_1=1$ and $\epsilon=0.01,0.001$. The true (unknown) pre- and post-change distributions are $(1-\epsilon)\times N(0,1)+\epsilon \times \text{Cauchy}(-1,10)$ and $(1-\epsilon)\times N(1,1)+\epsilon\times  \text{Cauchy}(-1,10)$ respectively. We use Huber's LFD (which are also log-optimal e-values \citep{saha2024huber}) to construct a CUSUM-style change-detector as below:
\begin{equation}
\label{eq:robust-detector}
    \tau=\inf\left\{t\in\N:\max_{1\leq j\leq t} \prod_{i=j}^t\pi(X_i)\geq A\right\},
\end{equation}
where $\pi(x)=\min\{\max\{\frac{f_1(x)}{f_0(x)},c^\prime\},c^{\prime\prime}\}$ for constants $c^{\prime\prime}>c^\prime\geq 0$ as mentioned in \cite{huber1965robust,saha2024huber}.
We set the threshold at $A=1000$.
After detection, we construct our universal confidence sets \eqref{eq:ci-nonpara} for $T$, with $R_n^{(t)}=1/\prod_{i=t}^n\pi(X_i)$, for $n\geq t$ $S_n^{(t)}=\prod_{i=n}^{t-1}\pi(X_i)$, for $n<t$ and $\alpha=0.1$. We report the results averaged across $500$ independent runs in \Cref{tab:comp-robust}, and \Cref{fig:ci-nonpara} visualizes the results across $5$ runs. 

 \begin{table}[!ht]
    \centering
    \caption{Setting V: Huber-robust (nonparametric) pre- and post-change 
    }
\label{tab:comp-robust}
    \resizebox{0.75\linewidth}{!}{
    \begin{tabular}{cc|ccc}
    \toprule
    \addlinespace
 T & $\epsilon $  & \specialcell{Conditional coverage} & Conditional size & \specialcell{Conditional delay}  \\
    \midrule
\addlinespace 
100 &  0.01 & 0.939 & 13.298& 14.140\\
500 &  0.01 & 0.948 & 13.634 & 13.943\\
100 &  0.001 & 0.944& 12.274& 12.638\\
500 &  0.001 & 0.958& 11.927 &12.496\\
\bottomrule
 \end{tabular}}
 \end{table}
}

\textbf{Comparison with \cite{wu2007inference}:}
\label{sec:comparison-wu}
We now compare our method with Ch. 3.5 of \cite{wu2007inference} on the Gaussian mean-change, having the (unknown) changepoint at $T=100$ or $500$. We recall that, unlike our general nonasymptotic guarantees for any $\mathcal A$, Wu's method only has asymptotic approximate guarantees in specific parametric settings, and only for CUSUM procedure~\citep{page1954continuous}, $ \tau^\prime_{\text{cusum}}:=\inf\left\{n\in\mathbb N:T_n\geq d\right\},$
where $T_n=\max\{0,T_{n-1}+X_n\}$ and $d$ is the threshold. We used $d=8.59$ and $7.56$ for ${N}(-0.25,1)$ vs. ${N}(0.25,1)$ and ${N}(-0.3,1)$ vs. ${N}(0.3,1)$ respectively. We implement  Wu's confidence set of the form $V_c\cup[L_s,\hat \nu)$, where $\hat \nu=\max\{k<\tau:T_k=0\}$, $V_c=\{k\geq \hat \nu: T_k\leq c\}$, and $L_k$ denote the $k$-th last zero point of $T_n$ counted backward starting from $L_0:=\hat\nu$, with their proposed approximations (for the case when $\theta_1=-\theta_0$) $s=\log(\alpha)\times(1/(\sqrt{2}\theta_1)+0.088)$ and $c=-\log(1-\sqrt{1-\alpha})/(2\theta_1)-0.583$ with $\alpha=0.1$. \Cref{tab:ci-comparison} reports the results (for target conditional coverage $0.9$) averaged across $500$ runs. 

\begin{table}[ht]
\centering
\caption{Comparison of the confidence sets for $T$ with  \cite{wu2007inference} (Setting I)}
\label{tab:ci-comparison}
\resizebox{\linewidth}{!}{
\begin{tabular}{ccc|
                cc
                cc
                cc
                cc}
\toprule
$F_0$ & $F_1$ & $T$ &
\multicolumn{2}{c}{Conditional Coverage} &
\multicolumn{2}{c}{Conditional Size} &
\multicolumn{2}{c}{Marginal Coverage} \\
\cmidrule(lr){4-5} \cmidrule(lr){6-7} \cmidrule(lr){8-9} &
& &  Ours &  Wu (2007) &  Ours &  Wu (2007) &  Ours &  Wu (2007) \\
\midrule
${N}(-0.25,1)$ & ${N}(0.25,1)$ & 100 & 0.916 &   0.838 & 34.595 & 39.706 & 0.864 & 0.792\\
${N}(-0.3,1)$ & ${N}(0.3,1)$ & 100 & 0.896 & 0.774 & 23.827 & 26.678 & 0.828 & 0.736\\
${N}(-0.25,1)$ & ${N}(0.25,1)$ & 500  & 0.902 &   0.841 & 34.382 & 38.195 & 0.546 &  0.578\\
${N}(-0.3,1)$ & ${N}(0.3,1)$ & 500 &  0.906 & 0.796 & 23.188 & 27.343 & 0.512 & 0.442\\
\bottomrule
\end{tabular}}
\end{table}

The results show that \cite{wu2007inference} (which is only asymptotically valid and relies on approximations) exhibits noticeable undercoverage, while the sizes for both methods are comparable. However, our method \eqref{eq:ci-simple} achieves the target conditional coverage.

{ 
\textbf{Sentiment change in language data:}
We consider the Stanford
Sentiment Treebank (SST-2) dataset of movie reviews labeled with a binary sentiment \citep{socher2013recursive}. We want to localize the change from  $>60\%$ positive reviews to $<40\%$ positive reviews. In our experimental setup, the true changepoint is at 500; the pre-change data are i.i.d. draws of $75\%$ positive reviews, and the post-change data are i.i.d. draws of
$25\%$ negative reviews (these are all unknown to our algorithm).  We construct a change detection method using a pre-trained DistilBERT-base model fine-tuned on the uncased SST-2 \citep{sanh2019distilbert}, which is a binary sentiment classifier that maps each review to either 
$0$ (negative) or 
$1$ (positive), thereby producing a sequence of Bernoulli observations. After obtaining the binary sequence, we construct the test statistic \eqref{eq:test-stat-nonpara}
 using the likelihood ratio process for $\text{Bern}(0.4)$ against $\text{Bern}(0.6)$ (forward) and for $\text{Bern}(0.6)$ against $\text{Bern}(0.4)$ (backward). Note that given the change detector (and as a byproduct, the sequence of Bernoulli), it is simple to compute  \eqref{eq:test-stat-nonpara} in $O(T)$ time per $t$. So, the confidence set \eqref{eq:ci-nonpara} takes total $O(T^2)$ time. \Cref{fig:sentiment-change} provides a visualization.
}
 \begin{figure}
     \centering     \includegraphics[width=0.95\linewidth,height=0.43\linewidth]{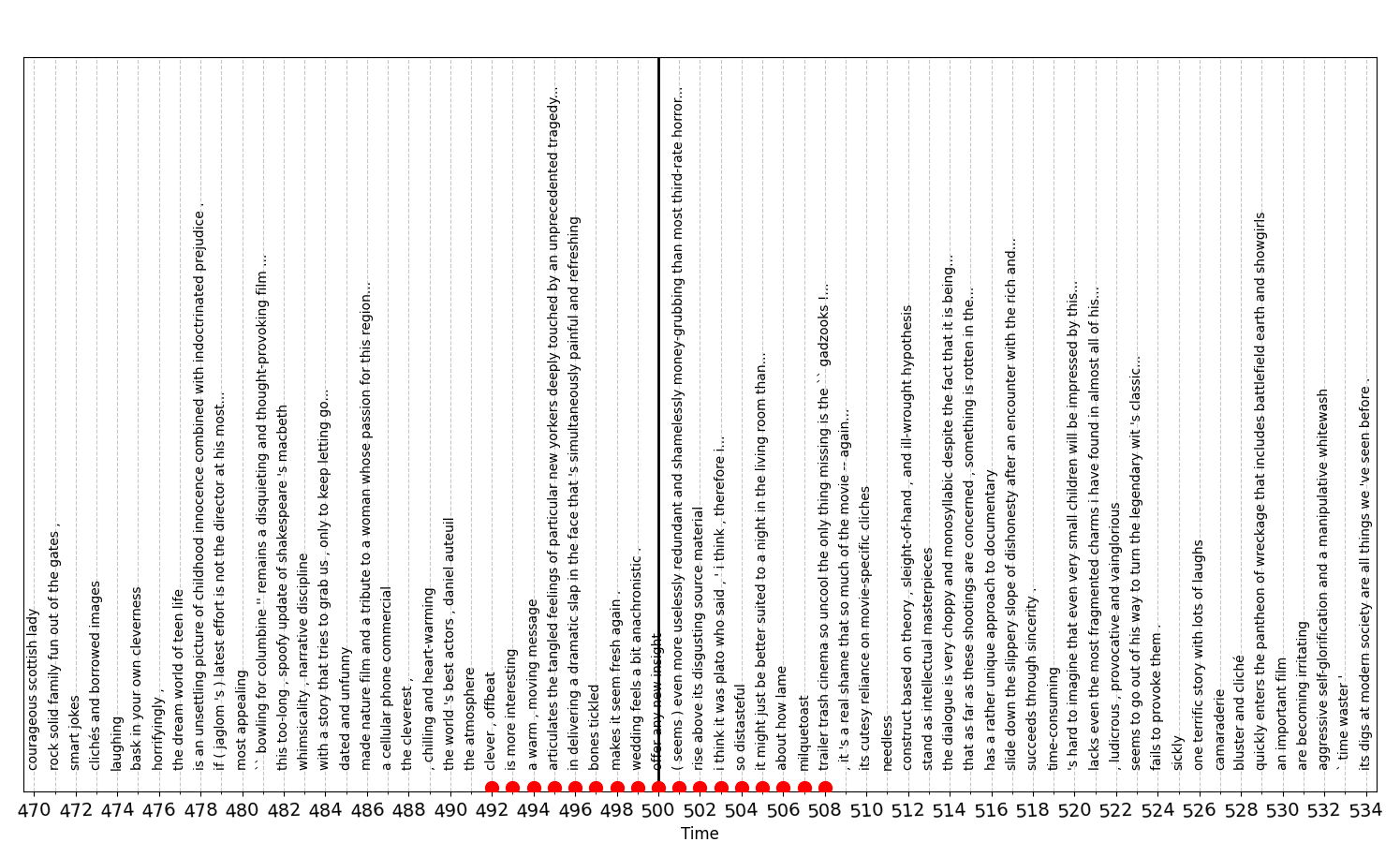}
     \caption{ The confidence set \eqref{eq:ci-nonpara} is marked in red points. The true changepoint $T=500$ is marked with the black line. Texts are plotted from time $t=470$ upto detection time, $\tau=534$.}
     \label{fig:sentiment-change}
 \end{figure}

 \begin{remark}
In Settings I, II, and III, both of our proposed methods --- universal (\Cref{sec:general}) and adaptive (\Cref{sec:parametric}) --- are applicable. For nonparametric settings (IV and V), the only valid method currently known is the universal method. Prior works are valid (asymptotically) only for Setting I.
As shown in \Cref{tab:simple} for known pre-distributions (Settings I and II), the adaptive method consistently outperforms the universal method, which tends to be conservative. Also, our adaptive method is the only method we know that remains valid under dependence (see experimental results in Supplementary Section G.2). However, for setting III, the universal and adaptive methods perform similarly.
Moreover, the universal method is also significantly more efficient --- it runs in $O(T^2)$ time (assuming $M_t$ can be computed in $O(T)$ time), while the adaptive method incurs a higher computational cost of $O(T^3 B)$ (assuming $\mathcal{A}$ runs in $O(T)$ time and $M_t$ can be computed in $O(T)$ time). 
See Supplement Sections E, F, and G for further experiments, involving confidence intervals for pre- and post-change parameters, settings with dependent data, settings with detection algorithms controlling PFA, etc.
\end{remark}

\section{Conclusion}
\label{sec:conc}

We developed a general framework for constructing confidence sets for the unknown changepoint $T$ after a sequential detector signals a change at a data-dependent stopping time. It applies to any general nonparametric post-change class when the pre-change distribution is known, and we further extend it to settings with a composite pre-change class under a suitable assumption. For the parametric setting, we proposed an alternative simulation-based method that empirically yields tighter confidence sets when both pre- and post-change distributions are known and can handle dependent data as well. However, we note that our primary (universal) method is nonparametric, computationally more efficient, and simpler to implement. 
Crucially, all our methods are agnostic to the detection algorithm used and provide non-asymptotic (conditional) coverage guarantees, whereas prior works are tailored for exponential family distributions with the CUSUM procedure and provide only asymptotic (conditional) coverage. By bridging a longstanding gap in the literature, our work offers the first nonparametric, theoretically principled, and broadly applicable method for post-detection inference. A particularly fruitful direction for future work is to extend this framework to the distribution-free setting.

\section*{Acknowledgement} 
Part of this work was carried out while AS was a student at the Indian Statistical Institute. The authors thank the Editor, the Associate Editor, and the reviewers for their careful reading of an earlier version of the article and for their helpful comments. The authors also thank Swapnaneel Bhattacharyya for identifying and helping eliminate an inconsistency in an earlier version of the article. AR acknowledges support from NSF DMS-1916320.

\bibliographystyle{plain}  
\bibliography{ref}

@book{tartakovsky2014sequential,
  title={Sequential Analysis: Hypothesis Testing and Changepoint Detection},
  author={Tartakovsky, A. and Nikiforov, I. and Basseville, M.},
  isbn={9781439838211},
  lccn={2014022603},
  year={2014},
  publisher={CRC Press}
}

@article{larsson2024numeraire,
  title={The numeraire e-variable and reverse information projection},
  author={Larsson, Martin and Ramdas, Aaditya and Ruf, Johannes},
  journal={Annals of Stat.},
  year={2025}
}

@article{lardy2023universal,
  author={Lardy, Tyron and Grünwald, Peter and Harremoës, Peter},
  journal={IEEE Transactions on Information Theory}, 
  title={Reverse Information Projections and Optimal E-Statistics}, 
  year={2024},
  volume={70},
  number={11},
  pages={7616-7631},
}

@article{hinkley1970inferencce,
    author = {Hinkley, David V.},
    title = "{Inference about the change-point in a sequence of random variables}",
    journal = {Biometrika},
    volume = {57},
    number = {1},
    pages = {1-17},
    year = {1970}
}

@article{worsley1986confidence,
 author = {K. J. Worsley},
 journal = {Biometrika},
 number = {1},
 pages = {91--104},
 title = {Confidence Regions and Tests for a Change-Point in a Sequence of Exponential Family Random Variables},
 volume = {73},
 year = {1986}
}

@article{verzelen2023optimal,
  title={Optimal change-point detection and localization},
  author={Verzelen, Nicolas and Fromont, Magalie and Lerasle, Matthieu and Reynaud-Bouret, Patricia},
  journal={The Annals of Statistics},
  volume={51},
  number={4},
  pages={1586--1610},
  year={2023}
}

@article{jang2024fast,
  title={Fast and Optimal Changepoint Detection and Localization using {B}onferroni Triplets},
  author={Jang, Jayoon and Walther, Guenther},
  journal={arXiv preprint arXiv:2410.14866},
  year={2024}
}

@article{dumbgen1991asymptotic,
  title={The asymptotic behavior of some nonparametric change-point estimators},
  author={Dumbgen, L},
  journal={The Annals of Statistics},
  year={1991}
}

@article{shewhart1925application,
  title={The application of statistics as an aid in maintaining quality of a manufactured product},
  author={Shewhart, Walter A},
  journal={Journal of the American Statistical Association},
  volume={20},
  number={152},
  pages={546--548},
  year={1925},
  publisher={Taylor \& Francis}
}

@article{page1954continuous,
  title={Continuous inspection schemes},
  author={Page, Ewan S},
  journal={Biometrika},
  volume={41},
  number={1/2},
  pages={100--115},
  year={1954},
  publisher={JSTOR}
}

@article{shiryaev1963optimum,
  title={On optimum methods in quickest detection problems},
  author={Shiryaev, Albert N},
  journal={Theory of Probability \& Its Applications},
  volume={8},
  number={1},
  year={1963}
}

@article{roberts1966comparison,
  title={A comparison of some control chart procedures},
  author={Roberts, SW},
  journal={Technometrics},
  volume={8},
  number={3},
  pages={411--430},
  year={1966},
  publisher={Taylor \& Francis}
}

@article{lorden1971procedures,
  title={Procedures for reacting to a change in distribution},
  author={Lorden, Gary},
  journal={The Annals of Mathematical Statistics},
  year={1971}
}

@article{siegmund1995using,
  title={Using the generalized likelihood ratio statistic for sequential detection of a change-point},
  author={Siegmund, David and Venkatraman, ES},
  journal={The Annals of Statistics},
  pages={255--271},
  year={1995},
  publisher={JSTOR}
}

@article{shin2022detectors,
  author = {Jaehyeok Shin and Aaditya Ramdas and Alessandro Rinaldo},
    title = {E-detectors: A Nonparametric Framework for Sequential Change Detection},
    journal = {The New England Journal of Statistics in Data Science},
    volume = {2},
    number = {2},
    year = {2023},
    pages = {229--260},
    publisher = {New England Statistical Society}
}

@article{harchaoui2008kernel,
  title={Kernel change-point analysis},
  author={Harchaoui, Zaid and Moulines, Eric and Bach, Francis},
  journal={Neural Information Proc. Systems},
  volume={21},
  year={2008}
}

@article{darkhovskh1976a,
author = {Darkhovskh, B. S.},
title = {A Nonparametric Method for the a Posteriori Detection of the “Disorder” Time of a Sequence of Independent Random Variables},
journal = {Theory of Probability \& Its Applications},
volume = {21},
number = {1},
pages = {178-183},
year = {1976}
}

@article{darling1967confidence,
  title={Confidence sequences for mean, variance, and median},
  author={Darling, Donald A and Robbins, Herbert},
  journal={Proceedings of the National Academy of Sciences},
  volume={58},
  number={1},
  pages={66--68},
  year={1967},
  publisher={National Acad Sciences}
}

@article{howard2021time,
  title={Time-uniform, nonparametric, nonasymptotic confidence sequences},
  author={Howard, Steven R and Ramdas, Aaditya and McAuliffe, Jon and Sekhon, Jasjeet},
  journal={The Annals of Statistics},
  volume={49},
  number={2},
  pages={1055--1080},
  year={2021}
}

@article{ville1939etude,
  title={Etude critique de la notion de collectif},
  author={Ville, Jean},
  journal={Bull. Amer. Math. Soc},
  volume={45},
  number={11},
  pages={824},
  year={1939}
}

@article{ramdas2022game,
  title={Game-theoretic statistics and safe anytime-valid inference},
  author={Ramdas, Aaditya and Gr{\"u}nwald, Peter and Vovk, Vladimir and Shafer, Glenn},
  journal={Statistical Science},
  year={2023}
}

@article{wu2005inference,
author = { Wu, Yanhong},
title = {Inference for Change-Point and Post-Change Mean with Possible Change in Variance},
journal = {Sequential Analysis},
volume = {24},
number = {3},
pages = {279--302},
year = {2005},
publisher = {Taylor \& Francis}
}

@book{wu2007inference,
  title={Inference for Change Point and Post Change Means After a {CUSUM} Test},
  author={Wu, Y.},
  isbn={9780387262697},
  lccn={2004058917},
  series={Lecture Notes in Statistics},
  year={2007},
  publisher={Springer New York}
}

@article{ding2003lower,
title = {A lower confidence bound for the change point after a sequential {CUSUM} test},
journal = {J. Statist. Planng Inf.},
volume = {115},
number = {1},
pages = {311-326},
year = {2003},
author = {Keyue Ding}
}

@article{Brodsky30042010,
author = {Boris Brodsky},
title = {Sequential Detection and Estimation of Change-Points},
journal = {Sequential Analysis},
volume = {29},
number = {2},
pages = {217--233},
year = {2010},
publisher = {Taylor \& Francis}
}

@article{Srivastava01011999,
author = {M.S Srivastava and Yanhong Wu},
title = {Quasi-stationary biases of change point and change magnitude estimation after sequential {CUSUM} test},
journal = {Sequential Analysis},
volume = {18},
number = {3-4},
pages = {203--216},
year = {1999},
publisher = {Taylor \& Francis}  
}

@article{Gombay10012003,
author = {Edit Gombay},
title = {Sequential Change-Point Detection and Estimation},
journal = {Sequential Analysis},
volume = {22},
number = {3},
pages = {203--222},
year = {2003},
publisher = {Taylor \& Francis}
}

@article{wu2004bias,
author = {Yanhong Wu},
title = {Bias of estimator of change point detected by a {CUSUM} procedure},
journal = {Annals of the Institute of Statistical Mathematics
},
volume = {56},
number = {1},
pages = {127--142},
year = {2004},
publisher = {Taylor \& Francis}
}

@book{shaked2007stochastic,
  title={Stochastic orders},
  author={Shaked, Moshe and Shanthikumar, J George},
  year={2007},
  publisher={Springer}
}

@article{WU20063625,
title = {Inference for post-change mean by a {CUSUM} procedure},
journal = {J. Statist. Planng Inf.},
volume = {136},
number = {10},
pages = {3625-3646},
year = {2006},
author = {Yanhong Wu}
}

@article{grunwald2024safe,
    author = {Grünwald, Peter and de Heide, Rianne and Koolen, Wouter},
    title = {Safe testing},
    journal = {J. R. Statist. Soc. B},
    volume = {86},
    number = {5},
    pages = {1091-1128},
    year = {2024},
    month = {03}
}

@article{ramdas2024hypothesis,
  title={Hypothesis testing with e-values},
  author={Ramdas, Aaditya and Wang, Ruodu},
  journal={Foundations and Trends in Statistics},
  volume = {1},
  number = {1},
  year={2025}
}

@article{ramdas2022testing,
  title={Testing exchangeability: Fork-convexity, supermartingales and e-processes},
  author={Ramdas, Aaditya and Ruf, Johannes and Larsson, Martin and Koolen, Wouter M},
  journal={International Journal of Approximate Reasoning},
  volume={141},
  pages={83--109},
  year={2022},
  publisher={Elsevier}
}

@article{wasserman2020universal,
  title={Universal {I}nference},
  author={Wasserman, Larry and Ramdas, Aaditya and Balakrishnan, Sivaraman},
  journal={Proceedings of the National Academy of Sciences},
  volume={117},
  number={29},
  pages={16880--16890},
  year={2020},
  publisher={National Acad Sciences}
}

@article{waudby2023estimating,
  title={Estimating means of bounded random variables by betting},
  author={Waudby-Smith, Ian and Ramdas, Aaditya},
  journal={Journal of the Royal Statistical Society Series B (Methodology), with discussion},
  year={2023}
}

@article{saha2024huber,
  author={Saha, Aytijhya and Ramdas, Aaditya},
  journal={IEEE Transactions on Information Theory}, 
  title={Huber-Robust Likelihood Ratio Tests for Composite Nulls and Alternatives}, 
  year={2026},
  volume={72},
  number={1},
  pages={501-520}
}

@inproceedings{saha2024testing,
  title={Testing exchangeability by pairwise betting},
  author={Saha, Aytijhya and Ramdas, Aaditya},
  booktitle={International Conference on Artificial Intelligence and Statistics},
  pages={4915--4923},
  year={2024},
  organization={PMLR}
}

@article{howard2020time,
  title={Time-uniform {C}hernoff bounds via nonnegative supermartingales},
  author={Howard, Steven R and Ramdas, Aaditya and McAuliffe, Jon and Sekhon, Jasjeet},
  journal={Probability Surveys},
  year={2020}
}

@article{huber1965robust,
  title={A robust version of the probability ratio test},
  author={Huber, Peter J},
  journal={The Annals of Mathematical Statistics},
  pages={1753--1758},
  year={1965},
  publisher={JSTOR}
}

@inproceedings{socher2013recursive,
  title={Recursive deep models for semantic compositionality over a sentiment treebank},
  author={Socher, Richard and Perelygin, Alex and Wu, Jean and Chuang, Jason and Manning, Christopher D and Ng, Andrew Y and Potts, Christopher},
  booktitle={Proceedings of the 2013 conference on empirical methods in natural language processing},
  pages={1631--1642},
  year={2013}
}

@article{huber1973minimax,
  title={Minimax tests and the {N}eyman-{P}earson lemma for capacities},
  author={Huber, Peter J and Strassen, Volker},
  journal={The Annals of Statistics},
  pages={251--263},
  year={1973},
  publisher={JSTOR}
}

@article{sanh2019distilbert,
  title={Distil{BERT}, a distilled version of {BERT}: smaller, faster, cheaper and lighter},
  author={Sanh, Victor and Debut, Lysandre and Chaumond, Julien and Wolf, Thomas},
  journal={arXiv preprint arXiv:1910.01108},
  year={2019}
}

@article{chu1996monitoring,
  title={Monitoring structural change},
  author={Chu, Chia-Shang James and Stinchcombe, Maxwell and White, Halbert},
  journal={Econometrica: Journal of the Econometric Society},
  pages={1045--1065},
  year={1996},
  publisher={JSTOR}
}

@article{kirch2022sequential,
  title={Sequential change point tests based on {U}-statistics},
  author={Kirch, Claudia and Stoehr, Christina},
  journal={Scandinavian Journal of Statistics},
  volume={49},
  number={3},
  pages={1184--1214},
  year={2022},
  publisher={Wiley Online Library}
}

@article{aue2024state,
  title={The state of cumulative sum sequential changepoint testing 70 years after {P}age},
  author={Aue, Alexander and Kirch, Claudia},
  journal={Biometrika},
  volume={111},
  number={2},
  pages={367--391},
  year={2024},
  publisher={Oxford University Press}
}

\appendix
\newpage
\renewcommand{\thepage}{\arabic{page}} 

\renewcommand{\thesection}{\Alph{section}} 
\setcounter{section}{0}                     

\renewcommand{\theequation}{S\arabic{equation}} 
\setcounter{equation}{0}                       
\setcounter{footnote}{0}

\allowdisplaybreaks

\begin{center}
  \vspace*{0.5cm}
  {{\LARGE\bfseries Supplementary Material 
  }}
\end{center}
\vspace{1cm}

\section*{Outline of the Supplementary Material}
Omitted proofs can be found in \Cref{a-proof}. We provide the implementation details of \Cref{algo:1-exact} in \Cref{sec:subroutine-L} and of \Cref{algo:comp-post,algo:comp} in \Cref{sec:subroutines-comp-post,sec:subroutines-comp} respectively, along with two concrete examples in both sections. Experimental results for the confidence intervals for pre- and post-change parameters are in \Cref{sec:expt-paramter}. Simpler algorithms for PFA-controlling change detectors are presented in \Cref{sec:pfa}, along with their implementations in simulation experiments. Additional experiments with varying $T$ vs.\ ARL and scenarios with dependent (Markov) pre- and post-change data are presented in \Cref{sec:add-expt}. 

\section{Omitted proofs}
\label{a-proof}
In this section, we provide the mathematical details that were omitted in the main paper.

\begin{proof}[Proof of \Cref{prop:impossibility}.]
    Assume that $\P_T(T\in\mathcal{C}^*)\geq1-\alpha$ for some $\mathcal{C}^*\subseteq \{1,\cdots,\tau\}$. Now we observe that  $$\P_\infty(\tau \geq T)=\P_T(\tau \geq T)\geq \P_T(T\in\mathcal{C}^*)\geq1-\alpha.$$ 
    Therefore, $\P_\infty(\tau =\infty)= 1-\P_\infty(\cup_{n=1}^\infty[\tau <n])=1-\lim_{n\to\infty}\P_\infty(\tau<n)=\lim_{n\to\infty}\P_\infty(\tau\geq n)\geq 1-\alpha.$
\end{proof}

\begin{proof}[Proof of \Cref{prop:dual}.]
First note the fact that \(\P_{F_0,T,F_1}(\tau< T)=\mathbb \P_{F_0,\infty}(\tau< T),\)
since the event $[\tau< T]=\displaystyle\cup_{i=1}^{T-1}[\tau=i]$ does not depend on $\{X_i:i\geq T\}$. 

Let $\mathcal{C} \subseteq \{1, \dots, \tau\}$ be a confidence set satisfying $\P_{F_0,t,F_1}(t \in \mathcal{C}\mid\tau \geq t)\geq 1-\alpha$, for all $t\in\N$. Since $\tau$ is a stopping time, $[\tau< t]=\displaystyle\cup_{i=1}^{t-1}[\tau=i]$ does not depend on $\{X_i:i\geq t\}$. This implies $\P_{F_0,\infty}(\tau \ge t)=\P_{F_0,t,F_1}(\tau \ge t).$ Therefore,
 
\begin{align*}
     \P_{F_0,t,F_1}(\phi^{(t)}(X_1,\cdots,X_\tau)=1)&=\P_{F_0,t,F_1}(t \notin \mathcal{C},t\leq\tau)\\
     &=\P_{F_0,t,F_1}(t \notin \mathcal{C}\mid \tau \ge t)\P_{F_0,t,F_1}(\tau \ge t)\\
     &\leq \alpha\P_{F_0,\infty}(\tau \ge t), ~\forall F_0\in\mathcal{P}_0, F_1\in\mathcal{P}_1.
 \end{align*}

For the converse part,  $\phi^{(t)}$  satisfies 
$\P_{F_0,t,F_1}(\phi^{(t)}(X_1,\cdots,X_\tau)=1)\leq \alpha \cdot \P_{F_0,\infty}(\tau \ge t), \forall F_0\in\mathcal{P}_0, F_1\in\mathcal{P}_1$.
So, $\mathcal{C}=\{t\in\N, t\leq \tau:\phi^{(t)}(X_1,\cdots,X_\tau)=0 \}$ satisfies
\begin{align*}
    &\P_{F_0,T,F_1}(\phi^{(T)}(X_1,\cdots,X_\tau)=0) \\
    &=\P_{F_0,T,F_1}(\phi^{(T)}(X_1,\cdots,X_\tau)=0,\tau \ge T)+\P_{F_0,T,F_1}(\phi^{(T)}(X_1,\cdots,X_\tau)=0, \tau < T)\\
    & \leq\P_{F_0,T,F_1}(T \in \{t\in\N, t\leq \tau:\phi^{(t)}(X_1,\cdots,X_\tau)=0\})+\P_{F_0,\infty}(\tau < T)\\
    &=\P_{F_0,T,F_1}(T \in \mathcal{C})+\P_{F_0,\infty}(\tau < T).
\end{align*}
Therefore, $\P_{F_0,T,F_1}(T \in \mathcal{C})\geq 1-\alpha \cdot \P_{F_0,\infty}(\tau \ge T)-\P_{F_0,\infty}(\tau < T)$, which implies
\begin{equation*}
    \P_{F_0,T,F_1}(T \in  \mathcal{C}  \mid \tau \ge T) =\frac{\P_{F_0,T,F_1}(T \in \mathcal{C},\tau \ge T)}{\P_{F_0,T,F_1}(\tau \ge T)}=\frac{\P_{F_0,T,F_1}(T \in \mathcal{C})}{\P_{F_0,\infty}(\tau \ge T)}\ge 1 - \alpha.
\end{equation*}
This completes the proof.
\end{proof}

\begin{lemma}[Ville’s inequality \cite{ville1939etude}]
\label{lem:ville}
    If $\{R_n\}_{n\in\N}$ is an e-process for $\mathcal P$ with respect to a filtration $\{\mathcal{F}_n\}_{n\in\N}$, then 
    \begin{equation}
        \P\left(\exists n\in\N: R_n\geq1/\alpha\right)\leq \alpha, \forall \P\in \mathcal P.
    \end{equation}
    
\end{lemma}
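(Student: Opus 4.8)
The plan is to reduce the statement to a single application of Markov's inequality at a carefully chosen stopping time, using truncation to handle the possibility that the level $1/\alpha$ is never crossed. First I would fix an arbitrary $\P\in\mathcal P$ and define the first-crossing time $\kappa:=\inf\{n\in\N: R_n\geq1/\alpha\}$, with the convention $\inf\emptyset=\infty$. The event whose probability we want to bound is precisely $\{\exists n\in\N: R_n\geq1/\alpha\}=\{\kappa<\infty\}$, so it suffices to show $\P(\kappa<\infty)\leq\alpha$ for this fixed $\P$, and then note that $\P$ was arbitrary.

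The key difficulty is that $\kappa$ can be infinite on a set of positive measure, while the defining property of an e-process is phrased for stopping times at which $R$ is evaluated; evaluating $R_\kappa$ directly would require $R_\infty$, which need not be defined. To sidestep this, I would work with the bounded stopping times $\kappa\wedge m$ for each finite $m\in\N$. Since $\kappa\wedge m$ is a bounded (hence almost-surely finite) stopping time with respect to $\{\mathcal F_n\}$, the e-process property for $\mathcal P$ gives $\mathbb E_{\P}[R_{\kappa\wedge m}]\leq1$. Because $R$ is nonnegative, Markov's inequality yields $\P(R_{\kappa\wedge m}\geq1/\alpha)\leq\alpha\,\mathbb E_{\P}[R_{\kappa\wedge m}]\leq\alpha$. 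Finally, by the very definition of $\kappa$ as the first crossing time, we have the inclusion $\{\kappa\leq m\}\subseteq\{R_{\kappa\wedge m}\geq1/\alpha\}$: on $\{\kappa\leq m\}$ we have $\kappa\wedge m=\kappa$ and $R_\kappa\geq1/\alpha$. Combining these gives $\P(\kappa\leq m)\leq\alpha$ for every $m$.

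To finish, I would let $m\to\infty$. The events $\{\kappa\leq m\}$ increase to $\{\kappa<\infty\}$, so continuity from below of the measure gives $\P(\kappa<\infty)=\lim_{m\to\infty}\P(\kappa\leq m)\leq\alpha$, which is the claimed bound for this $\P$. Since $\P\in\mathcal P$ was arbitrary, the inequality holds uniformly over the class. I expect the only genuinely delicate point to be the truncation step: one must invoke the e-process inequality at the \emph{bounded} stopping times $\kappa\wedge m$ rather than at $\kappa$ itself, precisely because $R_\infty$ need not be defined; everything else is a routine combination of Markov's inequality and continuity of the probability measure.
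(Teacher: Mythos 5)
Your argument is correct and is essentially the standard proof that the paper defers to by citation (Lemma 1 of Howard et al.): invoke the e-process property at the truncated stopping times $\kappa\wedge m$, apply Markov's inequality, and pass to the limit by continuity from below. The truncation step you flag as delicate is indeed the right care to take, and nothing further is needed.
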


 For a short proof, see \cite{howard2020time}[Lemma 1].

\begin{proof}[Proof of \Cref{lem:univ-threshold}.]
    Note that by definition of $M_t$, we have $\P_{F_0,t,F_1}[M_t\geq2/\alpha, t=\hat{T}\leq \tau<\infty]=0$, $\P_{F_0,t,F_1}[M_t\geq2/\alpha, t>\tau]=0$ and $\P_{F_0,t,F_1}[M_t>2/\alpha, \tau=\infty]=0$. Therefore, the only cases where $M_t>2/\alpha$ might occur are $t<\hat{T}\leq \tau<\infty$ or $\hat{T}<t\leq \tau<\infty$. Hence, for any $t\in\N$,
     \begin{align*}
         &\P_{F_0,t,F_1}[M_t\geq2/\alpha]\\
         &=\P_{F_0,t,F_1}\left[\max\{\max_{t\leq n\leq\tau} R^{(t)}_{n},\max_{1\leq n\leq t-1}S^{(t)}_{n}\}\geq2/\alpha,t\leq\tau<\infty\right]\\
         &\leq \P_{F_0,t,F_1}\left[\max_{t\leq n\leq\tau} R^{(t)}_{n}\geq2/\alpha,t\leq\tau<\infty\right]+\P_{F_0,t,F_1}\left[\max_{1\leq n\leq t-1}S^{(t)}_{n}\geq2/\alpha,t\leq\tau<\infty\right].
     \end{align*}  
     Now, from Ville's inequality (\Cref{lem:ville})  on the e-processes $\{R^{(t)}_{n}\}_{n\geq t}$,  $$P_{F_1}\left(\exists n\geq t, R^{(t)}_{n}\geq 2/\alpha\right)\leq\alpha/2.$$
     Therefore, 
\begin{align*}
\P_{F_0,t,F_1}\left[\max_{t\leq n\leq\tau} R^{(t)}_{n}\geq2/\alpha,t\leq\tau<\infty\right]
    &\leq \P_{F_0,t,F_1}\left(\cup_{n=t}^\infty \left[R^{(t)}_{n}\geq2/\alpha\right]\right)\\
    &=\P_{F_0,t,F_1}\left(\exists n\geq t, R^{(t)}_{n}\geq2/\alpha\right)\leq \alpha/2.
\end{align*}
     Similarly, for the e-processes $\{S^{(t)}_{n}\}_{n<t}$, we get $\P_{F_0,t,F_1}\left[S^{(t)}_{(t-1)\wedge\hat{T}}\geq2/\alpha\right]\leq \alpha/2.$ Therefore, from the above steps, we finally have
     $$\P_{F_0,t,F_1}[M_t\geq2/\alpha]\leq \alpha/2+\alpha/2= \alpha.$$
     This completes the proof.
\end{proof}

 \begin{proof}[Proof of \Cref{thm:coverage-nonpara-known-pre}.]
   Observe that
     \begin{align*}
  &\P_{F_0,T,F_1}( M_T< 2/(\alpha r_T))\\
   &=\P_{F_0,T,F_1}(T\in \{t\in\N: M_t< 2/(\alpha r_t)\})\\
   &\leq\P_{F_0,T,F_1}(T\in \{t\in\N: M_t< 2/(\alpha r_t)\},\tau\geq T) +\mathbb \P_{F_0,T,F_1}(\tau< T)\\
    &= \P_{F_0,T,F_1}(T\in \{t\in\N: t\leq \tau, M_t< \frac{2}{\alpha r_t}\}\mid \tau\geq T)\P_{F_0,T,F_1}(\tau\geq T)+\mathbb \P_{F_0,T,F_1}(\tau< T)\\
    &= \P_{F_0,T,F_1}(T\in \mathcal{C}\mid \tau\geq T)\mathbb \P_{F_0,\infty}(\tau\geq T)+1-\mathbb \P_{F_0,\infty}(\tau\geq T),
\end{align*}
where we twice used the fact that \(\P_{F_0,T,F_1}(\tau< T)=\mathbb \P_{F_0,\infty}(\tau< T)\), which holds because $\tau$ is a stopping time and so the event $[\tau< T]=\displaystyle\cup_{i=1}^{T-1}[\tau=i]$ does not depend on $\{X_i:i\geq T\}$.

 From \Cref{lem:univ-threshold}, we have $\P_{F_0,t,F_1}[M_t\geq\frac{2}{\alpha r_t}\mid r_t]\leq \alpha r_t$, since $r_t$ and $M_t$ are independent of each other. Now, since $r_t$ is an unbiased estimator of $\P_{F_0,\infty}[\tau\geq t]$, 
  $\P_{F_0,t,F_1}[M_t\geq\frac{2}{\alpha r_t}]=\mathbb E[\P_{F_0,t,F_1}[M_t\geq\frac{2}{\alpha r_t}\mid r_t]]\leq \mathbb E[\alpha r_t]=\alpha\P_{F_0,\infty}[\tau\geq t]$, $\forall t\in\N.$ 
So, we obtain
\begin{align*}
    \P_{F_0,T,F_1}(T\in \mathcal{C}\mid \tau\geq T)\geq  1-&\frac{\P_{F_0,T,F_1}( M_T> 2/(\alpha r_T))}{ \P_{F_0,\infty}(\tau\geq T)}\geq 1-\alpha.
\end{align*}
This completes the proof.
\end{proof}
\begin{proof}[Proof of \Cref{thm:len}.]
We first decompose the length of the confidence set into contributions from the pre- and post-change regions, and then control each contribution separately.
\paragraph{\textbf{Decomposition of the confidence set.}}
Whenever $\tau\geq T>t,$ $$M_t\geq \prod_{i=t}^{{T}-1} \frac{f_0 (X_i)}{f_1 (X_i)}.$$ Similarly, whenever $T<t\leq\tau,$ $$M_t\geq\prod_{i=T}^{{t}-1} \frac{f_1 (X_i)}{f_0 (X_i)}.$$ Therefore, the following holds almost surely:
\begin{align*}
&|{\mathcal{C}}|\times\mathds{1}(\tau\geq T)\\
&\leq\sum_{t=1}^{{T-1}}\mathds{1}\left(\prod_{i=t}^{{T}-1} \frac{f_0 (X_i)}{f_1 (X_i)}\leq 2/(\alpha r_t )\right)+1+\sum_{t=T+1}^{{\tau}}\mathds{1}\left(\prod_{i=T}^{{t}-1} \frac{f_1 (X_i)}{f_0 (X_i)}\leq 2/(\alpha r_t )\right)\\
&=\sum_{t=1}^{{T-1}}\mathds{1}\left(\prod_{i=t}^{{T}-1} \frac{f_0 (X_i)}{f_1 (X_i)}\leq 2/(\alpha r_t )\right)+1+\sum_{t=T+1}^{{\infty}}\mathds{1}\left(\prod_{i=T}^{{t}-1} \frac{f_1 (X_i)}{f_0 (X_i)}\leq 2/(\alpha r_t ),  t \leq\tau\right).
\end{align*}
Also, the following holds almost surely:
\begin{equation}
\label{bdd-gen}
|{\mathcal{C}}|\times\mathds{1}(\tau\geq T)\leq\sum_{t=1}^{{T-1}}\mathds{1}\left(\prod_{i=t}^{{T}-1} \frac{f_0 (X_i)}{f_1 (X_i)}\leq 2/(\alpha r_t )\right)+1+|\tau-T|.
  \end{equation}
So, taking the conditional expectation on both sides and combining both bounds, we get
\begin{align}
\label{eq:bdd-2}
\nonumber&\mathbb E_{F_0,T,F_1} (|{\mathcal{C}}|\mid \tau\geq T)\leq 1+\sum_{t=1}^{{T-1}}\P_{F_0,T,F_1}\left(\prod_{i=t}^{{T}-1} \frac{f_0 (X_i)}{f_1 (X_i)}\leq 2/(\alpha r_t )\mid \tau\geq T\right)\\
&+ \min\left\{\sum_{t=T+1}^{{\infty}}\P_{F_0,T,F_1}\left(\prod_{i=T}^{{t}-1} \frac{f_1 (X_i)}{f_0 (X_i)}\leq 2/(\alpha r_t ), t \leq\tau\mid \tau\geq T\right), \Delta_{F_0,T,F_1}\right\}.
\end{align}

\paragraph{\textbf{Controlling the Pre-Change Tail.}}

 Fix some $s\in(0,1)$. Then, using Markov's equality,
\begin{align*}
&\P_{F_0,T,F_1}\left(\prod_{i=t}^{{T}-1} \frac{f_0 (X_i)}{f_1 (X_i)}\leq 2/(\alpha r_t )\right)\\
&\leq \P_{F_0,T,F_1}\left(\prod_{i=t}^{{T}-1} \frac{f_0 (X_i)}{f_1 (X_i)}\leq 2/(\alpha r_{T} )\right) \quad\text{[Since } r_t \leq  r_{T} ]\\
&=\mathbb E\left(\P_{F_0 ,T,F_1}\left(\prod_{i=t}^{{T}-1} \left(\frac{f_1 (X_i)}{f_0 (X_i)}\right)^s\geq (\alpha  r_{T} )^s/2^s\mid  r_{T}\right)\right) \\
&\leq \mathbb E\left(\left(\frac{2}{\alpha  r_{T}}\right)^s\mathbb E_{F_0 ,T,F_1}\left(\prod_{i=t}^{{T}-1} \left(\frac{f_1 (X_i)}{f_0 (X_i)}\right)^s\right)\right)\\
&= \mathbb E\left(\left(\frac{2}{\alpha  r_{T}}\right)^s\left( \mathbb E_{F_0}\left(\frac{f_1 (X_i)}{f_0 (X_i)}\right)^s\right)^{T-t}\right)\\
&= \mathbb E\left(\frac{2}{\alpha  r_{T}}\right)^s\rho_0(s)^{T-t}.
\end{align*}
This holds for all $s\in(0,1)$ and hence for $s=s_0.$ As noted earlier, $\rho_0(s_0)<1$. So, we have
\begin{align*}
    &\sum_{t=1}^{{T-1}}\P_{F_0,T,F_1}\left(\prod_{i=t}^{{T}-1} \frac{f_0 (X_i)}{f_1 (X_i)}\leq 2/(\alpha r_t )\mid \tau\geq T\right)\\
    &\leq\frac{1}{p_T}\sum_{t=1}^{{T-1}}\P_{F_0,T,F_1}\left(\prod_{i=t}^{{T}-1} \frac{f_0 (X_i)}{f_1 (X_i)}\leq 2/(\alpha r_t )\right)\\
    &\leq \mathbb E\left(\frac{2}{\alpha  r_{T}}\right)^s\frac{1}{p_T}\sum_{t=1}^{{T-1}}\rho_0(s)^{T-t}\\
    &\to \frac{2^s}{\alpha^s p_{{T}}^{s+1}}\times\frac{\rho_0(s)-\rho_0(s)^{T-1}}{1-\rho_0(s)},
\end{align*}
as $N\to\infty.$ To justify the convergence, we note that $\left(\frac{2}{\alpha  r_{T}}\right)^s$ is uniformly integrable as a sequence in $N\in\N$ (follows from \Cref{lem:ui}), and by the strong law of large numbers, $\left(\frac{2}{\alpha  r_{T}}\right)^s\to \left(\frac{2}{\alpha  p_{T}}\right)^s$ almost surely.
Since the above holds for any $s\in(0,1)$, it also holds for $s=s_0.$ 
\paragraph{\textbf{Deriving the first bound.}}
So, taking the conditional expectation on both sides of \eqref{bdd-gen}, we get
\begin{align*}
&\mathbb E_{F_0,T,F_1} (|{\mathcal{C}}|\mid \tau\geq T)\leq 1+\sum_{t=1}^{{T-1}}\P_{F_0,T,F_1}\left(\prod_{i=t}^{{T}-1} \frac{f_0 (X_i)}{f_1 (X_i)}\leq 2/(\alpha r_t )\mid \tau\geq T\right)+\Delta_{F_0,T,F_1}.
\end{align*}
Therefore, it follows from the above calculation that 
\begin{equation}
  \limsup_{N\to\infty} \mathbb E_{F_0,T,F_1} (|{\mathcal{C}}|\mid \tau\geq T)\leq  \frac{2^{s_0}}{\alpha^{s_0} p_{{T}}^{s_0+1}}\times\frac{\rho_0(s_0)-\rho_0(s_0)^{T-1}}{1-\rho_0(s_0)}+1+\Delta_{F_0,T,F_1}.
\end{equation}

\paragraph{\textbf{Controlling the ``Post-Change" Tail when $\tau$ is sensitive.}}
 
  Similarly, for $t\geq T+1$, we have
\begin{align*}
&\P_{F_0,T,F_1}\left(\prod_{i=T}^{{t}-1} \frac{f_1 (X_i)}{f_0 (X_i)}\leq 2/(\alpha r_t ),t\leq\tau\right)\\
&\leq \P_{F_0,T,F_1}\left(\prod_{i=T}^{{t}-1} \frac{f_1 (X_i)}{f_0 (X_i)}\leq 2/(\alpha r_\tau )\right)\\
&\leq \P_{F_0,T,F_1}\left(\prod_{i=T}^{{t}-1} \frac{f_1 (X_i)}{f_0 (X_i)}\leq 2/(\alpha r_\tau  ),r_\tau>\rho_1(s)^{t-T}\right)+\P_{F_0,T,F_1}[r_\tau\leq\rho_1(s)^{t-T}]\\
&\leq \P_{F_0,T,F_1}\left(\prod_{i=T}^{{t}-1} \left(\frac{f_0 (X_i)}{f_1 (X_i)}\right)^s\geq (\alpha \rho_1(s)^{t-T})^s/2^s\right)+\P_{F_0,T,F_1}[r_\tau\leq\rho_1(s)^{t-T}]\\
&\leq \left(\frac{2}{\alpha  \rho_1(s)^{t-T}}\right)^s\mathbb E_{F_1}\left(\prod_{i=T}^{{t}-1} \left(\frac{f_0 (X_i)}{f_1 (X_i)}\right)^s\right)+\P_{F_0,T,F_1}[r_\tau\leq\rho_1(s)^{t-T}]\\
&\leq \left(\frac{2}{\alpha  \rho_1(s)^{t-T}}\right)^s\rho_1(s)^{t-T}+\rho_1(s)^{t-T}.
\end{align*}

The last step follows from the assumption that $\tau$ is sensible (recall that if $\tau$ is sensitive, then $\tau_j$ is stochastically larger than $\tau$) and from \Cref{lem:r_tau}. Also, as we noted before, $\rho_1(s_1)<1.$ Therefore,
\begin{align*}
&\sum_{t=T
+1}^{{\infty}}\P_{F_0,T,F_1}\left(\prod_{i=T}^{{t}-1} \frac{f_1 (X_i)}{f_0 (X_i)}\leq 2/(\alpha r_t ), \tau\geq t\mid \tau\geq T\right)\\
&\leq\frac{1}{p_T}\sum_{t=T+1}^{{\infty}}\P_{F_0,T,F_1}\left(\prod_{i=T}^{{t}-1} \frac{f_1 (X_i)}{f_0 (X_i)}\leq 2/(\alpha r_t ), \tau\geq t\right)\\
&\leq \frac{1}{p_T}\sum_{t=T+1}^{{\infty}}\left(  \left(\frac{2}{\alpha}\right)^s\rho_1(s)^{s(t-T)}+\rho_1(s)^{t-T}\right)\\
&\leq \frac{1}{p_T}\left(  \left(\frac{2}{\alpha}\right)^s\times\frac{\rho_1(s)^{s}}{1-\rho_1(s)^{s}}+\frac{\rho_1(s)}{1-\rho_1(s)}\right).
\end{align*}
  Since the above holds for any $s\in(0,1)$, it also holds for $s=s_1.$

  \paragraph{\textbf{Deriving the final bound.}}
Finally, it follows from \eqref{eq:bdd-2} and the above calculations that
\begin{align}
\nonumber&\limsup_{N\to\infty}\mathbb E_{F_0,T,F_1} (|{\mathcal{C}}|\mid \tau\geq T)\leq \frac{2^{s_0}}{\alpha^{s_0} p_{{T}}^{s_0+1}}\times\frac{\rho_0(s_0)-\rho_0(s_0)^{T-1}}{1-\rho_0(s_0)}+1\\
&+\min\left\{\frac{1}{p_T}\left(  \left(\frac{2}{\alpha}\right)^{s_1}\times\frac{\rho_1(s_1)^{s_1}}{1-\rho_1(s_1)^{s_1}}+\frac{\rho_1(s_1)}{1-\rho_1(s_1)}\right), \Delta_{F_0,T,F_1}\right\}.
\end{align}
This completes the proof.
\end{proof}
\begin{proof}[Proof of \Cref{prop:mlr}.]
We only prove the case when $\tau$ is the CUSUM type stopping time $\tau_{\text{wcs-ripr}}$ defined as

\begin{equation*}
    \tau_{\text{cs}}=\inf\left\{n\in\mathbb N:\max_{1\leq j\leq n}\prod_{i=j}^n\frac{p_1^*(X_i)}{p_0^*(X_i)}\geq A\right\}.
\end{equation*}
Proofs for the other case,  when $\tau$ is the SR-type stopping time, follow analogously.
\begin{align*}
    \P_{P,\infty}[\tau_{\text{cs}}\geq t]
&=\P_{P,\infty}\left[\displaystyle\max_{1\leq k\leq t-1}\max_{1\leq j\leq k}\prod_{i=j}^k\frac{p_{1}^*(X_i)}{p_{0}^*(X_i)}<A\right].
\end{align*} 
Since $(P_0^*, P_1^*)$ is an LFD pair for testing $\mathcal{P}_0$ vs.\ $\mathcal{P}_1$, for any $P_0\in\mathcal{P}_0,$ we have 
\begin{align*}
  &{R(P_0, \phi) \leq R(P_0^*, \phi)} \\
  &{\implies C_0 {P_0}(\phi \text{ accepts } {H_1}) 
      \leq C_0 {P_0^*}(\phi \text{ accepts } {H_1})} \text{ for every LRT } \phi \text{ between } P_0^* \text{ and } P_1^*\\
  &{\implies {P_0}\!\left(\frac{p_1^*(X)}{p_0^*(X)} > \eta\right) 
      \leq {P_0^*}\!\left(\frac{p_1^*(X)}{p_0^*(X)} > \eta\right), \text{ for all }\eta }
\end{align*}
which is same as
\[
\frac{p_1^*(X)}{p_0^*(X)} \preceq_{\text{st}} \frac{p_1^*(X')}{p_0^*(X')},
\]
where $X\sim P$ and $X'\sim P^*$.
By Theorem 1.A.2 of \cite{shaked2007stochastic}, the stochastic ordering implies that there exist a random variable $Z$ and functions $\psi_{1}$ and $\psi_{2}$ such that $\psi_{1}(z)\leq \psi_{2}(z)$ for all $z$ and 
$\frac{p_1^*(X)}{p_0^*(X)}\stackrel{d}{=}\psi_{1}(Z)$ and 
$\frac{p_1^*(X')}{p_0^*(X')}\stackrel{d}{=}\psi_{2}(Z)$. Therefore,
\begin{align*}
    \P_{P,\infty}[\tau_{\text{cs}}\geq t]
&=\P\left[\displaystyle\max_{1\leq k\leq t-1}\max_{1\leq j\leq k}\prod_{i=j}^k\psi_{1}(Z_{i})dW(\theta_1)<A\right]\\
&\geq\P\left[\displaystyle\max_{1\leq k\leq t-1}\max_{1\leq j\leq k}\prod_{i=j}^k\psi_{2}(Z_{i})dW(\theta_1)<A\right]\\
&=\P_{P^*,\infty}[\tau_{\text{cs}}\geq t].
\end{align*}
This completes the proof.
\end{proof}

\begin{proof}[Proof of \Cref{thm:coverage-nonpara-comp-pre}.]
Since $r_t$ is an unbiased estimator of $\P_{F_0^*,\infty}[\tau\geq t]$ and is independent of data, we have $\forall t\in\N$, 
    
  $\P_{F_0,t,F_1}[M_t\geq2/(\alpha r_t^*)]=\mathbb E[\P_{F_0,t,F_1}[M_t\geq2/(\alpha r_t^*)\mid r_t^*]]\leq \mathbb E[\alpha r_t^*]=\alpha\P_{F_0^*,\infty}[\tau\geq t]\leq\alpha\P_{F_0,\infty}[\tau\geq t],$  $\forall F_0\in\mathcal P_0$.
 
 The last steps follow from \Cref{assmp-1}.
The rest of the proof is the same as in the proof of \Cref{thm:coverage-nonpara-known-pre}.
\end{proof}

\begin{proof}[Proof of \Cref{thm:len:comp}.]
    It follows from the definition of the least favorable distribution that 
$$\P_{F_0,T,F_1}\left(\prod_{i=t}^{{T}-1} \frac{f_0^* (X_i)}{f_1^* (X_i)}\leq 2/(\alpha r_t )\right)\leq \P_{F_0^*,T,F_1^*}\left(\prod_{i=t}^{{T}-1} \frac{f_0^* (X_i)}{f_1^* (X_i)}\leq 2/(\alpha r_t )\right).$$ Therefore,
for any $t\leq T-1$, using the above inequality and the assumption \eqref{eq-a} we get
    \begin{align*}
       & \P_{F_0,T,F_1}\left(\prod_{i=t}^{{T}-1} \frac{f_0^* (X_i)}{f_1^* (X_i)}\leq 2/(\alpha r_t )\mid \tau\geq T\right)\\
    &\leq\frac{1}{\P_{F_0,\infty}(\tau\geq T)}\P_{F_0,T,F_1}\left(\prod_{i=t}^{{T}-1} \frac{f_0^* (X_i)}{f_1^* (X_i)}\leq 2/(\alpha r_t )\right)\\
    &\leq\frac{1}{\P_{F_0^*,\infty}(\tau\geq T)}\P_{F_0^*,T,F_1^*}\left(\prod_{i=t}^{{T}-1} \frac{f_0^* (X_i)}{f_1^* (X_i)}\leq 2/(\alpha r_t )\right).
    \end{align*}
    Similarly, for any $t\geq T+1$,
\begin{align*}
   & \P_{F_0,T,F_1}\left(\prod_{i=T}^{{t}-1} \frac{f_1^* (X_i)}{f_0^* (X_i)}\leq 2/(\alpha r_t ), \tau\geq t\mid \tau\geq T\right)\\
&\leq\frac{1}{\P_{F_0,\infty}(\tau\geq T)}\P_{F_0,T,F_1}\left(\prod_{i=T}^{{t}-1} \frac{f_1^* (X_i)}{f_0^*(X_i)}\leq 2/(\alpha r_\tau)\right)\\
&\leq\frac{1}{\P_{F_0,\infty}(\tau\geq T)}\left[\P_{F_0,T,F_1}\left(\prod_{i=T}^{{t}-1} \frac{f_1^* (X_i)}{f_0^* (X_i)}\leq 2/(\alpha \eta )\right)+\P_{F_0,T,F_1}(r_\tau\leq\eta)\right]\\
&\leq\frac{1}{\P_{F_0^*,\infty}(\tau\geq T)}\left[\P_{F_0^*,T,F_1^*}\left(\prod_{i=T}^{{t}-1} \frac{f_1^* (X_i)}{f_0^* (X_i)}\leq 2/(\alpha \eta )\right)+\P_{F_0^*,T,F_1^*}(r_\tau\leq\eta)\right].
\end{align*}
In the last step above, we used the inequality $\P_{F_0,\infty}(\tau\geq T)\geq \P_{F_0^*,\infty}(\tau\geq T)$, which follows from \eqref{eq-a}, and 
$\P_{F_0,T,F_1}\left(\prod_{i=T}^{{t}-1} \frac{f_1^* (X_i)}{f_0^* (X_i)}\leq 2/(\alpha \eta )\right)$ $\leq \P_{F_0^*,T,F_1^*}\left(\prod_{i=T}^{{t}-1} \frac{f_1^* (X_i)}{f_0^* (X_i)}\leq 2/(\alpha \eta )\right)$ 
follows from the fact that $(F_0^*,F_1^*)$ is an LFD pair, and 
$\P_{F_0,T,F_1}(r_\tau\leq\eta)\leq \P_{F_0^*,T,F_1^*}(r_\tau\leq\eta)$ follows from  \eqref{eq-c}.

The remaining calculations are the same as those done in the proof of the previous theorem, with $f_i$ replaced by $f_i^*.$
\end{proof}

\begin{proof}[Proof of \Cref{thm:exact-cond-coverage}.]
First note the fact that \(\P_{F_0,T,F_1}(\tau< T)=\mathbb \P_{F_0,\infty}(\tau< T),\)
since the event $[\tau< T]=\displaystyle\cup_{i=1}^{T-1}[\tau=i]$ does not depend on $\{X_i:i\geq T\}$. 

 Defining $C_{t,L}=\operatorname{Quantile}(1-\hat\alpha_{t};M_t,M_{t,L}^1,\cdots,M_{t,L}^B)$, and $\hat\alpha_{t}=\alpha r_{t}$. Now,
\begin{align*}
  &\P_{F_0,T,F_1}( M_T\leq C_{T,L})\\
   &=\P_{F_0,T,F_1}(T\in \{t\in\N: M_t\leq C_{t,L}\})\\
   &=\P_{F_0,T,F_1}(T\in \{t\in\N: M_t\leq C_{t,L}\},\tau\geq T) +\P_{F_0,T,F_1}(T\in \{t\in\N: M_t\leq C_{t,L}\},\tau< T)\\
    &\leq \P_{F_0,T,F_1}(T\in \{t\in\N: t\leq \tau, M_t\leq C_{t,L}\}\mid \tau\geq T)\P_{F_0,T,F_1}(\tau\geq T)+\mathbb \P_{F_0,T,F_1}(\tau< T)\\
    &= \P_{F_0,T,F_1}(T\in \mathcal{C}\mid \tau\geq T)\mathbb \P_{F_0,\infty}(\tau\geq T)+1-\mathbb \P_{F_0,\infty}(\tau\geq T),
\end{align*}
where we twice invoked \(\P_{F_0,T,F_1}(\tau< T)=\mathbb \P_{F_0,\infty}(\tau< T)\) as explained earlier.
So, we obtain
\begin{equation}
\label{eq:cond-level}
    \P_{F_0,T,F_1}(T\in \mathcal{C}\mid \tau\geq T)\geq  1-\frac{P_T( M_T> C_{T,L})}{ P_\infty(\tau\geq T)}.
\end{equation}
 First, observe that $M_t,M_{t,\infty}^1,\cdots,M_{t,\infty}^B$ are i.i.d.\ under $H_{0,t}$. Hence, $M_t\leq \operatorname{Quantile}(1-c;M_t,M_{t,\infty}^1,\cdots,M_{t,\infty}^B)$ $\iff M_t$ is one among the $\ceil{(1-c)(B+1)}$ smallest of the collection $\{M_t,M_{t,\infty}^1,\cdots,M_{t,\infty}^B\}$. So, for any fixed $c\in(0,1)$,
\begin{equation*}
  \P_{F_0,t,F_1}(M_t\leq \operatorname{Quantile}(1-c;M_t,M_{t,\infty}^1,\cdots,M_{t,\infty}^B))\geq\frac{\ceil{(1-c)(B+1)}}{B+1}\geq 1-c \text{ for all } t.
\end{equation*}
Since $\hat\alpha_{t}$ is independent of $M_t$, it follows that
\begin{equation*}
  \P_{F_0,t,F_1}(M_t\leq C_{t,\infty}\mid \hat\alpha_{t})\geq\frac{\ceil{(1- \hat\alpha_{t})(B+1)}}{B+1}\geq 1- \hat\alpha_{t}, \text{ almost surely for all } t.
\end{equation*}
Therefore, we have
\begin{equation*}
  \P_{F_0,t,F_1}(M_t\leq C_{t,\infty})=\mathbb E_t(\mathbb P_t(M_t\leq C_{t,\infty}\mid \hat\alpha_{t}))\geq \mathbb E_t( 1-\hat\alpha_{t})=1-\alpha\P_{F_0,t,F_1}(\tau\geq t).
\end{equation*}
As noted earlier, $\P_{F_0,t,F_1}(\tau\geq t)=\P_{F_0,\infty}(\tau\geq t)$ and hence, $\P_{F_0,t,F_1}(M_t\leq C_{t,\infty})\geq 1-\alpha\P_{F_0,\infty}(\tau\geq t).$
Also, for all $L\in\N$, it follows from the definitions that $M_{t,L}^j\geq M_{t,\infty}^j$. Therefore, $C_{t,L} \geq C_{t,\infty}$.
 Thus, for all $t$,
 \begin{equation}
 \label{level-h0t}
     \P_{F_0,t,F_1}(M_t> C_{t,L})\leq \P_{F_0,t,F_1}(M_t> C_{t,\infty})\leq \alpha \P_{F_0,\infty}(\tau\geq t).
 \end{equation}
Therefore, combining \eqref{eq:cond-level} and \eqref{level-h0t}, we obtain $\P_{F_0,T,F_1}(T\in \mathcal{C}\mid \tau\geq T)\geq  1-\alpha.$
\end{proof} 
\begin{proof}[Proof of \Cref{thm:coverage-comp-post}.]
  First, observe that $M_t,\{M_{t,\infty}^j(\theta_1)\}_{j=1}^B$ are i.i.d.\ under $\P_{\theta_0,t,\theta_1}$.
So, $M_t\leq \operatorname{Quantile}(1-\alpha;M_t,\{M_{t,\infty}^j(\theta_1)\}_{j=1}^B)$ $\iff M_t$ is one among the $\ceil{(1-\alpha)(B+1)}$ smallest of $M_t,\{M_{t,\infty}^j(\theta_1)\}_{j=1}^B\}$. Hence, for any fixed $c\in(0,1)$,
\begin{equation}
  \P_{\theta_0,t,\theta_1}(M_t\leq \operatorname{Quantile}(1-c;M_t,\{M_{t,\infty}^j(\theta_1)\}_{j=1}^B))\geq\frac{\ceil{(1-c)(B+1)}}{B+1}\geq 1-c, \text{ for all } t.
\end{equation}
Define, $C_{t,L}^\alpha=\sup_{\theta\in\mathcal{S}^\prime_{\tau-t+1}}\operatorname{Quantile}(1-\alpha;M_t,\{M_{t,L}^j(\theta)\}_{j=1}^B)$.
Note that $\theta_1\in\mathcal{S}^\prime_{\tau-t+1}$ imply that $C_{t,L}^\alpha\geq C_{t,\infty}^\alpha\geq\operatorname{Quantile}(1-\alpha;M_t,\{M_{t,\infty}^j(\theta_1)\}_{j=1}^B\}),$ for all $t\in\N,\alpha\in(0,1)$. 

For all $t$, with  $r_t = \sum_{i=1}^N \mathds{I}(\tau_i\geq 
 t)/N$, since $r_t$ is independent to $\{X_n\}_n$, we have
 \begin{align*}
     \P_{\theta_0,t,\theta_1}(\theta_1\in\mathcal{S}^\prime_{\tau-t+1})&=\mathbb E(\P_{\theta_0,t,\theta_1}(\theta_1\in\operatorname{CS}(X_t,\cdots,X_{\tau};1-\beta r_t)\mid r_t))\\
     &\geq \mathbb E(1-\beta r_t)= 1-\beta\P_{\theta_0,\infty}(\tau\geq t).
 \end{align*}
Then, we have
\begin{align*}
    &\P_{\theta_0,t,\theta_1}( M_t> C_{t,L}^{\alpha r_t})\\
    &\leq \P_{\theta_0,t,\theta_1}( M_t> C_{t,L}^{\alpha r_t}, \theta_1\in\mathcal{S}^\prime_{\tau-t+1})+\P_{\theta_0,t,\theta_1}(\theta_1\notin\mathcal{S}^\prime_{\tau-t+1})\\
    &\leq \P_{\theta_0,t,\theta_1}( M_t> \operatorname{Quantile}(1-\alpha r_{t};M_t,\{M_{t,\infty}^j(\theta_1)\}_{j=1}^B))+\beta \P_{\theta_0,\infty}(\tau\geq t)\\
    &\leq \mathbb E\left(\P_{\theta_0,t,\theta_1}\left( M_t> \operatorname{Quantile}(1-\alpha r_{t};M_t,\{M_{t,\infty}^j(\theta_1)\}_{j=1}^B)\mid r_{t}\right) \right)+\beta \P_{\theta_0,\infty}(\tau\geq t)\\
&\leq \mathbb E(\alpha r_{t})+\beta \P_{\theta_0,\infty}(\tau\geq t)\\
&= (\alpha+\beta) \times \P_{\theta_0,\infty}(\tau\geq t).
\end{align*}
We also have that
\begin{align*}
    \P_{\theta_0,T,\theta_1}(T\in \{t \geq 1: M_t\leq C_{t,L}^{\alpha}\},\tau< T)\leq \P_{\theta_0,T,\theta_1}(\tau< T)=\mathbb \P_{\theta_0,\infty}(\tau< T),
\end{align*}
the last equality holds because the event $[\tau< T]$ does not depend $X_i$ for $i\geq T$. Now,
\begin{align*}
  &\P_{\theta_0,T,\theta_1}( M_T\leq C_{T,L}^{\alpha r_t})\\
   &=\P_{\theta_0,T,\theta_1}(T\in \{t\geq 1: M_t\leq C_{t,L}^{\alpha r_t}\})\\
   &\leq\P_{\theta_0,T,\theta_1}(T\in \{t\geq 1: M_t\leq C_{t,L}^{\alpha r_t}\}\mid\tau\geq T)\P_{\theta_0,T,\theta_1}(\tau\geq T)+ \P_{\theta_0,T,\theta_1}(\tau< T)\\
    &= \P_{\theta_0,T,\theta_1}(T\in \mathcal{C}_1\mid \tau\geq T)\P_{\theta_0,\infty}(\tau\geq T)+1-\P_{\theta_0,\infty}(\tau\geq T).
\end{align*}
Since $\P_{\theta_0,T,\theta_1}( M_T\leq C_{T,L}^{\alpha r_t})\geq1-(\alpha+\beta) \times \P_{\theta_0,\infty}(\tau\geq T)$, we obtain $\P_{\theta_0,T,\theta_1}(T\in \mathcal{C}_1\mid \tau\geq T)\geq 1-\alpha-\beta$ from the above inequality.
\end{proof}

\begin{proof}[Proof of \Cref{thm:coverage-comp}.]
First, observe that $M_t,\{M_{t,\infty}^j(\theta_0,\theta_1)\}_{j=1}^B$ are i.i.d.\ under $\P_{\theta_0,t,\theta_1}$.

So, $M_t\leq \operatorname{Quantile}(1-\alpha;M_t,\{M_{t,\infty}^j(\theta_0,\theta_1)\}_{j=1}^B)$ $\iff M_t$ is one among the $\ceil{(1-\alpha)(B+1)}$ smallest of $M_t,\{M_{t,\infty}^j(\theta_0,\theta_1)\}_{j=1}^B\}$.
Hence, for any fixed $c\in(0,1)$,
\begin{equation}
\P_{\theta_0,t,\theta_1}(M_t\leq \operatorname{Quantile}(1-c;M_t,\{M_{t,\infty}^j(\theta_0,\theta_1)\}_{j=1}^B))\geq\frac{\ceil{(1-c)(B+1)}}{B+1}\geq 1-c, \text{ for all } t.
\end{equation}
Define, $C_{t,L}^\alpha=\operatorname{Quantile}(1-\alpha;M_t,\{\max_{\theta\in \mathcal{S}_{t-1},\theta^\prime\in\mathcal{S}^\prime_{\tau-t+1}}M_{t,L}^j(\theta,\theta^\prime)\}_{j=1}^B)$.
Note that $\theta_0\in \mathcal{S}_{t-1}$ and $\theta_1\in\mathcal{S}^\prime_{\tau-t+1}$ imply that $C_{t,L}^\alpha\geq C_{t,\infty}^\alpha\geq\operatorname{Quantile}(1-\alpha;M_t,\{M_{t,\infty}^j(\theta_0,\theta_1)\}_{j=1}^B\}),$ for any $\alpha\in(0,1).$ 
  By Assumption 4.1, we have that $\P_{\theta_0,\infty}(\tau\geq t)\geq \P_{\theta_0^*,\infty}(\tau\geq t)$.

Also, for all $t$, with  $r_{t}^* = \sum_{i=1}^N \mathds{I}(\tau_i\geq                            t)/N$, since $r_{t}^*$ is independent to $\{X_n\}_n$, 
\begin{align*}
\P_{\theta_0,t,\theta_1}(\theta_1\in\mathcal{S}^\prime_{\tau-t+1})&=\mathbb E(\P_{\theta_0,t,\theta_1}(\theta_1\in\operatorname{CS}(X_t,\cdots,X_{\tau};1-\beta r_{t}^*)\mid r_{t}^*))\\
&\geq \mathbb E(1-\beta r_{t}^*)= 1-\beta\P_{\theta_0^*,\infty}(\tau\geq t).
\end{align*}
And similarly,
\begin{align*}
\P_{\theta_0,t,\theta_1}(\theta_0\in\mathcal{S}_{t-1})&=\mathbb E(\P_{\theta_0,t,\theta_1}(\theta_1\in\operatorname{CI}(X_1,\cdots,X_{t-1};1-\gamma r_{t}^*)\mid r_{t}^*))\\
&\geq \mathbb E(1-\gamma r_{t}^*)= 1-\gamma\P_{\theta_0^*,\infty}(\tau\geq t).
\end{align*}
Then, we have
\begin{align*}
&\P_{\theta_0,t,\theta_1}( M_t> C_{t,L}^{\alpha r_{t}^*})\\
&\leq \P_{\theta_0,t,\theta_1}( M_t> C_{t,L}^{\alpha r_{t}^*}, \theta_0\in \mathcal{S}_{t-1},\theta_1\in\mathcal{S}^\prime_{\tau-t+1})+\P_{\theta_0,t,\theta_1}(\theta_0\notin \mathcal{S}_{t-1}))+\P_{\theta_0,t,\theta_1}(\theta_1\notin\mathcal{S}^\prime_{\tau-t+1})\\
&\leq \P_{\theta_0,t,\theta_1}( M_t> \operatorname{Quantile}(1-\alpha r_{t}^*;M_t,\{M_{t,\infty}^j(\theta_0,\theta_1)\}_{j=1}^B))+(\beta+\gamma)\P_{\theta_0^*,\infty}(\tau\geq t)\\
&\leq \mathbb E\left(\P_{\theta_0,t,\theta_1}\left( M_t> \operatorname{Quantile}(1-\alpha r_{t}^*;M_t,\{M_{t,\infty}^j(\theta_0,\theta_1)\}_{j=1}^B)\mid r_{t}^*\right) \right)+(\beta+\gamma) \P_{\theta_0,\infty}(\tau\geq t)\\
&\leq \mathbb E(\alpha r_{t}^*)+(\beta + \gamma)\times \P_{\theta_0,\infty}(\tau\geq t)\\
&=\alpha\times \P_{\theta_0^*,\infty}(\tau\geq t)+(\beta+\gamma) \times \P_{\theta_0,\infty}(\tau\geq t)\\
&\leq (\alpha+\beta+\gamma) \times \P_{\theta_0,\infty}(\tau\geq t).
\end{align*}
We also have that
\begin{align*}
\P_{\theta_0,T,\theta_1}(T\in \{t \geq 1: M_t\leq C_{t,L}^{\alpha r_{t}^*}\},\tau< T)\leq \P_{\theta_0,T,\theta_1}(\tau< T)=\mathbb \P_{\theta_0,\infty}(\tau< T),
\end{align*}
the last equality holds because the event $[\tau< T]$ does not depend $X_i$ for $i\geq T$.  Now,
\begin{align*}
&\P_{\theta_0,T,\theta_1}( M_T\leq C_{t,L}^{\alpha r_{t}^*})\\
&=\P_{\theta_0,T,\theta_1}(T\in \{t\geq 1: M_t\leq C_{t,L}^{\alpha r_{t}^*}\})\\
&=\P_{\theta_0,T,\theta_1}(T\in \{t\geq 1: M_t\leq C_{t,L}^{\alpha r_{t}^*}\}\mid\tau\geq T)\P_{\theta_0,T,\theta_1}(\tau\geq T) +\P_{\theta_0,T,\theta_1}(\tau< T)\\
&\leq \P_{\theta_0,T,\theta_1}(T\in \mathcal{C}_1\mid \tau\geq T)\P_{\theta_0,\infty}(\tau\geq T)+1-\P_{\theta_0,\infty}(\tau\geq T).
\end{align*}
Since $\P_{\theta_0,T,\theta_1}( M_T\leq C_{t,L})\geq1-(\alpha+\beta+\gamma) \times \P_{\theta_0,\infty}(\tau\geq T)$, we obtain $\P_{\theta_0,T,\theta_1}(T\in \mathcal{C}_1\mid \tau\geq T)\geq 1-\alpha-\beta-\gamma$ from the above inequality.
\end{proof}

\begin{proof}[Proof of \Cref{thm:coverage-theta0-theta1}.]
First note that $\P_{\theta_0,T,\theta_1}\left(\theta_1\notin \displaystyle\operatorname{CS}(X_{T},\cdots,X_{\tau};1-\eta_1 r_T^*)\mid r_T^*\right)\leq\eta_1 r_T^*$ and $\P_{\theta_0,T,\theta_1}\left(\theta_0\notin \displaystyle\operatorname{CI}(X_{1},\cdots,X_{T-1};1-\eta_0 r_T^*)\mid r_T^*\right)\leq\eta_0 r_T^*$, because $r_T^*$ is independent of $X_i$'s. Now,
\begin{align*}
&\P_{\theta_0,T,\theta_1}\left(\theta_1\notin \displaystyle\cup_{t\in\mathcal{C}}\operatorname{CS}(X_{t},\cdots,X_{\tau};1-\eta_1 r_t^*)\mid \tau\geq T\right)\\
&\leq\P_{\theta_0,T,\theta_1}\left(\theta_1\notin \displaystyle\cup_{t\in\mathcal{C}}\operatorname{CS}(X_{t},\cdots,X_{\tau};1-\eta_1 r_t^*),T\in\mathcal{C}\mid \tau\geq T\right)+\P_{\theta_0,T,\theta_1}\left(T\notin\mathcal{C}\mid \tau\geq T\right)\\
&\leq\P_{\theta_0,T,\theta_1}\left(\theta_1\notin \displaystyle\operatorname{CS}(X_{T},\cdots,X_{\tau};1-\eta_1 r_T^*)\mid \tau\geq T\right)+\alpha\\
&\leq\P_{\theta_0,T,\theta_1}\left(\theta_1\notin \displaystyle\operatorname{CS}(X_{T},\cdots,X_{\tau};1-\eta_1 r_T^*)\right)/\P_{\theta_0,\infty}(\tau\geq T)+\alpha\\
&=\mathbb E(\P_{\theta_0,T,\theta_1}\left(\theta_1\notin \displaystyle\operatorname{CS}(X_{T},\cdots,X_{\tau};1-\eta_1 r_T^*)\mid r_T^*\right))/\P_{\theta_0,\infty}(\tau\geq T)+\alpha\\
&\leq\mathbb E(\eta_1 r_T^*)/\P_{\theta_0,\infty}(\tau\geq T)+\alpha\\
&= \alpha+\eta_1 \P_{\theta_0^*,\infty}(\tau\geq T)/\P_{\theta_0,\infty}(\tau\geq T)\\
&\leq \alpha+\eta_1,
\end{align*}
where the last inequality follows from \Cref{assmp-1}. Similarly,
\begin{align*}
&\P_{\theta_0,T,\theta_1}\left(\theta_1\notin \displaystyle\cup_{t\in\mathcal{C}}\operatorname{CI}(X_{1},\cdots,X_{t-1};1-\eta_0 r_t^*)\mid \tau\geq T\right)\\
&\leq\P_{\theta_0,T,\theta_1}\left(\theta_1\notin \displaystyle\cup_{t\in\mathcal{C}}\operatorname{CI}(X_{1},\cdots,X_{t-1};1-\eta_0 r_t^*),T\in\mathcal{C}\mid \tau\geq T\right)+\P_{\theta_0,T,\theta_1}\left(T\notin\mathcal{C}\mid \tau\geq T\right)\\
&\leq\P_{\theta_0,T,\theta_1}\left(\theta_1\notin \displaystyle\operatorname{CI}(X_{1},\cdots,X_{T-1};1-\eta_0 r_T^*)\mid \tau\geq T\right)+\alpha\\
&\leq\P_{\theta_0,T,\theta_1}\left(\theta_1\notin \displaystyle\operatorname{CI}(X_{1},\cdots,X_{T-1};1-\eta_0 r_T^*)\right)/\P_{\theta_0,\infty}(\tau\geq T)+\alpha\\
&=\mathbb E(\P_{\theta_0,T,\theta_1}\left(\theta_1\notin \displaystyle\operatorname{CI}(X_{1},\cdots,X_{T-1};1-\eta_0 r_T^*)\mid r_T^*\right))/\P_{\theta_0,\infty}(\tau\geq T)+\alpha\\
&\leq\mathbb E(\eta_0 r_T)/\P_{\theta_0,\infty}(\tau\geq T)+\alpha\\
&\leq \alpha+\eta_0.
\end{align*}
This completes the proof.
\end{proof}

\subsection{Auxilary Lemmas}
\begin{lemma}
\label{lem:ui}
 Define, $r_T=\frac{1+\sum_{i=1}^N\mathds{1}(\tau\geq T)}{N+1}$. Then, for fixed $s\in(0,1)$,      $\{r_T^{-s}\}_{N \geq 1}$ is uniformly integrable.
\end{lemma}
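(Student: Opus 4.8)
The plan is to show that the family $\{r_T^{-s}\}_{N\ge 1}$ is uniformly integrable by bounding the $L^p$ norm of $r_T^{-s}$ uniformly in $N$ for some $p>1$; boundedness in $L^p$ for $p>1$ is a standard sufficient condition for uniform integrability (via the de la Vall\'ee--Poussin criterion, or directly). Concretely, writing $S_N:=\sum_{i=1}^N\mathds{1}(\tau_i\ge T)$, we have $r_T=\frac{1+S_N}{N+1}$, so $r_T^{-s}=(N+1)^s(1+S_N)^{-s}$, and it suffices to control $\mathbb{E}\big[(N+1)^{sp}(1+S_N)^{-sp}\big]$ for a fixed $p>1$ with $sp<1$ still possible but, more importantly, for which the moment stays bounded in $N$.

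First I would set $q:=sp$ and estimate the negative moment $\mathbb{E}\big[(1+S_N)^{-q}\big]$, where $S_N\sim\mathrm{Binomial}(N,p_T)$ with $p_T=\P_{F_0,\infty}(\tau\ge T)\in(0,1]$ (this is exactly where the hypothesis $p_T\neq 0$, guaranteed in the ambient setting, is used). The key inequality I would invoke is the standard bound on inverse binomial moments: for $S_N\sim\mathrm{Binomial}(N,p_T)$,
\begin{equation}
\mathbb{E}\big[(1+S_N)^{-q}\big]\le \frac{C_q}{(N p_T)^{q}},
\end{equation}
valid for all $N$ with a constant $C_q$ depending only on $q$. (For the special case $q=1$ this is the classical identity $\mathbb{E}[(1+S_N)^{-1}]=\frac{1-(1-p_T)^{N+1}}{(N+1)p_T}\le \frac{1}{(N+1)p_T}$; for general $q$ one obtains the analogous bound by Jensen's inequality applied to the convex map $x\mapsto x^{-q}$ together with the $q=1$ case when $q\le1$, or by a direct Chernoff-type split of the event $\{S_N\le Np_T/2\}$ versus its complement for general $q$.)

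Combining these estimates,
\begin{equation}
\mathbb{E}\big[r_T^{-sp}\big]=(N+1)^{sp}\,\mathbb{E}\big[(1+S_N)^{-sp}\big]\le (N+1)^{q}\cdot\frac{C_q}{(Np_T)^{q}}=\frac{C_q}{p_T^{q}}\Big(\frac{N+1}{N}\Big)^{q}\le \frac{2^{q}C_q}{p_T^{q}},
\end{equation}
which is a bound independent of $N$. Thus $\sup_{N\ge1}\mathbb{E}\big[(r_T^{-s})^p\big]<\infty$ for this fixed $p>1$, and hence the family $\{r_T^{-s}\}_{N\ge1}$ is bounded in $L^p$, which implies uniform integrability.

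The main obstacle I anticipate is establishing the inverse-moment bound \eqref{} with the correct $N^{-q}$ decay uniformly in $N$ for non-integer $q>0$; the clean identity only holds for $q=1$. For $q\in(0,1]$ one can reduce to the $q=1$ case by Jensen (since $x\mapsto x^{-q}$ with $x=1+S_N$ relates to $(1+S_N)^{-1}$ via concavity/convexity arguments), but for $q>1$ one genuinely needs a concentration argument: split on whether $S_N\ge Np_T/2$, controlling the bulk by $(1+Np_T/2)^{-q}$ and the lower-tail event $\{S_N<Np_T/2\}$ by a Chernoff bound $e^{-cNp_T}$, which decays faster than any polynomial and so is harmless. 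Since we only need \emph{some} $p>1$, and $s<1$ is fixed, choosing $p>1$ with $q=sp$ as close to $s<1$ as we like keeps us in the easier regime $q<1$, so in fact the Jensen reduction to $q=1$ suffices and the concentration argument can be avoided entirely. I would therefore carry out the proof by fixing $p\in(1,1/s)$ so that $q=sp<1$, invoke the $q\le 1$ inverse-moment bound, and conclude $L^p$-boundedness.
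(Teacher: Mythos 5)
Your proposal is correct and follows essentially the same route as the paper: both establish uniform integrability by bounding a higher moment $\sup_N\mathbb{E}[r_T^{-sp}]$ via the classical identity $\mathbb{E}[(1+\mathrm{Bin}(N,p_T))^{-1}]=\frac{1-(1-p_T)^{N+1}}{(N+1)p_T}\le\frac{1}{(N+1)p_T}$. The only (immaterial) difference is that the paper takes $p=1/s$ so the higher moment is exactly $\mathbb{E}[r_T^{-1}]$ and the identity applies directly, whereas you take $p\in(1,1/s)$ and insert an extra Jensen step to reduce the exponent $q=sp<1$ to the $q=1$ case.
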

\begin{proof}
 A sufficient condition is to show that a higher moment is bounded: $\sup_N E[|r_T^{-s}|^{1+\delta}] < \infty$ for some $\delta > 0$. Take $\delta=\frac{1}{s}-1$, and  we need to show $\sup_N \mathbb E[r_T^{-1}] < \infty$.
Now, consider the variable $r_T^{-1} = \frac{N+1}{K_N+1}$, $K_N=\sum_{j=1}^N \mathds{1}(\tau_j \ge T)\sim$ Binomial$(N,p_T)$. We have $\mathbb E\left[ \frac{1}{K_N + 1} \right] = \frac{1}{(N+1)p_T} \sum_{j=1}^{N+1} \binom{N+1}{j} p_T^j (1-p_T)^{(N+1)-j}=\frac{1 - (1-p_T)^{N+1}}{(N+1)p_T}$, and 
$$\mathbb E[r_T^{-1}] = (N+1) \mathbb E\left[ \frac{1}{K_N + 1} \right] = \frac{1 - (1-p_T)^{N+1}}{p_T}\leq \frac{1}{p_T}.$$
This bound $\frac{1}{p_T}$ is a constant that does not depend on $N$. Hence,
 $r_t^{-s}$ is uniformly integrable.
\end{proof}
\begin{lemma}
\label{lem:r_tau}
Suppose, $\tau_1,\cdots,\tau_N$ are i.i.d. random variables. Define, $r_t=\frac{1+\sum_{i=1}^N\mathds{1}(\tau_i\geq t)}{N+1}$, for $t\in\N.$  If each $\tau_j $ is stochastically larger than a random variable $\tau$, then $r_{\tau}$ is a p-value.
\end{lemma}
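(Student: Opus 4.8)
The plan is to show that $r_\tau$ is \emph{super-uniform}, i.e.\ $\P(r_\tau \le u) \le u$ for every $u \in [0,1]$. Since $r_\tau$ takes values only in the grid $\{(1+k)/(N+1) : k = 0,1,\dots,N\}$, it suffices to verify the inequality at these grid points, and hence to prove that
\[
\P\!\left(S \le k\right) \le \frac{1+k}{N+1}, \qquad k = 0,1,\dots,N,
\]
where $S := \sum_{j=1}^N \mathds{1}(\tau_j \ge \tau)$, because $\{r_\tau \le (1+k)/(N+1)\} = \{S \le k\}$; for a general $u$ one then picks the largest grid point below $u$. Throughout I assume, as in the application in the proof of \Cref{thm:len}, that $\tau$ is independent of $\tau_1,\dots,\tau_N$.

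Next I would condition on $(\tau_1,\dots,\tau_N)$ and exploit monotonicity. Define the non-increasing count $g(x) := \#\{j : \tau_j \ge x\}$, so that $S = g(\tau)$ and $\{S \le k\} = \{\tau \ge \theta_k\}$ for the (data-dependent) threshold $\theta_k := \inf\{x : g(x) \le k\}$. Since each $\tau_j$ is stochastically larger than $\tau$, I introduce an independent copy $\tilde\tau$ of the common law of the $\tau_j$, so that $\tau \preceq_{\mathrm{st}} \tilde\tau$. For the constant $\theta_k$ (fixed after conditioning) and the upward-closed event $[\theta_k,\infty)$, stochastic dominance gives $\P(\tau \ge \theta_k \mid \tau_1,\dots,\tau_N) \le \P(\tilde\tau \ge \theta_k \mid \tau_1,\dots,\tau_N)$. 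Taking expectations over $(\tau_1,\dots,\tau_N)$ yields
\[
\P(S \le k) \le \P(\tilde S \le k), \qquad \tilde S := \#\{j : \tau_j \ge \tilde\tau\},
\]
since $\{\tilde S \le k\} = \{\tilde\tau \ge \theta_k\}$ by the same threshold identity. This reduces the claim to the fully exchangeable situation in which $\tilde\tau, \tau_1,\dots,\tau_N$ are i.i.d.

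Finally I would prove the exchangeable bound $\P(\tilde S \le k) \le (1+k)/(N+1)$ by a rank argument. Writing $W_0 := \tilde\tau$, $W_j := \tau_j$ and $R_i := \#\{0 \le l \le N : W_l \ge W_i\}$, one has $\tilde S = R_0 - 1$, so I must show $\P(R_0 \le 1+k) \le (1+k)/(N+1)$. The key combinatorial fact is that, for every realization and every integer $m$, at most $m$ indices satisfy $R_i \le m$: letting $v^\ast$ be the smallest $W_i$ among indices with $R_i \le m$, every such index has $W_i \ge v^\ast$, so their number is at most $\#\{l : W_l \ge v^\ast\} \le m$. Hence $\sum_{i=0}^N \mathds{1}(R_i \le m) \le m$; taking expectations and using exchangeability (all $R_i$ are identically distributed) gives $(N+1)\,\P(R_0 \le m) \le m$, i.e.\ $\P(R_0 \le m) \le m/(N+1)$. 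Setting $m = 1+k$ completes the chain of inequalities and proves that $r_\tau$ is a p-value.

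The main delicacy, rather than the algebra, is the presence of ties: the $\tau_i$ are integer-valued stopping times, so atoms and ties occur with positive probability, which is exactly why I use the $\ge$ convention in the rank count and why the combinatorial bound $\#\{i : R_i \le m\} \le m$ (valid even with ties) is the crucial ingredient. The other point requiring care is the reduction step, which relies on $\tau$ being independent of $\tau_1,\dots,\tau_N$ so that, after conditioning, $\theta_k$ is a genuine constant and the one-dimensional stochastic dominance $\tau \preceq_{\mathrm{st}} \tilde\tau$ may be applied to the upward-closed event $[\theta_k,\infty)$.
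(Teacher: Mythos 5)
Your proof is correct, and its overall strategy matches the paper's: first use stochastic dominance to replace $\tau$ by an independent copy $\tilde\tau$ of the common law of the $\tau_j$, reducing to the fully exchangeable case, and then invoke the conformal-style rank bound $\P(\tilde S\le k)\le (1+k)/(N+1)$. The two proofs differ in how each step is executed. For the reduction, the paper cites Theorem 1.A.2 of Shaked and Shanthikumar to produce a monotone coupling $\tau\stackrel{d}{=}\psi_1(Z)$, $\tilde\tau\stackrel{d}{=}\psi_2(Z)$ with $\psi_1\le\psi_2$ and substitutes directly; you instead condition on $(\tau_1,\dots,\tau_N)$, observe that $\{S\le k\}=\{\tau\in A\}$ for the upward-closed set $A=\{x:g(x)\le k\}$, and apply the definition of stochastic order. (Your identification of $A$ with $[\theta_k,\infty)$ is slightly imprecise — by left-continuity of $g$ the ray can be open at $\theta_k$ — but this is immaterial since stochastic dominance applies to every upward-closed set.) For the final step, the paper simply asserts the bound $\P[\sum_j\mathds{1}(\tau_j\ge\tau_{N+1})\le k]\le\frac{1+k}{N+1}$, whereas you prove it via the rank argument $\#\{i:R_i\le m\}\le m$, which correctly handles ties; this is a genuine gain in self-containedness. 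You also explicitly flag the independence of $\tau$ from $\tau_1,\dots,\tau_N$, which is not stated in the lemma but is needed in both proofs (in the paper's version, to equate the joint law of $(\tau;\tau_1,\dots,\tau_N)$ with that of $(\psi_1(Z);\tau_1,\dots,\tau_N)$) and does hold in the application, where the $\tau_j$ come from independently simulated streams.
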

\begin{proof}
Suppose $\tau_{N+1}$ is independent and identically distributed (i.i.d.) with  $\tau_1,\cdots,\tau_N$.
By Theorem 1.A.2 of \cite{shaked2007stochastic}, $\tau_{N+1}$ is stochastically larger than $\tau$ implies that there exist a random variable $Z$ and functions $\psi_{1}$ and $\psi_{2}$ such that $\psi_{1}(z)\leq \psi_{2}(z)$ for all $z$ and 
$\tau\stackrel{d}{=}\psi_{1}(Z)$ and 
$\tau_{N+1}\stackrel{d}{=}\psi_{2}(Z)$. 
By direct calculation,
    \begin{align*}
        \P[r_{\tau}  \leq (1+k)/(N+1)]=\P[\sum_{j=1}^N \mathds{1}(\tau_j  \ge \tau)\leq k]&=\P[\sum_{j=1}^N \mathds{1}(\tau_j  \ge \psi_{1}(Z))\leq k]\\
        &\leq \P[\sum_{j=1}^N \mathds{1}(\tau_j  \ge \psi_{2}(Z))\leq k]\\
        &=\P[\sum_{j=1}^N \mathds{1}(\tau_j  \ge \tau _{N+1})\leq k]\leq \frac{1+k}{N+1},
    \end{align*}
    for all $k=0,1,\cdots,N.$
Therefore, $r_{\tau}$ is a p-value.
\end{proof}

\section{Subtleties regarding Subroutine \texorpdfstring{$\mathcal L$}{Lg} in \Cref{algo:1-exact}}
\label{sec:subroutine-L}

The problem with $L=\infty$ is that the stopping time may never be reached for some sampled sequence. One can set $L=\infty$ if $\mathcal A$ is known to satisfy
    $\P_T(\tau<\infty)=1$, for all $T\in\N$
 (if there is a change, $\mathcal A$ stops almost surely). This is a very mild assumption because any reasonable change detector should eventually raise an alarm if there is a change; indeed, for example, this holds for all algorithms with finite ARL. If $\mathcal A$ is not known to satisfy this condition, one can simply pick $L < \infty$ (The value of 
$L$ does not need to be a fixed constant; it can be set adaptively depending on the observed data sequence, for instance, one can choose $L$ to be $2\tau$.)
If $L<\infty$ and the $j$th sequence stopped because $\mathcal A$ raised an alarm at or before $L$, then we define $M^j_{t,L}$ as usual. If $\mathcal A$ had not stopped by time $L$, we then define $M^j_{t,L} = \infty$ to ensure a lower bound on the original test statistic (if we could compute it). For $L<\infty$,
\begin{equation}
\label{eq:Mt-L}
    M^j_{t,L} := \begin{cases}
    -\infty, &\text{if }t>\tau_j^\prime,\\
        M_t^{(\tau_j^\prime)}(X_1^j,\cdots,X_{\tau_j^\prime}^j), &\text{if } t\leq\tau_j^\prime\leq L,\\
        \infty, &\text{if }t\leq\tau_j^\prime \text{ and } \tau_j^\prime > L.\\
    \end{cases}
\end{equation}
\vspace{-0.1cm}
We still need a definition of $M^j_{t,\infty}$ in the case that $L=\infty$ and $\tau_j =\infty$. We define
$ M^j_{t,\infty} :=
M_t^{(\tau_j^\prime)}(X_1^j,\cdots,X_{\tau_j^\prime}^j), \text{if } t\leq\tau_j^\prime<\infty, $ and $
        M^j_{t,\infty} := -\infty, \text{if } t>\tau_j^\prime \text{ or }\tau_j^\prime =\infty.$


\section{Computationally and statistically efficient subroutines in \Cref{algo:comp-post}}
\label{sec:subroutines-comp-post}

Note that the test statistic $M_t$ for testing $\mathcal{H}_{0,t}$ reduces to
     \begin{equation}
    \label{eq:test-stat-comp-post}
        M_t=\begin{cases}
             \frac{L_{\hat{T}}(\hat \theta_{1,\hat{T}:\tau};X_1,\cdots,X_\tau)}{L_t(\hat \theta_{1,t:\tau};X_1,\cdots,X_\tau)}, &\text{if } t\leq\tau<\infty, \\
             -\infty, &\text{if }t>\tau, \text{ or }\tau =\infty.\\

        \end{cases}
    \end{equation}
For computational and statistical efficiency, we generate sequences from all $t\in\{1,\cdots,\tau\}$ and $\theta\in\mathcal{S}^\prime_{\tau-t+1}$ simultaneously using a single shared source of randomness. To achieve this, we assume that we have a fixed element $\theta^{\prime\prime}\in\Theta$ in the parameter space and a function $G$, such that $X\sim F_{\theta^{\prime\prime}}$ implies $G(X,\theta)\sim F_\theta, \forall \theta\in\Theta$. Therefore, for each $j=1,\cdots,B$, we generate only one data sequence $\{X_n^j\}_n\stackrel{i.i.d.}{\sim}F_{\theta^{\prime\prime}}$, and then for each $t\in\{1,\cdots,\tau\}$ and $\theta\in\mathcal{S}^\prime_{\tau-t+1}$, we obtain sequences $\{Y_n^t(X_{n}^j,\theta)\}_n$ (having joint distribution $\P_{\theta_0,t,\theta}$) by applying the function  $G(.,\theta_0)$ to $X_{1}^j,\cdots,X_{t-1}^j$ and $G(.,\theta)$ to $X_{t}^j,X_{t+1}^j,\cdots$, i.e.,
\begin{equation}
    Y_n^t(x,\theta):=\begin{cases}
        G(x,\theta_0), &\text{if } n<t\\
        G(x,\theta), &\text{ otherwise. }
    \end{cases}
\end{equation}
Let us provide a few examples of the function $G$. If the distributions in the parametric family are continuous, the inverse distribution function $F_{\theta}^{-1}$ exists, and it is easy to verify that $G(x, \theta) = F_{\theta}^{-1}(F_{\theta'_t}(x))$ works. In the case of a location family, a simple transformation is $G(x, \theta) = x+\theta-\theta^{\prime\prime}$ and similarly, for scale family, we have 
$G(x, \theta) = x\times\theta/\theta^{\prime\prime}$.

Let us now fix $t\in\{1,\cdots,\tau\}$ and consider the test statistic $M_t$ of the form in \eqref{eq:test-stat-nonpara}, with some $R^{(t)}_{n}=R^{(t)}_{n}(\{X_n\}_n)$ and $S^{(t)}_{n}=S^{(t)}_{n}(\{X_n\}_n)$.
Computing the detection time $\tau_{j,t}^\theta$ for each $\theta\in\mathcal{S}^\prime_{\tau-t+1}$, and finding whether $\text{there exists }\theta \in \mathcal{S}^\prime_{\tau-t+1}$ such that $M_t\leq\operatorname{Quantile}(1-\hat\alpha_{t};M_t,\{M_{t,L}^j(\theta)\}_{j=1}^B)$ are both computationally infeasible, unless $\mathcal{S}^\prime_{\tau-t+1}$ is a finite set.
However, it is often straightforward to compute  $t_1^j= \sup_{\theta\in \mathcal{S}^\prime_{\tau-t+1}}\tau^\theta_{j,t}$.
Also, we can often derive a suitable upper bound on the threshold, which is computationally and statistically efficient. We consider the following upper bound:
\begin{equation}
\label{eq:vt}
    V_{t,L}^j:=\begin{cases}
            \displaystyle\max\left\{ \max_{t\leq n\leq t_1^j}\sup_{\theta\in\mathcal{S}^\prime_{\tau-t+1}} R^{(t)}_{n}\left(\{Y_n^t(X_{n}^j,\theta)\}_n\right), \max_{1\leq n\leq t-1}\sup_{\theta\in\mathcal{S}^\prime_{\tau-t+1}}S^{(t)}_{n}\left(\{Y_n^t(X_{n}^j,\theta)\}_n\right)\right\},\\
            \hspace{12cm}\text{if } t\leq t_1^j\leq L;\\
             -\infty, \text{ if } t>t_1^j;\\
             \infty, \text{ if } t\leq t_1^j \text{ and } t_1^j>L.
        \end{cases}
\end{equation}
 We consider the following confidence set:
 \begin{equation}
 \label{set-gaussian-known-pre}
     \mathcal{C}^\prime_1=\{t\in\{1,\cdots,\tau\}: M_t\leq\operatorname{Quantile}(1-\hat\alpha_{t};M_t,V_{t,L}^1,\cdots,V_{t,L}^B\} \text{ and}
 \end{equation}
We observe that $\mathcal{C}_1^\prime\supseteq \mathcal{C}_1$, where $\mathcal{C}_1$ is the set produced by \Cref{algo:comp-post}, since
 \begin{align*}
 \operatorname{Quantile}(1-\hat\alpha_{t};M_t,V_{t,L}^1,\cdots,V_{t,L}^B\}&\geq  \operatorname{Quantile}(1-\hat \alpha_{t};M_t,\{\sup_{\theta\in\mathcal{S}^\prime_{\tau-t+1}}M_{t,L}^j(\theta)\}_{j=1}^B)\\
 &\geq \sup_{\theta\in\mathcal{S}^\prime_{\tau-t+1}}\operatorname{Quantile}(1-\hat \alpha_{t};M_t,\{M_{t,L}^j(\theta)\}_{j=1}^B).
 \end{align*}
  Thus,  we obtain the following coverage guarantee as an immediate corollary of 
\Cref{thm:coverage-comp-post}.

 \begin{corollary}
 $\mathcal{C}^\prime_1$ defined in \eqref{set-gaussian-known-pre}  satisfies the same coverage guarantees as $\mathcal{C}_1$ in \Cref{thm:coverage-comp-post}.

     \end{corollary}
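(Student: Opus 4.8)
The plan is to deduce the corollary from \Cref{thm:coverage-comp-post} by showing that $\mathcal{C}^\prime_1$ in \eqref{set-gaussian-known-pre} is, on every sample path, a superset of the set $\mathcal{C}_1$ produced by \Cref{algo:comp-post} (i.e.\ defined in \eqref{eq:conf-comp-post}), and then invoking monotonicity of coverage. Indeed, once $\mathcal{C}_1\subseteq\mathcal{C}^\prime_1$ holds deterministically, the event inclusion $\{T\in\mathcal{C}_1\}\subseteq\{T\in\mathcal{C}^\prime_1\}$ gives
\[
\P_{\theta_0,T,\theta_1}(T\in\mathcal{C}^\prime_1\mid\tau\geq T)\geq\P_{\theta_0,T,\theta_1}(T\in\mathcal{C}_1\mid\tau\geq T)\geq 1-\alpha-\beta,
\]
which is exactly the guarantee asserted in \Cref{thm:coverage-comp-post}. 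Thus the entire content reduces to the sample-path inclusion $\mathcal{C}_1\subseteq\mathcal{C}^\prime_1$.

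For this inclusion, fix $t\leq\tau$ and $j$. The first step is to verify the pointwise domination $V_{t,L}^j\geq\sup_{\theta\in\mathcal{S}^\prime_{\tau-t+1}}M_{t,L}^j(\theta)$ by matching the three cases in the definitions of $V_{t,L}^j$ and $M_{t,L}^j(\theta)$ against the breakpoints $t_1^j=\inf_\theta\tau^\theta_{j,t}$ and $t_2^j=\sup_\theta\tau^\theta_{j,t}$. When $t>t_2^j$, every $M_{t,L}^j(\theta)=-\infty$ and $V_{t,L}^j=-\infty$; when $t\leq t_2^j$ but $t_2^j>L$, the maximizing $\theta$ yields $\tau^\theta_{j,t}>L$ so that $M_{t,L}^j(\theta)=\infty=V_{t,L}^j$; and when $t\leq t_2^j\leq L$, each $\theta$ with $t\leq\tau^\theta_{j,t}$ contributes the value $M_t^{(\tau^\theta_{j,t})}$ with $\tau^\theta_{j,t}\in[\max\{t,t_1^j\},t_2^j]$, which is one of the terms appearing in the double maximum defining $V_{t,L}^j$. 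Hence $V_{t,L}^j$ dominates the supremum in all cases.

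The second step is purely the monotonicity of the empirical quantile in its sample arguments, with $M_t$ held fixed. Replacing each $M_{t,L}^j(\theta)$ first by $\sup_{\theta'}M_{t,L}^j(\theta')$ and then by the still larger $V_{t,L}^j$ can only weakly increase the $(1-\hat\alpha_t)$-quantile; moreover, for each fixed $\theta$ the quantile computed with $\{\sup_{\theta'}M_{t,L}^j(\theta')\}_{j=1}^B$ dominates that computed with $\{M_{t,L}^j(\theta)\}_{j=1}^B$, and since the left-hand side does not depend on $\theta$, taking $\sup_\theta$ on the right preserves the inequality. Chaining these yields
\[
\operatorname{Quantile}\bigl(1-\hat\alpha_t;M_t,\{V_{t,L}^j\}_{j=1}^B\bigr)\geq\sup_{\theta\in\mathcal{S}^\prime_{\tau-t+1}}\operatorname{Quantile}\bigl(1-\hat\alpha_t;M_t,\{M_{t,L}^j(\theta)\}_{j=1}^B\bigr),
\]
so that whenever $M_t$ falls below the right-hand threshold (i.e.\ $t\in\mathcal{C}_1$) it also falls below the left-hand one (i.e.\ $t\in\mathcal{C}^\prime_1$), giving $\mathcal{C}_1\subseteq\mathcal{C}^\prime_1$.

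The only genuinely non-trivial part is the case analysis establishing $V_{t,L}^j\geq\sup_\theta M_{t,L}^j(\theta)$, and in particular the truncation cases involving $L$ together with the $\pm\infty$ conventions; the quantile-monotonicity step and the final coverage argument are immediate. I therefore expect the main obstacle to be the bookkeeping of the $L<\infty$ versus $L=\infty$ definitions of $V_{t,L}^j$, so that the domination is checked uniformly across both regimes. Everything else follows from set inclusion and monotone coverage.
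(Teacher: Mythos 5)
Your proposal is correct and follows essentially the same route as the paper: the paper establishes $\mathcal{C}_1\subseteq\mathcal{C}^\prime_1$ via exactly the chain $\operatorname{Quantile}(1-\hat\alpha_t;M_t,\{V_{t,L}^j\}_{j=1}^B)\geq\operatorname{Quantile}(1-\hat\alpha_t;M_t,\{\sup_\theta M_{t,L}^j(\theta)\}_{j=1}^B)\geq\sup_\theta\operatorname{Quantile}(1-\hat\alpha_t;M_t,\{M_{t,L}^j(\theta)\}_{j=1}^B)$ and then invokes \Cref{thm:coverage-comp-post}. Your explicit case analysis verifying $V_{t,L}^j\geq\sup_\theta M_{t,L}^j(\theta)$ across the truncation regimes is a detail the paper leaves implicit, and your bookkeeping there is sound.
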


     Next, we present a concrete example of the Gaussian mean shift problem to clarify our method. 
     
\subsection{A concrete example: Gaussian mean shift problem}
\label{sec:gaussian-example-comp-post}

Let the pre-change distribution $F_0$  be $N(\theta_0,1)$ and the post-change model be $\mathcal{P}_1=\{N(\theta,1): \theta\in\Theta_1\}$, with $\Theta_1=\{\theta:\theta\geq\theta_1\}$ such that $\theta_1>\theta_0$.


In this setting, we have the following example of a $1-\beta$ confidence sequence \citep{howard2021time} for $\theta_1$
\begin{equation}
\label{eq:conf-seq-gaussian}
    \mathcal{S}^\prime_{\tau-t+1}=\left[\bar{X}_{t:\tau}-\frac{s_{\tau-t+1}(\beta)}{\sqrt{\tau-t+1}},\bar{X}_{t:\tau}+\frac{s_{\tau-t+1}(\beta)}{\sqrt{{\tau-t+1}}}\right]\cap \Theta_1,
\end{equation}
where $\bar{X}_{t:\tau}=\frac{1}{\tau-t+1}\sum_{i=t}^\tau X_i$ and $s_n(\beta)=\sqrt{\log\log (2n)+0.72 \log(10.4/\beta)}$.

We fix $t\in\{1,\cdots,\tau\}$. Note that we can consider $\theta^{\prime\prime}=\theta_0$ and $G(x,\theta)=x+\theta-\theta_0$. Therefore, only need to generate $\{X_n^j\}_{n\in\N}\stackrel{iid}{\sim}N(\theta_0,1)$ for $j=1,\cdots,B$. For $\theta\in\mathcal{S}^\prime_{\tau-t+1}$, 
\begin{equation}
    Y_n^t(X_n^j,\theta):=\begin{cases}
        \epsilon_n^j+\theta_0, \text{ if } n<t,\\
        \epsilon_n^j+\theta, \text{ otherwise, }\end{cases}
\end{equation}
where $\epsilon_n^j:=X_n^j-\theta_0\sim N(0,1)$. We consider the test statistic $M_{t}$, as defined in \eqref{eq:test-stat-nonpara}, with $R_n^{(t)}=\prod_{i=t}^{n} \frac{f_{\theta_0}(X_i)}{f_{\theta_1}(X_i)},$ and $
   S_n^{(t)}= \int_{\theta\in\Theta_1}\prod_{i=n}^{t-1} \frac{f_\theta(X_i)}{f_{\theta_0}(X_i)}w(\theta)d\theta$, with $w\geq 0$ being  some weight function such that $\int_{\theta\in\Theta_1}w(\theta)d\theta=1$.
Let, $\tau^\theta_j$ is the detection time for the sequence $\{
Y_n^t(X_n^j,\theta)\}_n$ and $M_{t,\infty}^j(\theta)=M_t(Y_1^t(X_1^j,\theta),\cdots,Y_{\tau^\theta_j}^t(X_{\tau^\theta_j}^j,\theta))$. Suppose, we can find $t_1^j\in\N$, such that $t_1^j=\sup_{\theta\in \mathcal{S}^\prime_{\tau-t+1}}\tau^\theta_j$ (these are easy to compute for specific detection algorithms, such as CUSUM or SR type detectors, as we explain in \Cref{max-min-tau}) and they are finite. So, we can have $L=\infty$. Note that, here we have 
$$\sup_{\theta\in\mathcal{S}^\prime_{\tau-t+1}} R^{(t)}_{n}\left(\{Y_n^t(X_{n}^j,\theta)\}_n\right)=\exp\left((\theta_0-\theta_1)\sum_{i=t}^n(\epsilon_n^j+\theta^\prime_{t,\min} )+\frac{n-t+1}{2}(\theta_1^2-\theta_0^2)\right),$$ 
and
$$\sup_{\theta^\prime\in\mathcal{S}^\prime_{\tau-t+1}} S^{(t)}_{n}\left(\{Y_n^t(X_{n}^j,\theta^\prime)\}_n\right)
=
\displaystyle\underset{\theta\in\Theta_1}{\int}\exp\left((\theta-\theta_0)\sum_{i=n}^{t-1}(\epsilon_n^j+\theta^\prime_{t,\min} )+\frac{t-n}{2}(\theta_0^2-\theta^2)\right)w(\theta)d\theta,$$ 
where $\theta^\prime_{t,\min}=\inf\mathcal{S}^\prime_{\tau-t+1}$.
 And hence, $V_{t,\infty}^j$ in \eqref{eq:vt} can be computed easily. 
 
 \begin{remark}
\label{max-min-tau}     
For weighted CUSUM or SR detectors, finding $t_1^j$ is often straightforward. For some weight $w(\theta)$ such that $\int_{\theta\in\Theta_1} w(\theta)=1$, these are defined as
\begin{equation}
\label{eq:wCUSUM}
    \tau_{\text{wcusum}}=\inf\left\{n\in\mathbb N:\max_{1\leq j\leq n}\int_{\theta\in\Theta_1}\prod_{i=j}^n\frac{f_{\theta}(X_i)}{f_{\theta_0}(X_i)}w(\theta)\geq A\right\}
\end{equation}
\begin{equation}
\label{eq:wsr}
 \text{ and }~   \tau_{\text{wsr}}=\inf\left\{n\in\mathbb N:\sum_{j=1}^n\int_{\theta\in\Theta_1}\prod_{i=j}^n\frac{f_{\theta}(X_i)}{f_{\theta_0}(X_i)}w(\theta)\geq A\right\}.
\end{equation}
For example, consider $\mathcal{P}_1=\{N(\theta,1):\theta\geq \theta_1\}$, for some $\theta_1\geq \theta_0$.
Observe that for all $\theta\in \Theta_1$, $\prod_{i=j}^n\frac{f_{\theta}(X_i)}{f_{\theta_0}(X_i)}=\exp\{\frac{1}{2}(\theta-\theta_0)\times\sum_{i=j}^n(2X_i-\theta_0-\theta)\}$ increases as $X_i$'s increases and hence, both $\tau_{\text{wcusum}}$ and $\tau_{\text{wsr}}$ decreases as $X_i$'s increases. 
Therefore, $t_1^j=\sup_{\theta\in \mathcal{S}^\prime_{\tau-t+1}}\tau^\theta_j=\tau^{\inf \mathcal{S}^\prime_{\tau-t+1}}_j$. 
Similarly, for $\mathcal{P}_1=\{N(\theta,1):\theta\leq \theta_1\}$, for some $\theta_1\leq \theta_0$, we have $t_1^j=\tau^{\sup \mathcal{S}^\prime_{\tau-t+1}}_j$. 
One can check that similar results extend to other parametric families, such as the Gaussian/Laplace/exponential scale change problems with weighted CUSUM and SR detectors, as well as the PFA-controlling change detectors.
 \end{remark}

 The Gaussian mean shift example we provided serves as a concrete and illustrative case to clarify the methodology. It is worth noting that $C^\prime$, as defined in \eqref{set-gaussian-known-pre}, is practically implementable as long as the following relatively weak conditions are met: 
 \begin{enumerate}
     \item The coupling function $G$ is available (which is always the case when distributions are continuous).
     \item  We can construct a confidence sequence for the parameter of interest (which is always achievable as long as one can construct a confidence interval for the parameter).  
     \item The quantities $t_j^1$ are computable (either exactly or numerically). 
 \end{enumerate}
For instance, one can easily verify that this approach applies to Gaussian, Laplace, or exponential scale change problems.

\section{Computationally and statistically efficient subroutines in Algorithm 3}
\label{sec:subroutines-comp}

As in \Cref{sec:subroutines-comp-post}, we assume that we have a fixed element $\theta^{\prime\prime}$ in the parameter space and a function $G$, such that $X\sim F_{\theta^{\prime\prime}}$ implies $G(X,\theta)\sim F_\theta, \forall \theta\in\Theta_0\cup\Theta_1$. Therefore, for each $j=1,\cdots,B$, we generate data sequence $X_1^j,X_2^j\cdots\stackrel{iid}{\sim}F_{\theta^{\prime\prime}}$ only once. Then, for each $t\in\{1,\cdots,\tau\}$ and $(\theta\times\theta^\prime)\in\mathcal{S}_{t-1}\times\mathcal{S}^\prime_{\tau-t+1}$, we obtain sequences $\{Y_n^t(X_n^j,\theta,\theta^\prime)\}_n$ having joint distribution $\P_{\theta,t,\theta^\prime},$ by applying the function $G(.,\theta)$ to $X_{1}^j,\cdots,X_{t-1}^j$ and $G(.,\theta^\prime)$ to $X_{t}^j,X_{t+1}^j,\cdots$, i.e.,
\begin{equation}
    Y_n^t(x,\theta,\theta^\prime):=\begin{cases}
        G(x,\theta), &\text{if } n<t\\
        G(x,\theta^\prime), &\text{ otherwise. }
    \end{cases}
\end{equation}

Let us now fix $t\in\{1,\cdots,\tau\}$ and consider the test statistic $M_t$ of the form in \eqref{eq:test-stat-nonpara}, with some $R^{(t)}_{n}=R^{(t)}_{n}(\{X_n\}_n)$ and $S^{(t)}_{n}=S^{(t)}_{n}(\{X_n\}_n)$..
Computing the detection time $\tau^j_{\theta,\theta^\prime}$ for each $(\theta, \theta^\prime)\in \mathcal{S}_{t-1}\times \mathcal{S}^\prime_{\tau-t+1}$, and finding whether $\text{there exists }(\theta, \theta^\prime)\in \mathcal{S}_{t-1}\times \mathcal{S}^\prime_{\tau-t+1}$ such that $M_t\leq\operatorname{Quantile}(1-\hat\alpha_{t};M_t,\{M_t^j(\theta,\theta^\prime)\}_{j=1}^B)$ are both computationally infeasible, unless $\mathcal{S}_{t-1}$ and $\mathcal{S}^\prime_{\tau-t+1}$ are a finite sets. 

However, it is often straightforward to compute  $t_1^j=\displaystyle\sup_{(\theta, \theta^\prime)\in \mathcal{S}_{t-1}\times \mathcal{S}^\prime_{\tau-t+1}}\tau_{\theta,\theta^\prime}^j$.
Also, we can often derive a suitable upper bound on the threshold, which does not require any approximation and is computationally and statistically efficient. E.g., we can bound it by $\operatorname{Quantile}(1-\hat\alpha_{t};M_t,\{\sup_{(\theta, \theta^\prime)\in \mathcal{S}_{t-1}\times \mathcal{S}^\prime_{\tau-t+1}}M_{t,L}^j(\theta,\theta^\prime)\}_{j=1}^B)$.
Again, we bound $\sup_{(\theta, \theta^\prime)\in \mathcal{S}_{t-1}\times \mathcal{S}^\prime_{\tau-t+1}}M_{t,L}^j(\theta,\theta^\prime)$ by $U_{t,L}^j$, where 
\begin{equation}
\label{eq:ut}
    U_{t,L}^j=\begin{cases}
            \displaystyle\max\bigg\{ \max_{t\leq n\leq t_1^j}\sup_{(\theta, \theta^\prime)\in \mathcal{S}_{t-1}\times \mathcal{S}^\prime_{\tau-t+1}} R^{(t)}_{n}\left(\{Y_n^t(X_{n}^j,\theta,\theta^\prime)\}_n\right)
    \\\hspace{1cm}+\displaystyle\max_{1\leq n\leq t-1}\sup_{(\theta, \theta^\prime)\in \mathcal{S}_{t-1}\times \mathcal{S}^\prime_{\tau-t+1}}S^{(t)}_{n}\left(\{Y_n^t(X_{n}^j,\theta,\theta^\prime)\}_n\right)\bigg\},
            \text{ if } t\leq t_1^j\leq L;\\
             -\infty, \text{ if } t>t_1^j;\\
             \infty, \text{ if } t\leq t_1^j \text{ and } t_1^j>L.
        \end{cases}
\end{equation}
 We consider the following confidence set: 
 \begin{equation}
 \label{set-eff-comp}
     \mathcal{C}^{\prime\prime}_1=\{t\in\{2,\cdots,\tau\}: M_t\leq\operatorname{Quantile}(1-\hat \alpha_{t};M_t,U_{t,L}^1,\cdots,U_{t,L}^B)\} \text{ and}
 \end{equation}
Observe that $\mathcal{C}^{\prime\prime}_1\supseteq \mathcal{C}_1$, where $\mathcal{C}_1$ is the set produced by \Cref{algo:comp}, since
\begin{align*}
    \operatorname{Quantile}(1-\hat\alpha_{t};M_t,\{U_{t,L}^j\}_{j=1}^B)\}
 &\geq \operatorname{Quantile}(1-\hat\alpha_{t};M_t,\{\sup_{(\theta, \theta^\prime)\in \mathcal{S}_{t-1}\times \mathcal{S}^\prime_{\tau-t+1}}M_{t,L}^j(\theta,\theta^\prime)\}_{j=1}^B)\\
 &\geq \sup_{(\theta, \theta^\prime)\in \mathcal{S}_{t-1}\times \mathcal{S}^\prime_{\tau-t+1}}\operatorname{Quantile}(1-\hat\alpha_{t};M_t,\{M_{t,L}^j(\theta,\theta^\prime)\}_{j=1}^B).
\end{align*}
 Thus, we obtain the following coverage guarantee as an immediate corollary of 
\Cref{thm:coverage-comp}.

 \begin{corollary}
 $\mathcal{C}^{\prime\prime}_1$  defined in \eqref{set-eff-comp} satisfies the same coverage guarantees as $\mathcal{C}_1$ in \Cref{thm:coverage-comp}.
 \end{corollary}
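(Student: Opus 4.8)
The plan is to derive this as a pure containment corollary: I will show that $\mathcal{C}_1^{\prime\prime} \supseteq \mathcal{C}_1$ on every sample path, and then transfer the conditional coverage guarantee of \Cref{thm:coverage-comp} to the larger set. Both sets are sublevel sets of the test statistic against a data-dependent threshold, namely $\mathcal{C}_1 = \{t \le \tau : M_t \le q_t\}$ with $q_t = \sup_{(\theta,\theta')\in\mathcal{S}_{t-1}\times\mathcal{S}^\prime_{\tau-t+1}} \operatorname{Quantile}(1 - \hat\alpha_t; M_t, \{M_{t,L}^j(\theta,\theta')\}_{j=1}^B)$, and $\mathcal{C}_1^{\prime\prime} = \{t \le \tau : M_t \le Q_t\}$ with $Q_t = \operatorname{Quantile}(1 - \hat\alpha_t; M_t, \{U_{t,L}^j\}_{j=1}^B)$. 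Since a larger threshold enlarges a sublevel set, it suffices to prove the ordering $Q_t \ge q_t$ for every $t$.

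First I would record two elementary monotonicity properties of the empirical quantile operator. (i) \emph{Monotonicity under pointwise domination of the sample}: if $a^j \ge b^j$ for all $j$, then $\operatorname{Quantile}(p; c, \{a^j\}_j) \ge \operatorname{Quantile}(p; c, \{b^j\}_j)$, since raising each sample value cannot lower any order statistic. (ii) \emph{Quantile of a supremum dominates the supremum of quantiles}: $\operatorname{Quantile}(p; c, \{\sup_{(\theta,\theta')} M_{t,L}^j(\theta,\theta')\}_j) \ge \sup_{(\theta,\theta')} \operatorname{Quantile}(p; c, \{M_{t,L}^j(\theta,\theta')\}_j)$, because for each fixed $(\theta,\theta')$ the multiset $\{M_{t,L}^j(\theta,\theta')\}_j$ is dominated pointwise by $\{\sup M_{t,L}^j\}_j$, so by (i) the left-hand quantile dominates the quantile at that fixed $(\theta,\theta')$, and taking the supremum over $(\theta,\theta')$ preserves the bound.

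Next I would chain these with the defining inequality $U_{t,L}^j \ge \sup_{(\theta,\theta')} M_{t,L}^j(\theta,\theta')$, which holds by the construction of $U_{t,L}^j$. This requires checking the truncation cases: when $t > t_2^j$ both sides equal $-\infty$; when $t \le t_2^j \le L$ the maximisation defining $U_{t,L}^j$ ranges over detection times $t'$ spanning all of $[\max\{t_1^j,t\}, t_2^j]$, hence dominates $M_{t,L}^j(\theta,\theta')$ for every admissible $(\theta,\theta')$; and when $t_2^j > L$ the value $+\infty$ trivially dominates. Applying (i) to this pointwise domination and then (ii) yields $Q_t \ge \operatorname{Quantile}(1-\hat\alpha_t; M_t, \{\sup M_{t,L}^j\}_j) \ge q_t$, which is exactly the two-line inequality displayed just before the corollary, and therefore $\mathcal{C}_1 \subseteq \mathcal{C}_1^{\prime\prime}$ on every realization.

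Finally, from the set containment the event $\{T \in \mathcal{C}_1\}$ is contained in $\{T \in \mathcal{C}_1^{\prime\prime}\}$; conditioning on $\{\tau \ge T\}$ and invoking \Cref{thm:coverage-comp} gives $\P_{\theta_0,T,\theta_1}(T \in \mathcal{C}_1^{\prime\prime} \mid \tau \ge T) \ge \P_{\theta_0,T,\theta_1}(T \in \mathcal{C}_1 \mid \tau \ge T) \ge 1 - \alpha - \beta - \gamma$, which is the claimed guarantee. The only genuinely delicate point is the careful bookkeeping of the $\pm\infty$ truncation cases in the definition of $U_{t,L}^j$, needed to secure the pointwise domination $U_{t,L}^j \ge \sup_{(\theta,\theta')} M_{t,L}^j(\theta,\theta')$ uniformly; once that is in hand, everything reduces to the two routine quantile-monotonicity facts and a one-line containment argument.
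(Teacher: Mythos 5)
Your proposal is correct and follows essentially the same route as the paper: the paper likewise establishes $\mathcal{C}_1 \subseteq \mathcal{C}_1^{\prime\prime}$ via the pointwise domination $U_{t,L}^j \geq \sup_{(\theta,\theta^\prime)} M_{t,L}^j(\theta,\theta^\prime)$, the monotonicity of the empirical quantile, and the interchange inequality between quantile and supremum, and then cites \Cref{thm:coverage-comp} for the coverage of the larger set. Your write-up is simply a more explicit version of the paper's two-line display, with the $\pm\infty$ truncation cases spelled out.
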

 In the next subsection, we present a concrete example of the Gaussian mean shift problem to clarify our method. 
  
\subsection{A concrete example: Gaussian mean shift problem}
\label{sec:gaussian-example-comp}

We consider the most common changepoint model, where pre-change and post-change models are $\mathcal{P}_i=\{N(\theta,1): \theta\in\Theta_i\}$, for $i=0,1$, with $\Theta_0=\{\theta:\theta\leq\theta_0\}$ and $\Theta_1=\{\theta:\theta\geq\theta_1\}$, for some $\theta_1>\theta_0$.
Then, we have the following $1-\gamma$ confidence interval for $\theta_0$ for each $t\geq 2$:
\begin{equation}
    \mathcal{S}_{t-1}=\left[\bar{X}_{1:t-1}-\frac{q_{\gamma/2}}{\sqrt{t-1}},\bar{X}_{1:t-1}+\frac{q_{\gamma/2}}{\sqrt{{t-1}}}\right]\cap \Theta_0,
\end{equation}
and the $1-\beta$ confidence sequence for $\theta_1$, as defined in \eqref{eq:conf-seq-gaussian}.

We fix $t\in\{2,\cdots,\tau\}$. Note that we can consider $\theta^{\prime\prime}=0$ and $G(x,\theta)=x+\theta$. Therefore, only need to generate $\{\epsilon_n^j\}_{n\in\N}\stackrel{iid}{\sim}N(0,1)$ for $j=1,\cdots,B$. For $\theta\in\mathcal{S}_{t-1},\theta^\prime\in\mathcal{S}^\prime_{\tau-t+1}$, define 
\begin{equation}
    Y_n^{t,j}(\theta,\theta^\prime) =\begin{cases}
        \theta+\epsilon_n^j, \text{ if } n<t\\
        \theta^\prime+\epsilon_n^j, \text{ otherwise. }
    \end{cases}
\end{equation}
We consider the test statistic $M_t$, as defined in \eqref{eq:test-stat-nonpara}, with $R_n^{(t)}=\int_{\theta\in\Theta_0}\prod_{i=t}^{n} \frac{f_{\theta}(X_i)}{f_{\theta_1}(X_i)}w^\prime(\theta)d\theta,$  and $
   S_n^{(t)}= \int_{\theta\in\Theta_1}\prod_{i=n}^{t-1} \frac{f_\theta(X_i)}{f_{\theta_0}(X_i)}w(\theta)d\theta$, with $w, w^\prime\geq 0$ being  some weight functions such that $\int_{\theta\in\Theta_1}w(\theta)d\theta=1$ and $\int_{\theta\in\Theta_0}w^\prime(\theta)d\theta=1$. Note that $t_1^j$ is easy to compute for some CUSUM/SR type detection algorithms (see \Cref{max-min-tau-comp})
 and they are finite. So, we can have $L=\infty$. Note that, here the term $\sup_{(\theta, \theta^\prime)\in \mathcal{S}_{t-1}\times \mathcal{S}^\prime_{\tau-t+1}} R^{(t)}_{n}\left(\{Y_n^{t,j}(\theta,\theta^\prime) \}_n\right)$ simplifies to 
$$
\displaystyle\underset{\theta\in\Theta_0}{\int}\exp\left((\theta-\theta_1)\sum_{i=t}^n(\epsilon_n^j+\theta^\prime_{t,\min} )+\frac{n-t+1}{2}(\theta_1^2-\theta^2)\right)w^\prime(\theta)d\theta,$$ 
and the term $\sup_{(\theta, \theta^\prime)\in \mathcal{S}_{t-1}\times \mathcal{S}^\prime_{\tau-t+1}} S^{(t)}_{n}\left(\{Y_n^{t,j}(\theta,\theta^\prime) \}_n\right)$ simplifies to
$$
\displaystyle\underset{\theta\in\Theta_1}{\int}\exp\left((\theta-\theta_0)\sum_{i=n}^{t-1}(\epsilon_n^j+\theta^\prime_{t,\min} )+\frac{t-n}{2}(\theta_0^2-\theta^2)\right)w(\theta)d\theta,$$ 
where $\theta^\prime_{t,\min}=\inf\mathcal{S}^\prime_{\tau-t+1}$. And hence, $U_{t,\infty}^j$ in \eqref{eq:ut} can be computed easily.

 \begin{remark}
\label{max-min-tau-comp}     
For weighted CUSUM or SR detectors, finding $t_1^j$ is often straightforward. So, for some weight $w(\theta)$ such that $\int_{\theta\in\Theta_1} w(\theta)=1$, the weighted CUSUM or SR detectors (with the Reverse Information Projection (RIPr) \cite{larsson2024numeraire,grunwald2024safe,lardy2023universal} being the representative of the pre-change class) are defined as
\begin{equation}
\label{eq:wCUSUM-ripr}
    \tau_{\text{wcs-ripr}}=\inf\left\{n\in\mathbb N:\max_{1\leq j\leq n}\int_{\theta\in\Theta_1}\prod_{i=j}^n\frac{f_{\theta}(X_i)}{f_{\theta}^*(X_i)}w(\theta)\geq A\right\},
\end{equation}
\begin{equation}
\label{eq:wsr-ripr}
  \tau_{\text{wsr-ripr}}=\inf\left\{n\in\mathbb N:\sum_{j=1}^n\int_{\theta\in\Theta_1}\prod_{i=j}^n\frac{f_{\theta}(X_i)}{f_{\theta}^*(X_i)}w(\theta)\geq A\right\},
\end{equation}
where $f_{\theta}^*$ is the RIPr of $f_{\theta}$ on $\mathcal{P}_0$.
For example, consider $\mathcal{P}_0=\{N(\theta,1):\theta\leq a\},\mathcal{P}_1=\{N(\theta,1):\theta\geq b\}$, for some $b\geq a$. Observe that for all $\theta\in \Theta_1, f_{\theta}^*=f_a$, and $\prod_{i=j}^n\frac{f_{\theta}(X_i)}{f_{\theta}^*(X_i)}=\exp\{\frac{1}{2}(\theta-a)\times\sum_{i=j}^n(2X_i-a-\theta)\}$ increses as $X_i$'s increases and hence, both $\tau_{\text{wcusum}}$ and $\tau_{\text{wsr}}$ decreases as $X_i$'s increases.
Therefore, 
$$t_1^j=\sup_{(\theta, \theta^\prime)\in \mathcal{S}_{t-1}\times \mathcal{S}^\prime_{\tau-t+1}}\tau_{\theta, \theta^\prime}^j=\tau_{\inf \mathcal{S}_{t-1},\inf \mathcal{S}^\prime_{\tau-t+1}}^j.$$
 Similarly, one can check that for $\mathcal{P}_0=\{N(\theta,1):\theta\geq a\},\mathcal{P}_1=\{N(\theta,1):\theta\leq b\}$, for $b\leq a$,
$$t_1^j=\sup_{(\theta, \theta^\prime)\in \mathcal{S}_{t-1}\times \mathcal{S}^\prime_{\tau-t+1}}\tau_{\theta, \theta^\prime}^j=\tau_{\sup \mathcal{S}_{t-1},\sup \mathcal{S}^\prime_{\tau-t+1}}^j.$$
\end{remark}

 The Gaussian example we provided serves as a concrete and illustrative case to clarify the methodology.  
 Note that $\mathcal{C}^{\prime\prime}$ in \eqref{set-eff-comp} is computable if similar conditions as listed in \Cref{sec:gaussian-example-comp-post} are met.
For instance, one can check that these conditions hold for analogous Laplace and exponential scale distributions.

\section{Experiments for confidence intervals for pre/post change parameters}
\label{sec:expt-paramter}

\subsection{Known pre-change but unknown post-change}

\Cref{tab:comp-post-theta1} contains conditional and marginal (empirical) coverages, (conditional) average of the sizes and of the confidence intervals (i.e., the upper and lower boundaries) (defined in \eqref{eq:ci-theta1}) for $\theta_1$ across $500$ independent runs with $\alpha=\beta=\eta=0.05$. The data generation and change detection procedure remain the same as in the experiment described in Setting II of \Cref{sec:expt}. Across all settings, the observed conditional coverage consistently exceeds the theoretical lower bound of $0.85$, indicating that our method is reliable but conservative.

\begin{table}[!ht]
    \centering
    \caption{Confidence interval for $\theta_1$: known pre-change but unknown post-change. The pre-change parameter $\theta_0=0$ is known, the true (unknown) post-change parameter is $\theta_1=1$, while the algorithm only knows $\theta_1\in\Theta_1$, which we mention in the second column of the table. (Setting II)
    }
\label{tab:comp-post-theta1}
    \resizebox{\linewidth}{!}{
    \begin{tabular}{cc|ccccc}
    \toprule
    \addlinespace
 T  & {$\Theta_1$}  & \specialcell{Conditional coverage} & \specialcell{Marginal coverage} & Conditional Length & Conditional CI \\
    \midrule
\addlinespace 
100 &  $[0.75,\infty)$   & 0.98  & 0.98 & 1.37 & [0.76, 2.13]\\
100 & $[0.9,\infty)$  & 0.98  & 0.96 & 1.23 & [0.90, 2.16] \\
500 & $[0.75,\infty)$ &  0.99  & 0.93 & 1.48 & [0.75, 2.23] \\
500 &  $[0.9,\infty)$ &  0.98  & 0.91 & 1.31 & [0.90, 2.21] \\
\bottomrule
 \end{tabular}}
 \end{table}

\subsection{Unknown pre- and post-change}
\Cref{tab:comp-theta} contains conditional and marginal (empirical) coverages, (conditional) average of the sizes and of the confidence intervals (i.e., the upper and lower boundaries), and \eqref{eq:ci-theta1}), for $\theta_0$ and $\theta_1$ across $500$ independent runs with $\alpha=\beta=\eta_0=\eta_1=0.05$. he data generation and change detection procedure remain the same as in the experiment described in Setting III of \Cref{sec:expt}. Across all settings, the observed conditional coverage consistently exceeds the theoretical lower bound of $0.80$.

\begin{table}[!ht]
    \centering
    \caption{Confidence intervals for $\theta_0,\theta_1$ with target conditional coverages $0.85$(Setting III): unknown pre- and post-change. The true (unknown) values of the parameters are $\theta_0=0$ and $\theta_1=1$, while the algorithm only knows $\theta_i\in\Theta_i, i=0,1$, which are mentioned in the second and third columns of the table. 
    }
\label{tab:comp-theta}
    \resizebox{\linewidth}{!}{
    \begin{tabular}{ccc|cc|cc|cc|cc}
    \toprule
    \addlinespace
 \multirow{2}{*}{T}  & \multirow{2}{*}{\specialcell{$ \Theta_0$}} & \multirow{2}{*}{\specialcell{$\Theta_1$}}  & \multicolumn{2}{c}{\specialcell{Cond. coverage}} & \multicolumn{2}{c}{\specialcell{Marg. coverage}} & \multicolumn{2}{c}{Cond. length} & \multicolumn{2}{c}{Cond. CI} \\
 & & & $\theta_0$ & $\theta_1$ & $\theta_0$ & $\theta_1$ & $\theta_0$ & $\theta_1$ & $\theta_0$ & $\theta_1$\\
    \midrule
\addlinespace 
100 & $(\infty, 0.25]$ &  $[0.75,\infty)$   & 1.00  & 0.99 &  1.00  & 0.99 & 0.55 & 1.19 & [-0.32, 0.23] & [0.76, 1.95]\\
100 & $(\infty,  0.1]$ & $[0.9,\infty)$   & 0.98  & 0.99 & 0.98  & 0.98 & 0.61 &  1.16 & [-0.51, 0.10] & [0.90, 2.06]\\
500 &$(\infty,  0.25]$ & $[0.75,\infty)$ &  0.99  & 0.99 & 0.93  & 0.93 & 0.21 & 1.22 & [-0.10, 0.11] & [0.75, 1.97]\\
500 & $(\infty,  0.1]$ & $[0.9,\infty)$ &  0.99  & 0.99 & 0.94  & 0.94 & 0.19 & 1.24 & [-0.09, 0.10] & [0.90, 2.14]\\
\bottomrule
 \end{tabular}}
 \end{table}

\subsection{Comparison with Wu \cite{WU20063625}}
 Next, we compare our confidence interval for the post-change parameter with that proposed by  \cite{WU20063625} and discussed in \cite{wu2007inference}. We implement Wu's interval as discussed in Chapter 4.3 of \cite{wu2007inference} (or Section 3 of \cite{WU20063625}), replacing $\theta$ by $\hat\theta$ in the bias and variance terms, which is given by $\hat{\theta}-\frac{4}{5\sqrt{d\hat{\theta}(\tau-\hat{\nu})}}\pm z_{\alpha/2}\frac{\sqrt{1-{13}/(16d\hat{\theta})}}{\sqrt{\tau-\hat{\nu}}}$, where $\hat{\theta}=\frac{T_\tau}{\tau-\hat{\nu}}$, $\tau=\tau^\prime_{\text{CUSUM}}$, and $z_{\alpha^\prime/2}$ is the $\alpha^\prime/2$ quantile of the standard normal distribution. We use CUSUM \cite{page1954continuous} (the same change detector as in the second last experiment of \Cref{sec:expt}), with $d=10$ for both methods. For our method, we assume the post-change space to be $\Theta_1=[0,\infty)$ with $\alpha=\beta=\eta=0.05$. We report the average size and the conditional and unconditional coverage rates of both methods across $500$ independent runs in \Cref{tab:ci-comparison-theta1}, with a target conditional coverage of $\alpha^\prime=0.85$. The results show that the existing method \cite{WU20063625}, which is based on asymptotic approximations, undercovers $\theta_1$ (both conditionally and marginally). At the same time, our finite-sample confidence intervals are conservative --- achieving higher-than-nominal (conditional) coverage at the expense of larger (conditional) length.

\begin{table}[ht]
\centering
\caption{Comparison of the confidence sets for $\theta_1$ (target conditional coverage $0.85$) with Wu \cite{WU20063625}  (Setting II)}
\label{tab:ci-comparison-theta1}
\resizebox{\linewidth}{!}{
\begin{tabular}{ccc|
                cc
                cc
                cc
                cc}
\toprule
$F_0$ & $F_1$ & $T$ &
\multicolumn{2}{c}{Conditional Length} &
\multicolumn{2}{c}{Conditional Coverage} &
\multicolumn{2}{c}{Marginal Coverage} \\
\cmidrule(lr){4-5} \cmidrule(lr){6-7} \cmidrule(lr){8-9} &
& &  Ours &  Wu (2006) &  Ours &  Wu (2006) &  Ours &  Wu (2006) \\
\midrule
${N}(-0.25,1)$ & ${N}(0.25,1)$ & 100 & 1.46 & 0.47 & 0.99 & 0.77 & 0.98 & 0.74  \\
${N}(-0.3,1)$ & ${N}(0.3,1)$ & 100 & 1.43& 0.51 & 0.98 & 0.76 & 0.98 & 0.75 \\
${N}(-0.25,1)$ & ${N}(0.25,1)$ & 500 & 1.48 & 0.49 & 0.99 & 0.78 & 0.91 & 0.64 \\
${N}(-0.3,1)$ & ${N}(0.3,1)$ & 500 & 1.49 & 0.51 &  0.99 & 0.79 &  0.91 & 0.71 \\
\bottomrule
\end{tabular}}
\end{table}

\section{Simpler confidence sets for detection algorithms controlling PFA}
\label{sec:pfa}
In this section, we shall discuss how one can construct simpler confidence sets when the change detection algorithm $\mathcal{A}$ controls the probability of false alarm (PFA) at level $\delta\in(0,1)$.
Importantly, this method allows us to avoid \Cref{assmp-1}, and our results hold in the fully nonparametric case with arbitrary pre- and post-change classes, $\mathcal{P}_0$ and $\mathcal{P}_1$, respectively.

We consider the test statistic $M_t$ introduced in \eqref{eq:test-stat-nonpara}, and define the following confidence set for $T$:
\begin{equation}
\label{eq:ci-nonpara-pfa}
            \mathcal{C}=\left\{t\in\N: t\leq\tau, M_t<\frac{2}{\alpha}\right\},
        \end{equation}
        which is a simplified version of \Cref{eq:ci-nonpara}, obtained by directly replacing $\alpha r_t^*$ with $\alpha$. This avoids the need to compute $r_t^*$ (and hence $\tau_1^*,\cdots,\tau_N^*$), making the method easier to implement and free of \Cref{assmp-1}.

We now investigate the coverage properties of the above confidence set.

\begin{theorem}
\label{thm:coverage-comp-post-PFA} 
   Let $\mathcal{C}$ be the confidence set as defined in \eqref{eq:ci-nonpara-pfa}, for some $\alpha\in(0,1)$. If $\mathcal{A}$ controls PFA at level $\delta\in(0,1)$, then for any $F_0\in\mathcal{P}_0$ and $F_1\in\mathcal{P}_1$,
    \begin{equation*}
      \P_{F_0,T,F_1}(T\in \mathcal{C})\geq 1-\alpha-\delta ~\text{, and } \P_{F_0,T,F_1}(T\in \mathcal{C}\mid \tau\geq T)\geq 1-\frac{\alpha}{1-\delta}, 
    \end{equation*} 
    where the latter guarantee holds if $\mathbb \P_{F_0,\infty}(\tau\geq T)\neq 0,$ for all $F_0\in\mathcal{P}_0.$
\end{theorem}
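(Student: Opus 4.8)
The plan is to control the non-coverage event $\{T\notin\mathcal C\}$ directly, splitting it according to whether a false alarm occurred (i.e.\ $\tau<T$) or whether the test statistic exceeded the universal threshold. The two inputs I would use are the universal level bound from \Cref{lem:univ-threshold}, namely $\P_{F_0,T,F_1}[M_T\ge 2/\alpha]\le\alpha$, and the PFA-control hypothesis, read as $\sup_{F_0\in\mathcal P_0}\P_{F_0,\infty}(\tau<\infty)\le\delta$. As a preliminary step I would record the stopping-time identity $\P_{F_0,T,F_1}(\tau<T)=\P_{F_0,\infty}(\tau<T)$, which holds because $\{\tau<T\}=\cup_{i=1}^{T-1}\{\tau=i\}$ depends only on $X_1,\dots,X_{T-1}$, whose law is identical under $\P_{F_0,T,F_1}$ and $\P_{F_0,\infty}$ (exactly the fact already used in the proof of \Cref{prop:dual}). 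Combining this with PFA control yields $\P_{F_0,T,F_1}(\tau<T)\le\P_{F_0,\infty}(\tau<\infty)\le\delta$.

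For the unconditional bound, note that $\{T\notin\mathcal C\}=\{\tau<T\}\cup\{M_T\ge 2/\alpha\}$, since $T\in\mathcal C$ requires both $T\le\tau$ and $M_T<2/\alpha$. A union bound then gives
\begin{equation*}
\P_{F_0,T,F_1}(T\notin\mathcal C)\le \P_{F_0,T,F_1}(\tau<T)+\P_{F_0,T,F_1}(M_T\ge 2/\alpha)\le \delta+\alpha,
\end{equation*}
using the preliminary step together with \Cref{lem:univ-threshold}, which is the first claim. For the conditional bound, I would write $\P_{F_0,T,F_1}(T\notin\mathcal C\mid\tau\ge T)=\P_{F_0,T,F_1}(T\notin\mathcal C,\tau\ge T)/\P_{F_0,T,F_1}(\tau\ge T)$. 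On the event $\{\tau\ge T\}$ the constraint $T\le\tau$ is automatic, so $\{T\notin\mathcal C\}\cap\{\tau\ge T\}=\{M_T\ge 2/\alpha\}\cap\{\tau\ge T\}$, whose probability is at most $\P_{F_0,T,F_1}(M_T\ge 2/\alpha)\le\alpha$. The denominator satisfies
\begin{equation*}
\P_{F_0,T,F_1}(\tau\ge T)=1-\P_{F_0,\infty}(\tau<T)\ge 1-\delta,
\end{equation*}
and it is strictly positive precisely because $\P_{F_0,\infty}(\tau\ge T)\ne 0$. Dividing gives $\P_{F_0,T,F_1}(T\notin\mathcal C\mid\tau\ge T)\le\alpha/(1-\delta)$, which is the second claim.

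Since each step is an elementary union bound or a direct invocation of a result already established, there is no genuine obstacle here. The only point requiring care is the precise reading of ``PFA controlled at level $\delta$'' as a bound on $\P_{F_0,\infty}(\tau<\infty)$ that is uniform over $\mathcal P_0$; this is exactly what lets the false-alarm term $\P_{F_0,T,F_1}(\tau<T)$ be controlled \emph{without} estimating $\P_{F_0,\infty}(\tau\ge t)$ and hence without invoking \Cref{assmp-1}, which is the whole advantage of this simpler construction.
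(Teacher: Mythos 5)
Your proposal is correct and follows essentially the same route as the paper's proof: both arguments rest on exactly the two ingredients you identify (the universal threshold bound $\P_{F_0,T,F_1}[M_T\ge 2/\alpha]\le\alpha$ from \Cref{lem:univ-threshold} and the stopping-time identity $\P_{F_0,T,F_1}(\tau<T)=\P_{F_0,\infty}(\tau<T)\le\delta$), with the paper merely phrasing the decomposition in terms of the coverage event $\{M_T<2/\alpha\}$ rather than its complement. The arithmetic for the conditional bound is likewise identical in substance.
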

\begin{proof}
From \Cref{lem:univ-threshold}, we have $\P_{F_0,t,F_1}[M_t\geq\frac{2}{\alpha }]\leq \alpha$. Hence, 
\begin{align*}
  &\P_{F_0,T,F_1}( M_T< 2/\alpha)\\
   &=\P_{F_0,T,F_1}(T\in \{t\in\N: M_t< 2/\alpha\})\\
   &\leq\P_{F_0,T,F_1}(T\in \{t\in\N: M_t< 2/\alpha \},\tau\geq T) +\mathbb \P_{F_0,T,F_1}(\tau< T)\\
    &= \P_{F_0,T,F_1}(T\in \{t\in\N: t\leq \tau, M_t< 2/\alpha\})+\mathbb \P_{F_0,\infty}(\tau< T)\\
    &\leq \P_{F_0,T,F_1}(T\in \mathcal{C})+\delta,
\end{align*}
where the last inequality follows from the fact that the PFA is less than or equal to $\delta$. Thus, we obtain
\begin{equation*}
      \P_{F_0,T,F_1}(T\in \mathcal{C})\geq 1-\alpha-\delta.
    \end{equation*} 
And
\begin{equation*}
     \P_{F_0,T,F_1}(T\in \mathcal{C}\mid \tau\geq T)=\frac{\P_{F_0,T,F_1}(T\in \mathcal{C})}{\P_{F_0,T,F_1}(\tau\geq T)}=\frac{\P_{F_0,T,F_1}(T\in \mathcal{C})}{\P_{F_0,\infty}(\tau\geq T)}\geq 1-\frac{\alpha}{1-\delta}, 
    \end{equation*} 
    completing the proof.
\end{proof}
A similar simplification can be applied to the adaptive confidence sets introduced in Section 4 for the parametric setting. By replacing the $\alpha r_t^*$ term with $\alpha$ in \Cref{eq:ci-simple,eq:conf-comp-post,eq:conf-comp}, we obtain
\begin{equation}
\label{eq:conf-simp-pfa}
\mathcal{C}_1=\left\{t\in \N:t\leq \tau, M_t\leq\operatorname{Quantile}(1-{\alpha};M_t,\{M_{t,L}^j\}_{j=1}^B)\right\},
\end{equation}
for known pre- and post-change setting. Similarly, for the known pre-change and composite post-change setting,
\begin{equation}
\label{eq:conf-comp-post-pfa}
\mathcal{C}_1=\left\{t\in \N:t\leq \tau, M_t\leq\sup_{\theta\in \mathcal{S}_{\tau-t+1}}\operatorname{Quantile}(1-{\alpha};M_t,\{M_{t,L}^j(\theta)\}_{j=1}^B)\right\},
\end{equation}
 And finally,
\begin{equation}
\label{eq:conf-comp-pfa}
\mathcal{C}_1=\left\{t\in \N:t\leq \tau, M_t\leq\sup_{(\theta, \theta^\prime)\in \mathcal{\mathbf S}^{(t)}}\operatorname{Quantile}(1-{\alpha};M_t,\{M_{t,L}^j(\theta,\theta^\prime)\}_{j=1}^B)\right\}
\end{equation}
for composite pre- and post-change setting.

For detection algorithms with PFA $\leq \delta,~\delta\in(0,1)$, these simpler forms retain the coverage guarantees --- both conditional and unconditional --- similar to what we have shown in the theorem above.

In the next subsection, we present the experimental results with the bounded PFA algorithms, along with some other experiments.

\subsection{Experiments with detection algorithms that control PFA}
\label{sec:expt-pfa}
\subsubsection{Known pre- and post-change (Setting A, PFA \texorpdfstring{$\leq 0.001$}{Lg})}
We perform experiments for both Gaussian and Poisson mean-change scenarios having the true changepoint at $T=100,500$, using a likelihood ratio detector defined 
as
\begin{equation}
\label{eq:LRT}
    \tau=\inf\left\{t\in\N: \prod_{i=1}^t\frac{f_1(X_i)}{f_0(X_i)}\geq A\right\},
\end{equation}
with thresholds at $A=1000$. Note that $\prod_{i=1}^t\frac{f_1(X_i)}{f_0(X_i)}$ is a non-negative martingale under $T=\infty$ and hence, by Ville's inequality \citep{ville1939etude}, the PFA of this detector is at most $1/A$, i.e., $0.001$ here.
After detection, we construct both the proposed confidence sets (universal  \eqref{eq:ci-nonpara-pfa} and adaptive \eqref{eq:conf-simp-pfa}), with $B=100$, $\alpha=0.1$. We report the average size and the conditional and unconditional coverage rates across $500$ independent runs in \Cref{tab:simple-pfa}. 
We also report the average delay (given no false detection) in the table.

 \begin{table}[!ht]
    \centering
    \caption{Known pre- and post-change distributions (Setting A, PFA $\leq 0.001$) 
    }
\label{tab:simple-pfa}
    \resizebox{\linewidth}{!}{
    \begin{tabular}{ccc|cccc|c}
    \toprule
    \addlinespace
$T$ & $F_0$ & $F_1$   &\multicolumn{2}{c}{Conditional Coverage} &
\multicolumn{2}{c}{Conditional Size}  & \specialcell{Delay} \\
\cmidrule(lr){4-5} \cmidrule(lr){6-7}  & &
&  Universal &  Adaptive &   Universal &  Adaptive &  (Conditional) \\
\midrule
\addlinespace 
100 & $N(0,1)$ & $N(1,1)$  &  0.95 &  0.94 & 12.65 & 9.86   & 110.91 \\
100 &  $\text{Pois}(1)$ & $\text{Pois}(2)$  &  0.95   & 0.95 &  14.22 & 11.85  & 94.33\\
500 &  $N(0,1)$ & $N(1,1)$  & 0.97  & 0.97 & 12.83 & 9.76 &  514.12\\
500 &  $\text{Pois}(1)$ & $\text{Pois}(2)$  &  0.96  & 0.96 & 14.79 & 12.13   & 417.44\\

\bottomrule
 \end{tabular}}
 \end{table}
\subsubsection{Known pre- and unknown post-change (Setting B, PFA \texorpdfstring{$\leq 0.001$}{Lg})}
We perform experiments for Gaussian mean-change scenarios having the true changepoint at $T=100,500$, using a mixture likelihood ratio detector defined 
as
\begin{equation}
\label{eq:mix-LRT}
    \tau=\inf\left\{t\in\mathbb N:\int_{\theta\in\Theta_1}\prod_{i=1}^t\frac{f_{\theta}(X_i)}{f_{\theta_0}(X_i)}dW(\theta)\geq A\right\}
\end{equation}
with thresholds at $A=1000$. The weights are chosen just as described in \Cref{setting-ii}. Similarly, the PFA of this detector is at most $1/A$, i.e., $0.001$ here.
After detection, we construct our universal confidence sets \eqref{eq:ci-nonpara-pfa} for $T$, with $\alpha=0.1$. The test statistic is the same as mentioned in Setting II of \Cref{sec:expt}. We report the average size and the conditional and unconditional coverage rates across $500$ independent runs in \Cref{tab:comp-post-pfa}.
We also report the average delay (given no false detection) in the table.

\begin{table}[!ht]
    \centering
    \caption{Known pre-change but unknown post-change (Setting B, PFA $\leq 0.001$)
    }
\label{tab:comp-post-pfa}
    \resizebox{\linewidth}{!}{
    \begin{tabular}{cc|cccc}
    \toprule
    \addlinespace
 T    & \specialcell{Composite \\post-change}  & \specialcell{Conditional\\coverage} & \specialcell{Unconditional\\coverage} & \specialcell{Size\\(Conditional)} & \specialcell{Delay\\(Conditional)} \\
    \midrule
\addlinespace 

100 &  $\{N(\mu,1):\mu> 0.75\}$   & 0.96 & 0.95 & 16.38  & 118.76 \\
100 &  $\{N(\mu,1):\mu> 0.9\}$   & 0.96 & 0.96 & 13.11  & 115.64\\
500 &  $\{N(\mu,1):\mu> 0.75\}$ & 0.97 & 0.97 & 16.92 & 530.83\\
500 &  $\{N(\mu,1):\mu> 0.9\}$ & 0.96 &0.96 & 14.85 & 528.92\\
\bottomrule
 \end{tabular}}
 \end{table}

 \subsubsection{Composite pre- and post-change (Setting C)}
We generate pre-change data from $N(0,1)$ and post-change data from $N(0,1)$, having a true changepoint at $T=100$ or $500$ --- these are all unknown to the algorithm. We assume access to a historic/training data $\{Y_1,\cdots,Y_m\}$, $m=200$. We detect change using the following change detector, as in Sections 2 and 3 of \cite{aue2024state}:
\begin{equation}
    \tau = \min \left\{ k \in \mathbb{N} : \frac{1}{\hat{\sigma}_m \sqrt{m}} w(m, k) \left| \sum_{j=1}^k (X_{j} - \bar{Y}_m) \right| > A \right\},
\end{equation}
where $\bar{Y}_m=\frac{1}{m} \sum_{i=1}^m Y_i$ is the sample mean of the training data, $\hat{\sigma}_m = \sqrt{\frac{1}{m-1} \sum_{i=1}^m (Y_i - \bar{Y}_m)^2}$ is the estimated standard deviation from the training data,  $w(m, k) = \left( 1 + \frac{k}{m} \right)^{-1}$ and $A$ is the threshold, which we vary in our experiment.
After detection, we construct our universal confidence sets \eqref{eq:ci-nonpara-pfa} for $T$, with $\alpha=0.1$, $\mathcal{P}_0=\{N(\mu,1):\mu\leq 0.25\}$ and $\mathcal{P}_1=\{N(\mu,1):\mu> 0.75\}$. The test statistic is the same as mentioned in Setting III of \Cref{sec:expt}.  We report the average size and coverage rates across $500$ independent runs in \Cref{tab:comp-pfa}. 
We also report the average delay (given no false detection). 

 \begin{table}[!ht]
    \centering
    \caption{Composite pre- and post-change (Setting C)
    }
\label{tab:comp-pfa}
    \resizebox{0.72\linewidth}{!}{
    \begin{tabular}{cc|cccc}
    \toprule
    \addlinespace
 T & A & \specialcell{Conditional\\coverage} & \specialcell{Marginal\\coverage} & \specialcell{Size\\(Conditional)} &  \specialcell{Delay\\(Conditional)}  \\
    \midrule
\addlinespace 
100 & 2  & 0.97 & 0.97 & 23.54 &50.64\\
100 &  3 & 0.97 & 0.97 & 21.82 & 82.95\\
500 &  2 & 0.97 & 0.97 & 21.37 & 122.84\\
500 & 3  & 0.98 &0.98 & 23.22  &  193.59\\
\bottomrule
\end{tabular}}
\end{table}

\subsection{Comparison of experimental results with ARL and PFA controlling detection algorithms}
As expected, algorithms that control PFA exhibit a significantly higher average detection delay compared to those controlling ARL. This delay arises because stricter false alarm constraints force the detection algorithm to accumulate stronger evidence before signaling a change.
In terms of inference, we observe that PFA-controlling algorithms often tend to yield slightly lower conditional coverage probabilities with confidence sets of comparable sizes. Also, conditional and unconditional coverage rates are identical when PFA-controlling detection algorithms are employed, due to the very low probability of false detections, while the ARL controlling algorithms lead to lower unconditional conditional coverage rates for larger $T$, ie.e., $T=500$, supporting our theory. 
Overall, the major differences between these two types lie in detection delay and unconditional coverage properties. We do not observe significant differences in the size and conditional coverage properties of the confidence sets, nor in the absolute deviation of the point estimates, across these two types of detection algorithms.

\section{Some additional experimental results}
\label{sec:add-expt}
\subsection{Experiments varying $T$ and ARL}
\label{sec:add-expt-ratio}
We have seen in the derivation of the asymptotic length bound that it increases with the ratio $T/A_T$. Now, we experimentally validate the fact by conducting the same experiment as in Setting I (known pre- and post-change) in \Cref{sec:expt} with varying $T$ thresholds $A$ of the CUSUM detector (larger threshold means large ARL). Average (of $500$ independent runs) size and coverage of the confidence sets using \Cref{eq:ci-nonpara-known-pre} are reported in \Cref{tab:varying-ratio}. The results show a clear pattern: as the ratio 
$T/A$ increases, the resulting confidence sets become wider, reflecting growing conservativeness.

 \begin{table}[!ht]
    \centering
    \caption{Setting I: Known pre- and post-change.
     Average (of $500$ independent runs) size and coverage of the confidence sets using \Cref{eq:ci-nonpara-known-pre} with $F_0=N(0,1), F_1=N(1,1), N=100$, $\alpha=0.1$, the average detection delay (given a true detection) with threshold $A$, while varying $T$ and $A$.
    }
\label{tab:varying-ratio}
    \resizebox{0.67\linewidth}{!}{
    \begin{tabular}{cc|cccc}
    \toprule
    \addlinespace
$T$ & A     & \specialcell{Conditional\\coverage} & \specialcell{Marginal\\coverage} & Size & \specialcell{Conditional\\Delay}  \\
\midrule
\addlinespace 
50 & 100  & 0.978 & 0.91 &  14.382 &  9.33\\
1000 & 100  & 0.989 & 0.21 & 18.283& 9.14\\
2000 & 100  & 0.992 & 0.07 & 19.627&  8.35\\
500 & 1000  & 0.925 & 0.97 & 12.618&  13.22\\
10000 & 1000  & 0.947 & 0.27 & 18.254&  13.25\\
20000 & 1000 & 1 & 0.07 & 22.113 & 13.22\\
\bottomrule
 \end{tabular}}
 \end{table}

\subsection{Experiments with dependent pre- and post-change data}
For the sake of illustration, we now demonstrate our methodology works with dependent data. Specifically, we consider the pre-change and post-change distributions to be Markov$(p_0, q_0)$ and Markov$(p_1, q_1)$, respectively, with parameters $p_0 = 0.75$, $p_1 = 0.25$, and $q_0 = q_1 = 0.5$. The first data point is drawn from $\text{Bern}(0.5)$. The changepoint $T$ is set to either 100 or 500.
(Here, Markov$(p, q)$ denotes the two-state Markov chain where $p$ is the probability of transitioning from state $0$ to state $1$, and $q$ is the probability of transitioning from state $1$ to state $1$.)

We use the likelihood ratio-based CUSUM detector:
\begin{equation}
    \tau=\inf\left\{n\in\N: \max_{1\leq j \leq n} \left(\frac{p_1}{p_0}\right)^{n_{10}(j:n)}\left(\frac{1-p_1}{1-p_0}\right)^{n_{11}(j:n)}\geq A\right\},
\end{equation}
where $n_{11}(j:n)$ denotes the number of transitions from 1 to 1 and $n_{10}(j:n)$ denotes the number of transitions from 0 to 1 within $X_j,\cdots,X_n$, and  $A=1000$.

After detection, we compute the adaptive confidence sets \eqref{eq:ci-simple} with the test statistic in
\eqref{eq:test-stat-nonpara} with $R^{(t)}_{n}=\frac{1}{2}\left(\frac{p_0}{p_1}\right)^{n_{10}(t:n)}\left(\frac{1-p_0}{1-p_1}\right)^{n_{11}(t:n)}$, $S^{(t)}_{n}=\frac{1}{2}\left(\frac{p_1}{p_0}\right)^{n_{10}(n:t-1)}\left(\frac{1-p_1}{1-p_0}\right)^{n_{11}(n:t-1)}$, $B=N=100$ and $\alpha=0.1$. Note that $\{S^{(t)}_{n}\}_{n< t}$ is not a valid e-process in the backward filtration. In contrast to our universal method, the adaptive procedure in \Cref{algo:1-exact} (or \Cref{eq:ci-simple}) accommodates arbitrary test statistics and does not rely on e-process structure; consequently, it remains applicable in settings with dependent data.
\Cref{tab:markov} contains the results averaged across $100$ random runs.
\begin{table}[!ht]
    \centering
    \caption{Known pre- and post-change with dependency}
\label{tab:markov}
    \resizebox{0.62\linewidth}{!}{
    \begin{tabular}{c|ccc}
    \toprule
    \addlinespace
 T    & \specialcell{Conditional coverage}  & Size  & \specialcell{Conditional delay}  \\
    \midrule
\addlinespace 
100 &    0.918 &  23.245 &  31.929\\
500 &   0.907 &  23.698 &  34.625\\
\bottomrule
\end{tabular}}
\end{table}

\begin{remark}
We emphasize again that the universal approach described in Section 3 cannot handle dependent data, but the adaptive method in Section 4 can. To elaborate, our test statistic in Section 3 requires constructing non-trivial e-processes in both forward (usual) and backward (nonstandard) filtrations; in settings with general dependence,  constructing the latter is generally infeasible.
\end{remark}

\end{document}